\newtheorem{observation}{Observation}[section]
\definecolor{crimsonglory}{rgb}{0,0,0}
 \newtheorem{theorem}{Theorem}[section]
 \newtheorem{lemma}[theorem]{Lemma}
 \newtheorem{corollary}[theorem]{Corollary}
\def\GrabProofArgument[#1]{ #1: \egroup\ignorespaces}
\def\proof{\noindent\textbf\bgroup Proof%
	\@ifnextchar[{\GrabProofArgument}{. \egroup\ignorespaces}}
\renewcommand{\tilde}{\widetilde}
\renewcommand{\function}{f}
\newcommand{\gad}{\mathcal{G}}
\newcommand{\key}{\mathcal{K}}
\newcommand{\distribution}{\mathcal{D}}
\newcommand{\dtwo}{\mathcal{R}}
\newcommand{\stationary}{\mathcal{S}}
\newcommand{\empirical}{\mathcal{S}}
\newcommand{\feasible}{\mathcal{F}}
\newcommand{\opt}{\mathsf{OPT}}
\newcommand{\cross}{\mathsf{crossingl}}
\newcommand{\length}{\ell}
\newcommand{\allsatisfied}{\textsf{all-satisfied}\ }
\newcommand{\poly}{\mathsf{poly}}
\newcommand{\para}{\mathcal{E}}
\newcommand{\ii}{\mathsf{I}}
\newcommand{\score}{\mathsf{score}}
\newcommand{\graph}{\mathcal{G}}
\newcommand{\tsize}{|T|}
\newcommand{\ione}{\mathsf{I_1}}
\newcommand{\itwo}{\mathsf{I_2}}
\newcommand{\emax}{\mathsf{e}_{\text{max}}}
\newcommand{\optd}{\mathsf{OPT(D)}}
\newcommand{\opts}{\mathsf{OPT(S)}}
\newcommand{\E}{\mathbb{E}}
\newcounter{proccnt}
\newcommand{\konote}[1]{}
\title{Learning Complexity of Simulated Annealing}
\author{
	Avrim Blum\thanks{Toyota Technological Institute at Chicago, supported in part by the National Science Foundation under grants CCF-1733556, CCF-1800317, and CCF-1815011.}
	\and Chen Dan\thanks{CMU, supported in part by the National Science Foundation under grant CCF-1800317.}
	\and Saeed Seddighin\thanks{Toyota Technological Institute at Chicago, supported in part by the National Science Foundation under grants CCF-1733556 and CCF-1535795.}
}
\begin{document}
	\newcommand{\ignore}[1]{}
\renewcommand{\theenumi}{(\roman{enumi}).}
\renewcommand{\labelenumi}{\theenumi}
\sloppy
\date{}

\maketitle

\thispagestyle{empty}

\begin{abstract}
	\textit{Simulated annealing} is an effective and general means of optimization. It is in fact inspired by metallurgy, where the temperature of a material determines its behavior in thermodynamics. Likewise, in simulated annealing, the actions that the algorithm takes depend entirely on the value of a variable which captures the notion of temperature. Typically, simulated annealing starts with a high temperature, which makes the algorithm pretty unpredictable, and gradually cools the temperature down to become more stable.

A key component that plays a crucial role in the performance of simulated annealing is the criteria under which the temperature changes namely,  \textit{the cooling schedule}.  Motivated by this, we study the following question in this work: ``\textit{Given enough samples to the instances of a specific class of optimization problems, can we design optimal (or approximately optimal) cooling schedules that minimize the runtime or maximize the success rate of the algorithm on average when the underlying problem is drawn uniformly at random from the same class?}" 

We provide positive results both in terms of \textit{sample complexity} and \textit{simulation complexity}\footnote{We call the overall runtime of the algorithm that determines the cooling schedule the simulation complexity}. For sample complexity, we show that $\tilde O(\sqrt{m})$ samples suffice to find an approximately optimal cooling schedule of length $m$. We complement this result by giving a lower bound of $\tilde \Omega(m^{1/3})$ on the sample complexity of any learning algorithm that provides an almost optimal cooling schedule. These results are general and rely on no assumption. For simulation complexity, however, we make additional assumptions to measure the success rate of an algorithm. To this end, we introduce \textit{the monotone stationary graph} that models the performance of simulated annealing. Based on this model, we present polynomial time algorithms with provable guarantees for the learning problem. 
	\newpage
\end{abstract}
\section{Introduction}\label{section:introduction}
The goal of this work is to better understand how we can design efficient simulated annealing (\textsf{SA}) algorithms. Simulated annealing is a well-known heuristic method to tackle hard problems.  Term annealing originates from thermodynamics, referring to the way that metals cool and anneal.  Instead of the energy of the material, simulated annealing utilizes the objective function of an optimization problem. Surprisingly, the implementation of \textsf{SA} is very simple as it is very similar to hill-climbing.  The only difference is that instead of picking the best move in every step, simulated annealing picks a random move.  If the selected move improves the quality of the solution, then the move is always accepted.  Otherwise, the algorithm makes the move anyway with some probability less than 1.  The probability decreases exponentially with the \textit{badness} of the move, which is the amount by which the solution is worsened. This is shown by $\Delta(E)$. One example of the annealing criteria is given below:

$$\mathsf{Pr}[\text{ accepting a downhill move at time step } i] \simeq 1 - e^{\Delta(E)/ t_i}.$$

where parameter $t_i$ is the temperature of the algorithm at step $i$ which is used to determine this probability.  The $t_i$ parameter is analogous to temperature in an annealing system at time step $i$.  At higher values of temperature, downhill moves are more likely to occur.  As the temperature tends to zero, they become more and more unlikely, until the algorithm behaves more or less like hill-climbing.  In a typical \textsf{SA} optimization, the temperature starts at a high value and is gradually decreased according to a cooling schedule.  Simulated annealing is used for a broad class of computational problems ranging from  \textsf{SAT} to travelling salesman problem, to VLSI routing, etc. as experiments strongly support the efficiency of simulated annealing in practice~\cite{aragon1984simulated,kirkpatrick1983optimization}.

Indeed the efficiency of an \textsf{SA} algorithm significantly depends on its cooling schedule~\cite{lam1988performance,lam1988efficient,nourani1998comparison,kirkpatrick1983optimization,saccostochastic,nulton1988statistical,triki2005theoretical,azizi2004adaptive,aarts1988simulated}. One simple cooling schedule is to start with a single temperature $t_0$ and decrease the temperature linearly with a rate of $\alpha$ to obtain lower temperatures gradually. We use this simple cooling strategy to present illustrating examples, nonetheless we consider a more generalized setting in this work.  The literature has also gone beyond simple cooling schedules and several non-linear methods have been proposed so far ~\cite{lam1988performance,lam1988efficient,nourani1998comparison,kirkpatrick1983optimization,saccostochastic,nulton1988statistical,triki2005theoretical,azizi2004adaptive,aarts1988simulated}. It is not hard to imagine that even for different instances of the same problem, the optimal cooling schedules may vary significantly. 

Therefore in this work, we take a learning approach towards designing simulated annealing algorithms, using the PAC-style model for data-driven algorithm design introduced in \cite{gupta2017pac} and used to analyze a wide range of important families of algorithms and heuristics in \cite{balcan2017learning,balcan2018learning,balcan2019much,BDL20}.  In brief, we consider a distribution $\mathcal{D}$ over a specific class of instances of a presumably hard problem (such as \textsf{SAT}) and aim to design near-optimal cooling schedules for such instances, analyzing both sample complexity (the number of instances from $\mathcal{D}$  we need to observe) and simulation complexity (runtime) needed for learning.   Our approach is particularly motivated by the work of \cite{balcan2017learning}.

\subsection{The Learning Problem}
As aforementioned, an \textsf{SA} algorithm makes a random walk on the nodes of a search graph. Each node of this graph represents a potential (not necessarily optimal) solution for the underlying problem and the energy of a node is a value reflecting how close this solution is to an optimal solution. We assume that for each node, its energy and neighbors are available via oracle queries. One thing to keep in mind is that the number of nodes in this huge search graph may be exponentially large and that we only have local views on the nodes of the graph. For instance, when the underlying problem is \textsf{SAT}, we may have $2^k$ nodes where $k$ is the number of variables in the \textsf{SAT} problem and each node represent an assignment of true/false to the variables.

Crucial to any cooling schedule are the parameters that maximize its performance. This could be as simple as just a real value specifying the cooling rate or as complicated as a sequence of variables determining the exact value of the temperature at every step. Take for instance, the simplest case in which a parameter $t_0$ and linear cooling rate $\alpha$ formulate the temperature at every step. In this case, at step $i$, $t_i = t_0 (1- \alpha i)$ formulates the temperature. Therefore, the learning algorithm has to find the optimal pair $(t_0,\alpha)$ that maximizes efficiency. It is an easy exercise to see that the learning problem is actually not very challenging in this case. Although this simple formulation involves infinitely many $(t_0,\alpha)$ pairs that need to be searched over, via careful discretization techniques, one can narrow down the set of possible $(t_0,\alpha)$ pairs to polynomially many candidates and iterate over them to find the optimal cooling schedule\footnote{This improvement comes with a small error to the quality of the solution.}. Samples are used to determine how well each cooling schedule performs in practice. More precisely, samples are used to approximate the score of a cooling schedule.

 However, we go beyond linear cooling schedules and include more sophisticated systems (\textit{i.e.,} non-linear cooling schedules). Our setting is pretty general: we denote the cooling schedule by a vector $\para = \langle t_1,t_2, \ldots, t_m\rangle$ where $m$ is the number of  steps our algorithm takes and $t_i$ specifies the temperature at time $i$. Any non-increasing sequence of values makes a valid cooling schedule. The problem becomes more challenging with this representation; Even after discretizing the temperatures, still there are exponentially many cooling schedules and determining an approximately optimal schedule is non-trivial.
 
 Recall that each node of the search graph corresponds to a potential solution for the underlying problem. In case of \textsf{SAT} for instance, each node can be an assignment of the true/false values to the variables. We label a subset of nodes in the search graph as \textit{acceptable solution nodes}. These nodes correspond to solutions that are acceptable for the underlying problem. In the case of \textsf{SAT}, a node whose corresponding solution satisfies all of the clauses is a solution node. The score of an \textsf{SA} algorithm with a specific cooling schedule is the likelihood of reaching an acceptable solution node after a fixed number of steps. We would like to point out that although a reasonable energy function for the nodes of the search graph gives higher energies to the acceptable solution nodes, we make no particular assumption on the energies in our setting. We remark that if the cooling schedule is available, it is computationally easy to evaluate the score of the algorithm. We run the \textsf{SA} algorithm according to the cooling schedule and once it terminates we find out if the solution found by the algorithm is acceptable for the problem. By repeating this process enough times, we can estimate the score of the cooling schedule very accurately.
 
  Indeed the optimal parameters may vary for different problems or even for different instances of the same problem and therefore we need to also incorporate the problem instances in our setting. To illustrate the importance of the cooling schedule, consider the example shown in Figure \ref{fig:superficial}.
  In the example shown in Figure \ref{fig:superficial}, there is one solution node (colored in red) which has an energy of $3n$ and the search graph consists of a clique of vertices with distinct energies one of whose vertices has a path to the solution node. The energies of the nodes of this path are increasing. Let us assume that the initial state of the algorithm is the node colored in green. It is easy to verify that an extreme strategy that never accepts downhill moves has zero chance of reaching the solution node and another extreme strategy that always accepts all downhill moves requires a cubic number of steps to reach the solution node. However, a strategy that accepts each downhill move with probability $1/2$ only requires $\mathcal{O}(n^2)$ steps in expectation to reach the solution node. 

  Motivated by this example, we define our general problem in the following way:

\begin{figure}
\centering
	\usetikzlibrary{arrows,calc,intersections}
	\usetikzlibrary{calc}

	\tikzset{fontscale/.style = {font=\relsize{#1}}
	}

		
		\def\r{4}
		\def\n{8} \def\myangles{{90,110,130,150,230,250,270,50,70}}
		\newcounter{np} \pgfmathsetcounter{np}{\n+1}
		\newcounter{na} \newcounter{nb} \newcounter{nc}
		\newcounter{ia} 
		\pgfmathsetcounter{na}{\n-1}    
		\pgfmathsetcounter{nb}{\n-2}    
		\pgfmathsetcounter{nc}{\n-3}    
		\newcounter{q} \setcounter{q}{0}    
		\newcounter{e} \setcounter{e}{0}    
		\newcounter{a} \setcounter{a}{0}    
		\newcounter{b} \setcounter{b}{1}    
		\newcounter{c} \setcounter{c}{2}    
		\newcounter{d} \setcounter{d}{2}    

\begin{tikzpicture}[thick,scale=0.8, every node/.style={scale=0.8}]
    \fill[fill=blue!10!green!10!,draw=blue,dotted,thick] (0,0) circle (\r);
    \pgfmathparse{\n-1} \let\nn\pgfmathresult 
    \foreach \i in {0,...,\nn}{
        \pgfmathparse{\i+1} \let\ii\pgfmathresult
        \pgfmathparse{\myangles[\i]} \let\t\pgfmathresult
        \foreach \j in {\ii,...,\n}
            \pgfmathparse{\myangles[\j]} \let\u\pgfmathresult
            \draw[blue,very thick] ({\r*cos(\t)},{\r*sin(\t)})--({\r*cos(\u)},{\r*sin(\u)});
        }
    \foreach \i in {0,...,3}{
    	\pgfmathparse{\myangles[6]}    \let\t\pgfmathresult
    	\draw[blue,very thick] ({\r*cos(\t)+2+2*\i},{\r*sin(\t)})--({\r*cos(\t)+2*\i},{\r*sin(\t)});
    }
     \foreach \i in {0,...,3}{
    	\pgfmathparse{\myangles[\i]}    \let\t\pgfmathresult
    	\pgfmathsetcounter{ia}{\i+1}
    	\fill[draw=blue,fill=blue!20!,thick]
    	({\r*cos(\t)},{\r*sin(\t)})circle (4mm) node{$\mathbf{\theia}$};
    }

\tikzset{
	font={\fontsize{100pt}{100}\selectfont}}

     \node[rotate=12] at (-3.5,-0.4)  { \Huge $\vdots$};
     \node[rotate=-18] at (2.8,-0.9)  { \Huge $\vdots$};
     \node at (10,-4)  { \Huge $\ldots$};
 
 \tikzset{
     	font={\fontsize{5pt}{12}\selectfont}}
     
     \foreach \i in {4,...,5}{
	\pgfmathparse{\myangles[\i]}    \let\t\pgfmathresult
	\pgfmathsetcounter{ia}{6-\i}
	\fill[draw=blue,fill=blue!20!,thick]
	({\r*cos(\t)},{\r*sin(\t)})circle (4mm) node{$n-\mathbf{\theia}$};
}
\tikzset{
	font={\fontsize{10pt}{12}\selectfont}}

     \foreach \i in {6,...,6}{
	\pgfmathparse{\myangles[\i]}    \let\t\pgfmathresult
	\pgfmathsetcounter{ia}{6-\i}
	\fill[draw=blue,fill=blue!20!,thick]
	({\r*cos(\t)},{\r*sin(\t)})circle (4mm) node{$n$};
}


\tikzset{
	font={\fontsize{5pt}{12}\selectfont}}

     \foreach \i in {0,...,3}{
	\pgfmathparse{\myangles[6]}    \let\t\pgfmathresult
	\pgfmathsetcounter{ia}{\i+1}
	\fill[draw=blue,fill=blue!20!,thick]
	({\r*cos(\t)+2+2*\i},{\r*sin(\t)})circle (4mm) node{$2n+\theia$};
}

\tikzset{
	font={\fontsize{10pt}{12}\selectfont}}

     \foreach \i in {5,...,5}{
	\pgfmathparse{\myangles[6]}    \let\t\pgfmathresult
	\pgfmathsetcounter{ia}{\i+1}
	\fill[draw=blue,fill=red!20!,thick]
	({\r*cos(\t)+2+2*\i},{\r*sin(\t)})circle (4mm) node{$3n$};
}
\tikzset{
	font={\fontsize{5pt}{12}\selectfont}}

     \foreach \i in {7,...,7}{
	\pgfmathparse{\myangles[\i]}    \let\t\pgfmathresult
	\pgfmathsetcounter{ia}{8-\i}
	\fill[draw=blue,fill=blue!20!,thick]
	({\r*cos(\t)},{\r*sin(\t)})circle (4mm) node{$2n-\mathbf{\theia}$};
}

\tikzset{
	font={\fontsize{10pt}{12}\selectfont}}

     \foreach \i in {8,...,8}{
	\pgfmathparse{\myangles[\i]}    \let\t\pgfmathresult
	\pgfmathsetcounter{ia}{6-\i}
	\fill[draw=blue,fill=green,thick]
	({\r*cos(\t)},{\r*sin(\t)})circle (4mm) node{$2n$};
}
        %
    \whiledo{\theq=0}{ 
        \stepcounter{e}
        \ifthenelse{\thee=1000}{\setcounter{q}{1}}{}
        \ifthenelse{\thed=\n}
            {\ifthenelse{\thec=\thena}
                {\ifthenelse{\theb=\thenb}
                    {\ifthenelse{\thea=\thenc}
                        {\setcounter{q}{1}}
                        {   \stepcounter{a}
                            \pgfmathsetcounter{b}{\thea+1}
                            \pgfmathsetcounter{c}{\thea+2}
                            \pgfmathsetcounter{d}{\thea+3}                  
                        }
                    }
                    {   \stepcounter{b}
                        \pgfmathsetcounter{c}{\theb+1}
                        \pgfmathsetcounter{d}{\theb+2}                      
                    }
                }
                {   \stepcounter{c}
                    \pgfmathsetcounter{d}{\thec+1}
                }
            }
            {\stepcounter{d}}
        \ifthenelse{\theq=0}{
            \pgfmathparse{\r*cos(\myangles[\thea])} \let\xa\pgfmathresult
            \pgfmathparse{\r*sin(\myangles[\thea])} \let\ya\pgfmathresult
            \pgfmathparse{\r*cos(\myangles[\theb])} \let\xb\pgfmathresult
            \pgfmathparse{\r*sin(\myangles[\theb])} \let\yb\pgfmathresult
            \pgfmathparse{\r*cos(\myangles[\thec])} \let\xc\pgfmathresult
            \pgfmathparse{\r*sin(\myangles[\thec])} \let\yc\pgfmathresult
            \pgfmathparse{\r*cos(\myangles[\thed])} \let\xd\pgfmathresult
            \pgfmathparse{\r*sin(\myangles[\thed])} \let\yd\pgfmathresult
            \coordinate  (A) at (\xa,\ya);
            \coordinate  (B) at (\xb,\yb);
            \coordinate  (C) at (\xc,\yc);
            \coordinate  (D) at (\xd,\yd);
            \path[name path=sega] (A) -- (C);
            \path[name path=segb] (B) -- (D);
            \path [name intersections={of=sega and segb}];
            \coordinate (X) at (intersection-1);
            }{}
    }
\end{tikzpicture}
\tikzset{
	font={\fontsize{10pt}{12}\selectfont}}

\addtocounter{e}{-1}
\caption{A search graph with $3n$ nodes is illustrated in this figure.  The numbers on the nodes show their energy or in other words goodness of the nodes. In this example, we consider the red node to be the only solution of the problem.}
\end{figure}\label{fig:superficial}
Let $\distribution$ be a distribution over a specific class of instances\footnote{For instance industrial instances of \textsf{SAT}.} of a hard problem (such as \textsf{SAT}). Denote the set of valid (combination of) parameters for the \textsf{SA} algorithm by $\feasible = \{\para_1, \para_2, \ldots, \}$. Moreover, let for an instance $\mathsf{I} \sim \distribution$ and a set of parameters $\para \in \feasible$, $\score(\mathsf{I},\para)$ be a function that reflects how well an \textsf{SA} algorithm with parameters $\para$ works on instance $\mathsf{I}$. This is basically the likelihood of finding a solution, if our \textsf{SA} algorithm uses $\para$ as its cooling schedule. Our goal is to find a set of parameters $\para \in \feasible$ that maximizes efficiency. In other words,
$$\mathbb{E}_{\mathsf{I} \sim \distribution} [\score(\mathsf{I},\para)]$$ is (approximately) maximized.

We clarify the notation by a simple example. Let us go back to the basic setting in which we formulate the temperature at each step with a pair $(t_0,\alpha)$. In this case, $\feasible = \mathbb{R}^{+} \times (0,1/m) $ would be the set of all valid parameters. Moreover, a natural example for $\score$ is the probability of finding a correct solution after a given (say $m$) number of steps. This way, the problem is to find a temperature $t_0$ and a cooling rate $\alpha$ that maximize the success probability after performing $m$ moves of \textsf{SA}. Our attention in this work is focused on an even more general setting. We denote the cooling schedule by a sequence of non-increasing temperatures $\langle t_1, t_2, \ldots, t_m\rangle$ for a fixed $m$. Thus, in our setting we have $\feasible \subseteq (\mathbb{R}^{+})^m$ subject to the temperatures being non-increasing. For simplicity, and without loss of generality, we assume that the energies of the nodes are integer numbers in range $\{1,2,\ldots,\emax\}$.

Any \textsf{SA} algorithm basically makes a random walk on a search graph in which every node represents a potential/partial solution
for the problem. For instance, when the underlying problem is \textsf{SAT}, every node of the search graph is a true/false assignment to the variables of the program. The energy of each node is a local guess on how well that solution satisfies the goals of the problem. For the case of \textsf{SAT} for instance, one simple energy function for a node is the number of clauses the corresponding solution satisfies. In our setting, we make no assumption on the energy of the nodes though in practice we expect that a higher energy signals a better solution. Since an \textsf{SA} algorithm makes a random walk, its state at every step can be shown via a distribution over the nodes of the search graph. Initially, this distribution shows the likelihood of each node being used as the starting solution and as the algorithm proceeds, the distribution changes based on the criteria of the random walk. The final state of the algorithm represents the likelihood of each node reported as the final solution. Thus, we wish the final distribution of our algorithm to be highly concentrated on the solution nodes.

We evaluate our learning algorithm based on two quantities: \textit{sample complexity} and \textit{simulation complexity}\footnote{This is equivalent to the notion of running time if we assume that our \textsf{SA} procedure halts after a polynomial number of steps.}. The former measures the number of samples one needs in order to find an (approximately) optimal cooling schedule and the latter measures the runtime of the learning algorithm in order to find an optimal cooling schedule.

\subsection{Our Results and Techniques}
Our main results are concerned with the sample complexity of the learning problem. As a typical challenge for learning problems, we have to face the issue that the space of the problem is infinitely large as there are infinitely many cooling schedules for an \textsf{SA} algorithm. In order to prove a bound on the sample complexity, the first step is to show that by losing a small additive error, we can bound the space of the solutions to a finite set. We begin by explaining this in Section \ref{sec:discretization}.\\[0.1cm]

\noindent \textbf{Step 1: From infinity to linear:} In this step, we show that although the space of the problem is infinitely large, only a polynomial number of samples suffice to approximate the optimal solution within desirable guarantees. This step is quite classic as discretization is the typical approach to bound the solution set.

Recall that $m$ is the length of the random walk in the search graph. We show in Section \ref{sec:discretization} that $\tilde O(m)$ samples from the distribution suffice to approximate the optimal cooling schedule within a small additive error. Notice that $\tilde O$ hides the polylogarithmic factors (both in terms of $m$ and $\emax$). This basically gives an almost linear upper bound on sample complexity based on the number of steps of the algorithm.

Roughly speaking, the total number of samples we need in order to approximate the optimal cooling schedule is logarithmic in terms of the number of \textit{candidate solutions} we have. Initially, the space of cooling schedules is infinitely large, however, a discretization technique can reduce the space of candidate solutions to $2^{\tilde O(m)}$ many. More precisely, we define a discretized temperature set $T$ whose size is $\tilde O(m)$ and show that there is an almost optimal solution that only uses the temperatures in $T$. This reduces the space of candidate solutions to $(O(m))^m \leq 2^{\tilde O(m)}$ which implies that the sample complexity is bounded by $\tilde O(m)$. 

The only non-trivial part of the above analysis is to show that a discretized set of temperatures with size $\tilde O(m)$ is enough to approximate the optimal cooling schedule within an arbitrarily small additive error. Let us fix an $\epsilon > 0$ and assume that the goal is to construct a discretized set of temperatures $T$ such that there is a cooling schedule that only uses the temperatures of $T$ and its score is at most $\epsilon$ smaller than the optimal solution. One convenient way to construct such a set is to make sure for each $t > 0$ there is a $t' \in T$ such that for any $1 \leq x \leq \emax$ we have $|e^{-x/t} - e^{-x/t'}| \leq \epsilon/m$. Then we can imply that if we replace every temperature $t_i$ of the optimal solution with its corresponding $t'_i$ of the discretized set, each step we make a different decision with probability at most $\epsilon /m$ and thus the total error is bounded by $\epsilon$. That is, with probability $1-\epsilon$ our algorithm traverses the exact same path as had we not modified the optimal cooling schedule. It is not hard to prove that such a condition can be met by having $O(m \log \emax)$ elements in $|T|$ which gives us an almost linear bound on the sample complexity. 


\vspace{0.2cm}
{\noindent \textbf{Theorem}~\ref{thm:sample}, [restated informally]. \textit{For any $\epsilon > 0$, the sample complexity of approximating the learning problem within an additive error of $\epsilon$ is bounded by $\tilde O_{\epsilon}(m)$.\\}}

Up to this point we show that an almost linear number of queries is sufficient for approximating an optimal cooling schedule. This raises two questions: i) Can we improve the bound such that the dependence on $m$ is subpolynomial? In particular, do polylogarithmically many samples suffice for our purpose? ii) If the answer to the first question is negative, can we prove a linear lower bound on the sample complexity? As we show in the following, the answer to both questions is negative!\\[0.3cm]

\noindent \textbf{Step 2: A polynomial lower bound:} We present a negative answer to the first question. Although this step gives us a lower bound, our improved upper bound is actually inspired by this lower bound. The first attempt to prove a lower bound is to understand the limit of the discretization technique explained above. Therefore, we ask the following question: ``assuming that our algorithm first constructs a discretized set of temperatures and then seeks to find an optimal solution that only uses the discretized temperatures, how many samples do we need?" Indeed, the answer to this question does not imply a lower bound in general, but it does give us an insight into the problem which leads to a general lower bound.

To answer the above question, we need to understand what is the smallest set $T$ of temperatures that can be used to make a cooling schedule whose score is very close to the optimal solution? The search graph shown in Figure \ref{fig:lowera} proves that $|T|$ should be at least as large as $\tilde \Omega(\sqrt{m})$, otherwise the guarantee may not hold.

In the search graph of Figure \ref{fig:lowera},  we set $m' = m/100$. For a fixed $\tau$, we set $x$ in a way that $e^{-x/\tau} = 1/2$, that is if the temperature is equal to $\tau$ the probability of making a downhill move is exactly equal to $1/2$\footnote{For now, we assume $x$ can be an arbitrary real number but this comes without loss of generality.}. The goal of this search graph is to start the \textsf{SA} algorithm from the initial node and the only acceptable solution node is the final node. 
\input{figs/lower2a}
The search graph of Figure \ref{fig:lowera} is particularly interesting because of the following observations: i) A cooling schedule of length $m$ only having temperature $\tau$ is guaranteed to reach the final node with high probability. ii) A cooling schedule of length $m$ that does not contain any temperature in range $[\tau(1-\tilde \Omega(m^{-1/2})),\tau(1+\tilde \Omega(m^{-1/2}))]$ has very little chance to reach the final node. As a consequence, if the multiplicative distance between two consecutive temperatures in our discretized set is more than $1+ \tilde \Omega(m^{-1/2})$, one can delicately design such a search graph for which our discretization performs poorly while the optimal solution gets a score close to $1$. This implies that the size of the discretized set has to be at least $\tilde \Omega(\sqrt{m})$ to prove a bound.

While the above argument shows that our specific algorithm definitely needs $\tilde O(\sqrt{n})$ samples\footnote{See the proof of Theorem \ref{theorem:main} for more details.}, it does not give a lower bound in general. However, we show in Section \ref{sec:improved-upper} with a slightly more advanced analysis that any algorithm requires at least $\tilde \Omega(m^{1/3})$ samples from the distribution in order to guarantee a non-trivial bound. While the heart of the proof is based on the same search graph, in order to extend the observation to all algorithms, we slightly lose on the exponent of $m$ in the lower bound.

\vspace{0.2cm}
{\noindent \textbf{Theorem}~\ref{theorem:lower}, [restated informally]. \textit{Any learning algorithm that approximates the solution within an additive error of $0.5$ needs at least $\tilde \Omega(m^{1/3})$ samples from the distribution.}}

Before proceeding to the third step, we would like to note an implication of this result in the context of simulated annealing. There have been several attempts in the literature to understand the complexity of simulated annealing. One question asked in the literature from both theoretical and practical standpoints is if there is a meaningful difference between Simulated Annealing and the Metropolis Algorithm~\cite{wegener2005simulated,hajek1988cooling}.  Metropolis is a special case of simulated annealing where the temperature does not change by time. That is the cooling schedule repeats a single temperature $m$ times. While this observation was made previously, our lower bound also implies that (from a theoretical standpoint) there is a meaningful difference between the two algorithms as Metropolis can be learned with much fewer samples which shows there are cases for which simulated annealing performs much better. Another example is when the temperature drops linearly for which the sample complexity is small. More generally, this lower bound actually shows a gap between \textsf{SA} and any special case of \textsf{SA} whose cooling schedule has complexity smaller than $m^{1/3}$. For instance, it shows that an extended version of Metropolis that uses $m^{1/3-\epsilon}$ many different temperatures in the cooling schedule is not competitive with the general \textsf{SA} algorithm.\\[0.1cm]

\noindent \textbf{Step 3: From linear to sublinear:} Perhaps the more surprising result of this paper is that the sample complexity can be improved to $\tilde O(\sqrt{m})$. Our algorithm is almost identical to the one explained in Step 1 except that we construct a smaller set $T$ whose size is bounded by $\tilde O(\sqrt{m})$. Then we argue that the total number of cooling schedules with this temprature set is bounded by $2^{\tilde O(\sqrt{m})}$ which leads to sample complexity $\tilde O(\sqrt{m})$.

The first pointer to this result is that there is no clear way to improve the lower bound of Step 2. Keep in mind that for the lower bound, we construct a search graph for which a particular cooling schedule works well, but if we multiply (or divide) each temperature by a small factor $1+\tilde \Omega(m^{-1/2})$, the score of the algorithm drops significantly. Obviously, if one comes up with a better search graph for which a multiplicative factor of $1+\tilde \Omega(m^{-1/2-\epsilon})$ breaks the solution, then it shows that it is impossible to obtain an upper bound of $\tilde O(\sqrt{m})$ with the discretization technique. Failure to make a better bad instance brings us to the possibility that maybe massaging each temperature by a multiplicative factor of $1+\tilde O(m^{-1/2})$ cannot hurt the score of the cooling schedule significantly. We show that this is indeed the case! 

Recall that in Step 1, in order to prove that the discretized cooling schedules perform almost optimally, we show that there is a discretized cooling schedule that behaves the sames as the optimal cooling schedule with probability $1-\epsilon$. That is, in the unlikely event of making a different decision (we call it a \textit{mistake}) we give 0 credit to our discretized cooling schedule, yet we prove that the score is pretty close to that of the optimal. Clearly, this is a loose upper bound as we do not expect to lose too much by making a single mistake.

We illustrate the idea with a toy problem. Consider a complete binary tree of depth $m$. The root has depth $0$ and the leaves have depth $m$. Each leaf is attributed to a score which is either 0 or 1. The score of each non-leaf node is the average of the scores of its children. In other words, if we make a random walk towards the leaves with equal probability of going to each child, the score of a node is equal to the probability of reaching a leaf with score 1 using the random-walk. Let us call this \textit{even-random-walk} and consider a different type of random-walk, namely \textit{uneven-random-walk}. The uneven-random-walk is pretty much the same as the even random walk, except that at some depth $i$ uniformly drawn from $[1,m]$, an adversary may change the decision of which child to go to. The toy-problem is to understand how much the score of a node hurts by replacing even-random-walk by uneven-random-walk.

To study this, we attribute to each node a \textit{deviation value} which is equal to the absolute value of the difference between the scores of it children. This roughly captures an upper bound of the score we lose, if we traverse the edges of that node with a different criteria (other than $1/2,1/2$). Thus, we need to know what is the average deviation values of the nodes in an even-random-walk? This roughly tells us how much we lose in the score, if an adversary changes the criteria of the walk at some random point!

The upper bound on the answer is $O(1/\sqrt{m})$ no matter how the leaves are scored. It goes beyond the scope of this paper, but we mention the idea in the hope that it helps a mindful reader decipher some of steps that we take in the proof of Lemma \ref{lemma:challenge}. Define a deviation function $f(x):[0,1] \rightarrow [0,0.25] = x - x^2$. One can show by induction that starting from each node $v$ of depth $i$, the average sum of deviations in a random walk is bounded by $O((f(s_v) + (m-i)/m)\sqrt{m})$ where $s_v$ is the score of node $v$ (obtained via even-random-walk). 

The toy problem illustrates that in the event that our optimal solution makes decisions with probability $1/2,1/2$ (which is indeed the case for our lower bound), we can afford to make $O(\sqrt{m})$ mistakes and not lose much in the average score. This does not hold if the decisions are made with different probabilities. To see this, consider the case that only the rightmost leaf has a score 1 and the rest of the leaves have scores 0. Moreover, the probability of going to the right child in the random walk is $1-\epsilon/m$ and the probability of going to the left child is $\epsilon/m$. In this case, the average  deviation is $\Omega(1)$ when we start from the root and make a random walk according to the probabilities.

The next observation is that when the decisions are not necessarily $1/2,1/2$ say $p,1-p$, multiplying the temperature by a factor of $1+x$ changes the probabilities by at most $\max\{\ln 1/p,1\} \min\{p,1-p\} x$ (see Observation \ref{obs:tavan2}). That is, as the probabilities deviate from $1/2$, the probability of making a ``mistake" drops linearly. More precisely, the multiplicative term $\min\{p,1-p\}$ gives us extra power to deal with these situations. For instance, if $p < 1/\sqrt{m}$ or $p > 1-1/\sqrt{m}$ then the probabilities change by an additive error of $\tilde O(1/m)$ when we multiply the temperature by a factor of $1+\tilde O(m^{-1/2})$. This error is tolerable since we can afford to have an error of $\epsilon / m$ for each decision we make.

The proof is based on the above ideas but the analysis is quite involved and rather cryptic by nature. We show in Section \ref{sec:improved-upper} that if the temperatures in the discretized set are at most $1+\tilde O(m^{-1/2})$ away from each other (multiplicative), then one can make a cooling schedule by the discretized temperatures whose score is arbitrarily close to that of the optimal solution. This then can be used to obtain an upper bound of $\tilde O(\sqrt{m})$ on the sample complexity of the problem.

\vspace{0.2cm}
{\noindent \textbf{Theorem}~\ref{theorem:main}, [restated informally]. \textit{For any $\epsilon > 0$, the sample complexity of approximating the learning problem within an additive error of $\epsilon$ is bounded by $\tilde O_{\epsilon}(\sqrt{m})$.} \\}

The second part of the paper is concerned with the computational aspects of the learning problem. Although we prove that the sample complexity is polynomial without any assumptions, it seems that extra assumptions are necessary for the runtime concerns. Notice that we make no assumption on the underlying problem and the only information available to us when we sample an instance of the problem is a huge search graph containing exponentially many vertices. Even if we bring the underling problem into the setting, it is not clear how we can make use of the conditions of a problem such as \textsf{SAT} to find the right cooling schedule. Keep in mind that the complexity of the underlying problem is the reason we use simulated annealing in the first place. Therefore, we introduce a stylized model to make the problem more tractable. We call our model \textit{the monotone stationary graph}. Although the model relies on extra assumptions, it features nice properties that make it particularly suitable for our purpose.

 First, it gives a compact representation for every instance of the problem. Up to this point, we treated each problem instance as a huge search graph with exponentially many vertices which is too big to store in the memory let alone optimizing the solution over it. Our model represents the search graphs in a more efficient way. Next, notice that even if we fix a well-defined representation for a search graph, one should be able to recover the new representation of a problem instance without spending too much time (and of course without taking a complete look at the already exponentially large search graph). Our model makes it possible to recover the stationary graph in polynomial time. Finally, the any model used for our problem has to give us enough structure so that finding an approximately optimal cooling schedule becomes polynomially tractable in the new setting. This is the most important feature of our model.

In our model, we represent each instance of the problem as a graph. Vertices of this graph correspond to the temperatures in our discretized set. Intuitively, for a temperature $t \in T$, its corresponding vertex in the graph represent the state of an \textsf{SA} algorithm that runs infinitely many steps with temperature $t$. Thus, when the state of our algorithm is close to such a stationary distribution, we assume that our algorithm is pointing at the corresponding vertex in the monotone stationary graph. We draw edges between the vertices to specify how many steps we need to take in the \textsf{SA} algorithm to move between the stationary distributions. Since in our model, the state of an algorithm can be approximated with a node in this graph, we can also determine its score by examining the corresponding stationary distribution.

Therefore, given $n$ instances of the underlying problem, our goal is to find a cooling schedule that obtains the highest average score for these instances by our model. We consider the following three settings and provide a solution for each one of them: 1) \textsf{identical-paths}: in this setting, we assume that the optimal cooling schedule traverses the same path for all $n$ instances. 2) \textsf{separate-paths}: in this setting, we allow the optimal solution to use different paths for different instances. 3) \textsf{separate-paths + all-satisfied:} This is a special case of the second setting where we know that there exists a cooling schedule of length $m$ that is optimal for all instances and brings us to the last node for each monotone stationary graph.

To obtain polynomial time solutions, we introduce the notion of an $(\alpha,\epsilon)$-approximate cooling schedule. In such a solution we allow the cooling schedule to violate the size constraint by a factor of $\alpha$ with the promise that its score is no more than $\epsilon$ smaller than the score of the optimal cooling schedule of length $m$. With this notation, we present computational results shown in Table \ref{table:resultso}. 

We assume throughout this paper that the scores improves as energy \textit{increases}. Also, a downhill move is a move which hurts the energy of a node and thus is accepted with some probability smaller than $1$. However, whenever the score of a node does not hurt in a move, such a move is always made.
\definecolor{LightCyan}{rgb}{0.88,1,1}

\begin{table}[!htbp]
	\centering
	\begin{tabular}{|c|c|c|}
		\hline
		\rowcolor{LightCyan}\multicolumn{3}{|c|}{Sample complexity}\\
		\hline
		 \multicolumn{1}{|l}{Upper bound:} & 	\multicolumn{2}{|c|}{$\tilde O(\sqrt{m})$}\\
		\hline	
		  \multicolumn{1}{|l}{Lower bound:} & 	\multicolumn{2}{|c|}{$\tilde \Omega(\sqrt{m})$}\\
		 \multicolumn{1}{|l}{(for our discretization approach)} &\multicolumn{2}{|c|}{}\\
		\hline	
		\multicolumn{1}{|l}{Lower bound:}  & 	\multicolumn{2}{|c|}{$\tilde \Omega(m^{1/3})$}\\
		\multicolumn{1}{|l}{ (for any learning algorithm)} &\multicolumn{2}{|c|}{}\\
		\hline
		\rowcolor{LightCyan}\multicolumn{3}{|c|}{Simulation complexity}\\
	\hline
\textsf{identical paths} & \textsf{separate paths}  &\textsf{separate paths} + \allsatisfied\\
\hline
exact solution & exact solution &$(O(\log n|T|),0)$ approximation\\
in time  & in time & in time \\
$\mathsf{poly}(m,n,|T|)$ & $\mathsf{poly}(m,n,|T|^n)$ &$\mathsf{poly}(m,n,|T|)$\\
\hline
	\end{tabular}
\caption{In the computational results, $T$ is the set of the discretized temperatures and $n$ is the number of sampled used in the learning algorithm. We show in Section \ref{sec:discretization} that $|T| = \tilde O(\sqrt{m})$ and $n = \tilde O(\sqrt{m})$ is almost without loss of generality but we treat them as separate parameters for the sake of generality.}
\end{table}\label{table:resultso}


 \section{Discretization and Sample Complexity}\label{sec:discretization}

In this section, we give an analysis for the sample complexity of the problem. Recall that, for any problem instance $\ii$ and any sequence of  $m$ temperatures $\para = \langle t_{1}, t_{2}, \ldots, t_{m}\rangle$, we define $ \score(\ii,\para)$ to be the probability of finding an acceptable solution of $\ii$ using temperatures in $\para$. We say $\para$ is $\varepsilon$-approximately optimal, if $\E_{\ii \sim \distribution} \score(\ii,\para) \geq \sup_{\para'} \E_{\ii \sim \distribution} \score(\ii,\para') - \varepsilon$. That is, no other cooling schedule of the same length can achieve a significantly higher success rate. Our goal is to prove that learning an $\varepsilon$-approximately optimal cooling schedule only requires a polynomial number of i.i.d. samples from $\distribution$.

 One of the difficulties in finding near-optimal cooling schemes is that there are infinitely many options available. We show that by discretizing the temperatures into $\tilde O(m/\varepsilon)$ different values, we only lose an additive error of $\varepsilon$ in the success rate when running the algorithm on any instance of the problem.  Note that, we are not making any assumptions yet: we only rely on the fact that the algorithm is evaluated based on the success rate. Discretizing the temperature makes designing efficient algorithms possible too as we will show in Section \ref{sec:bestsequence}. Our main result is an upper bound of $\tilde O(\sqrt{m})$ for the sample complexity which is explained in details later in Section \ref{sec:improved-upper}. Here we start as a warm-up by giving an upper bound of $\tilde O(m)$. 
\begin{theorem}\label{thm:sample}
	The sample complexity of computing an $\varepsilon$-approximately optimal cooling schedule with length $m$ is bounded by $O \left(\varepsilon^{-2} \left(m \log(\frac{m \emax}{\varepsilon}) \right)\right)$.
\end{theorem}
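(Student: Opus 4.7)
The plan is to reduce the infinite set of cooling schedules to a finite net via discretization, and then apply a standard uniform-convergence argument over that finite class. First I would construct a discrete set $T$ of candidate temperatures, of size polynomial in $m$, $\emax$, and $1/\varepsilon$, with the following covering property: for every $t>0$ there exists $t' \in T$ such that $|e^{-x/t} - e^{-x/t'}| \le \varepsilon/m$ simultaneously for every integer $x \in \{1,\ldots,\emax\}$. This is achievable by placing $T$ on a sufficiently fine geometric grid of $\log t$ over the range where $e^{-x/t}$ is not essentially constant, using the fact that $e^{-x/t}$ has a uniformly bounded derivative in $\log t$ over the relevant range.

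Second, I would show that for any cooling schedule $\para = \langle t_1,\ldots,t_m\rangle$ and its discretized rounding $\para' = \langle t'_1,\ldots,t'_m\rangle$ (each $t'_i \in T$ chosen according to the covering property), the scores satisfy $|\score(\ii,\para) - \score(\ii,\para')| \le \varepsilon$ on every instance $\ii$. The argument is a coupling: run the two \SA random walks from the same starting node, with the same sequence of neighbor proposals, and a shared acceptance coin $U_i \sim \mathrm{Uniform}[0,1]$ at each step. At step $i$, the two walks make different decisions only when $U_i$ falls in an interval of length $|e^{-\Delta/t_i} - e^{-\Delta/t'_i}| \le \varepsilon/m$, where $\Delta$ is the badness of the proposed move. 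A union bound over the $m$ steps shows that the two trajectories coincide with probability at least $1-\varepsilon$, so the success probabilities differ by at most $\varepsilon$, and the same holds after taking expectation over $\ii \sim \distribution$.

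Third, the class of discretized schedules has size at most $|T|^m$, so its logarithm is $O(m \log(m\emax/\varepsilon))$. A Hoeffding inequality combined with a union bound over this finite class shows that $N = O(\varepsilon^{-2}\, m \log(m\emax/\varepsilon))$ i.i.d. samples from $\distribution$ suffice so that, with high probability, the empirical mean score of every discretized schedule lies within $\varepsilon$ of its true expectation. Empirically maximizing over the discretized class then returns a schedule whose expected score is within $3\varepsilon$ of the true optimum (one $\varepsilon$ for discretizing the unknown optimum, two more from uniform convergence on both sides of the comparison), and rescaling $\varepsilon$ by a constant yields the bound claimed.

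The main obstacle I expect is obtaining only a logarithmic, rather than polynomial, dependence on $\emax$ in $\log|T|$, since a uniform grid on $t$ would scale linearly with $\emax$. The resolution is to work with a geometric grid in $\log t$: the function $e^{-x/t}$, viewed as a function of $\log t$, has derivative $(x/t)\, e^{-x/t}$, which is uniformly bounded on the relevant range of $t$ for any $x \ge 1$, so a single geometric cover of $\log t$ with multiplicative ratio $1+\Theta(\varepsilon/m)$ simultaneously handles all integer $x \in \{1,\ldots,\emax\}$ with only a $\log \emax$ overhead.
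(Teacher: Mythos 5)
Your proposal is correct and follows essentially the same route as the paper: discretize the temperatures so that acceptance probabilities change by at most $\varepsilon/m$ per step, couple the two walks on shared randomness to bound the score difference by $\varepsilon$, and then apply Hoeffding plus a union bound over the $|T|^m$ discretized schedules. The only difference is cosmetic: the paper builds $T$ as a union of per-energy-level sets $T_j$ chosen so that $e^{-j/t}$ evenly partitions $[0,1]$, whereas you use a single geometric grid in $\log t$ justified by the uniform bound $\frac{x}{t}e^{-x/t}\le 1/e$; both yield $\log|T| = O(\log(m\,\mathsf{e}_{\text{max}}/\varepsilon))$ and hence the same sample complexity (and your grid in fact already delivers the sharper $\log\log \mathsf{e}_{\text{max}}$ dependence that the paper obtains separately in Corollary~\ref{color:sample}).
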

\begin{proof}
Recall that we assume that the energies of the nodes are in set $\{0, 1, 2, \cdots, \emax\}$.
 The proof can be divided into the following steps:
	\begin{itemize}
		\item We start by showing that it is possible to discretize the temperatures to $T = \{ d_{1} , d_{2} , d_{3} , \ldots , d_{\tsize}\}$, such that for any sequence of $m$ temperatures $\para = \langle t_{1}, t_{2}, \ldots, t_{m}\rangle$, there exists a sequence of $m$ discrete temperatures $\para' = \langle t_{1}', t_{2}', \ldots, t_{m}'\rangle \in T^m$, such that
		\[ | \score(\ii,\para) - \score(\ii,\para')| \leq \varepsilon/3  \]
		for any instance $\ii$ of the problem. In other words, the discretized temperatures preserve the score approximately.
		\item Then, we show the sample complexity of learning an $\varepsilon/3$-approximately optimal temperature  $\para_{\opt}$ in $T^m$ is polynomial.	 This is achieved by standard concentration results in finite hypothesis space since $T^m$ has only a finite number of cooling schedules.
		\item Finally, we conclude that $\para_{\opt}$ is $\varepsilon$-approximately optimal in $\mathbb{R}_{\ge 0}^m$.
	\end{itemize}
	
	Define a parameter $\delta$ in a way that $ (1-\delta)^m = 1- \varepsilon/3$, that is, $\delta = \Theta(\frac{\varepsilon}{m})$. 
	We  first construct our discretized temperatures as  $T = T_1 \cup T_2 \cup \cdots \cup T_{\emax}$, where
	\begin{equation}\label{Defn:Tj}T_j = \{ \frac{j}{\ln(1/(i \delta))} \hspace{0.3cm}| \hspace{0.3cm}\forall \textit{ integer } 1 \leq i \leq \lceil 1/\delta \rceil \}.\end{equation} Roughly speaking, this discretization has the nice property that for any $1 \leq j \leq \emax$, set $\{e^{  j/t } \hspace{0.3cm}| \hspace{0.3cm}\forall t \in T_j\}$ evenly divides $[0,1]$. Therefore, for each temperature $t$, we can find a nearest neighbor $\tilde{t}$ in $T$ defined as $$\tilde{t} := \arg \min_{t \in T } \left| t-\tilde{t} \right|,$$ which implies 
	$\left| e^{ j/\tilde{t}  } - e^{ j/t } \right|  \leq \delta$ for any $0 \leq j \leq \emax$.
    Notice that the value of $\Delta(E)$ in our \textsf{SA} algorithm is always in range $\{0,1,\ldots,\emax\}$ and thus for $t$ and $\tilde{t}$, $e^{\Delta(E)/t}$ and $e^{\Delta(E)/\tilde{t}}$ are always within an additive range of $\delta$ regardless of the value of $\Delta(E)$.
	Our key observation is that for any sequence $\para = \langle t_1, t_2, \ldots, t_m\rangle \in \mathbb{R}_{\ge 0}^{m} $, there exists a sequence of $m$ temperatures $\para' = \langle t_{1}', t_{2}', \ldots, t_{m}'\rangle \in T^m$, such that running the simulated annealing algorithms with discrete temperature in $\para'$ keeps the trajectories the same as  $\para$ with probability at least $1-\delta$.  To this end, we define $t_i' = \tilde{t_i}$. Assuming the two runs share the same randomness, then these two runs are the same at each step with probability at least $1-\delta$. We only need to check the correctness for two cases, when a move is a downhill move or a uphill move. 
	
	For an uphill move, the correctness is  obvious since both runs accept the move with probability $1$. For a downhill move, the accepting probability are $ e^{ \Delta(E)/t_i  } $ and  $ e^{ \Delta(E)/t_i'} $, respectively. By choosing $t_i' = \tilde{t}_{i}$, the difference is at most	
	\begin{equation}
	|e^{  \Delta(E)/t_i  } - e^{  \Delta(E)/t_i'  }| \leq \delta.
	\end{equation} 	
	Therefore, we have proved that for each step, the  two runs are the same with probability $1- \delta$, hence they remain the same at all steps with probability at least $(1-\delta)^{m} = 1-\epsilon/3$. Assuming the score function is bounded in $[0,1]$, then the scores are different with at most $1$ when the two runs are different. Hence, the expectation of difference is upper bounded by	
	$$ | \score(\ii,\para) - \score(\ii,\para')| \leq 1 - (1-\delta)^{m}  = \frac{\varepsilon}{3} $$	
	Next, we will show that finding a near-optimal cooling schedule in $T^m$ requires polynomial sample complexity. The technique is based on standard Hoeffding and union bounds. We define $n$ as the upper bound on the number of samples and let $\ii_1, \ii_2, \cdots, \ii_n$ be $n$ problem instances sampled i.i.d. from $\distribution$ and $\empirical$ be a uniform distribution over $\{\ii_1, \ii_2, \cdots, \ii_n\}$.  For a given sequence of temperatures $\para  \in T^m$, by Hoeffding's Inequality we have 
	\footnote{Hoeffding's Inequality: Let $X_1, X_2, \cdots, X_n \sim_{i.i.d.} P$ and $X_i \in [0,1]$, then $ |\frac{1}{n}\sum_{i=1}^n X_i - \E[\frac{1}{n}\sum_{i=1}^n X_i]|\leq \varepsilon$ holds with probability at least $1- 2 e^{-2n\varepsilon^2}$ }
	\[\left| \E_{\ii \sim \distribution} \score(\ii,\para) -\E_{\ii \sim \empirical} \score(\ii,\para) \right| \leq \frac{\varepsilon}{3}   \]
	with probability at least $ 1 - e^{-\frac{2}{9} n\varepsilon^2 }$. Therefore, by union bound, this inequality holds for all $\para \in T^m$ with probability at least $1 - |T|^m e^{-\frac{2}{9}n\varepsilon^2 }$. Since we would like this event to happen with high probability, we wish to give a value to $n$ to make sure
	\begin{equation}\label{eq:1}
	1 - |T|^m e^{-\frac{2}{9}n\varepsilon^2} \geq 1 - m^{-10}
	\end{equation}
Define $\para_\opts = \arg\min_{\para \in T^m} \E_{\ii \sim \empirical}\score(\ii, \para)$ be the empirically best discretized cooling schedule, and $\para_\optd = \arg\min_{\para \in T^m} \E_{\ii \sim \distribution}\score(\ii, \para)$ be the population best discretized cooling schedule. Condition on the two events above, we have: 
	\begin{align*}
	 \E_{\ii \sim \distribution} \score(\ii,\para_{\opts}) - \sup_{\para \in \mathbb{R}_{\geq 0}^m} \E_{\ii \sim \distribution}  \score(\ii,\para)
	= & \quad \E_{\ii \sim \distribution} \score(\ii,\para_{\opts}) -  \E_{\ii \sim \empirical} \score(\ii,\para_{\opts}) \\[-10pt]
	& +  \E_{\ii \sim \empirical} \score(\ii,\para_{\opts}) - \E_{\ii \sim \empirical} \score(\ii,\para_{\optd})\\
	&  + \E_{\ii \sim \empirical} \score(\ii,\para_{\optd}) - \E_{\ii \sim \distribution}  \score(\ii,\para_{\optd})\\
	& + \E_{\ii \sim \distribution} \score(\ii,\para_{\optd}) - \sup_{\para \in \mathbb{R}_{\geq 0}^m} \E_{\ii \sim \distribution}  \score(\ii,\para) \\[-10pt]
	\geq & -\frac{\varepsilon}{3} + 0 - \frac{\varepsilon}{3}  - \frac{\varepsilon}{3}  \geq -\varepsilon
	\end{align*}
	Hence, we have proved
	\begin{equation}
		\E_{\ii \sim \distribution} \score(\ii,\para_{\opts}) \geq  \sup_{\para \in \mathbb{R}_{\geq 0}^m} \E_{\ii \sim \distribution}  \score(\ii,\para) - \varepsilon
	\end{equation}
	In other words,  $\opts$ is $\varepsilon$-approximately optimal.
	
	\textbf{Sample complexity}: we need to set $n$ in a way that meets Inequality \eqref{eq:1}, i.e. $|T|^m e^{-cn\varepsilon^2} \leq m^{-10}$. Therefore, we have
	$$n = \Theta\left(\varepsilon^{-2} m \log(|T|) \right) =  \Theta\left(\varepsilon^{-2} m \log(\frac{m \emax}{\varepsilon}) \right).$$
\end{proof}

The dependence of the above bound on $\emax$ is logarithmic which is loose when $\emax$ can obtain exponentially large values. We show that this can be further improved. More precisely, we show this by a more careful construction of $T$, such that $|T| = \Theta(\frac{m\log(\emax)}{\delta}) = \Theta(\frac{m^2\log(\emax)}{\varepsilon})$. Therefore, we can improve the upper bound on the sample complexity to  	
	$$n = O\left(\varepsilon^{-2} m \log(|T|) \right) =  O\left(\varepsilon^{-2} m \log\left(\frac{m\log(\emax)}{\varepsilon}  \right) \right).$$
We construct the discretized temperatures as $T = \bigcup_{j \in J}T_j $, where $J = \{ 1,(1+\delta), (1+\delta)^2, (1+\delta)^3, \cdots, \emax \}$ and $T_j$ defined as \eqref{Defn:Tj}. 
In order to improve the sample complexity, it suffices to show that for each temperature $t$ there is a  $\tilde{t}$ in $T$ such that 
$$\left| e^{ \Delta(E)/\tilde{t}  } - e^{ \Delta(E)/t } \right|  \leq O(\delta)$$  holds for all and $\Delta(E) \in [0, \emax]$.

By the definition of $J$, there always exists a $j^* \in J$, such that $\Delta(E) \leq j^* \leq (1+\delta) \Delta(E)$. Recall that our discretization has the nice property that for any $j \in J$, set $\{e^{  j/t } \hspace{0.3cm}| \hspace{0.3cm}\forall t \in T_j\}$ evenly divides $[0,1]$. Therefore, there exists a $\tilde{t} \in T_{j^*}$, such that 
$ |e^{  j^*/t } - e^{  j^*/\tilde{t} }| \leq \delta.$


Using Observation \ref{obs:tavan}, now we can bound the difference $ \left| e^{ \Delta(E)/\tilde{t}  } - e^{ \Delta(E)/t } \right| $:

\begin{align*}
\left| e^{ \Delta(E)/\tilde{t}  } - e^{ \Delta(E)/t } \right|	\leq &  \left| e^{ \Delta(E)/\tilde{t}  } - e^{ j^*/\tilde{t} } \right| + \left| e^{ j^*/\tilde{t}  } - e^{ j^*/t } \right| + \left| e^{ j^*/t  } - e^{ \Delta(E)/t } \right| \\
	\leq & O(\frac{j^*}{\Delta(E)} -1) + \delta + O(\frac{j^*}{\Delta(E)} -1)  \\
	= &  O(\delta)
\end{align*}
and the rest of the proof remains the same. 

\begin{corollary}[of Theorem \ref{thm:sample}]\label{color:sample}
	The sample complexity of computing an $\varepsilon$-approximately optimal cooling schedule with length $m$ is bounded by $O \left(\varepsilon^{-2} \left(m \log(\frac{m \log \emax}{\varepsilon}) \right)\right)$.
\end{corollary}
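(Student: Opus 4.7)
The plan is to mirror the three-step structure already used to prove Theorem~\ref{thm:sample}, improving only the first step (the discretization) while leaving the concentration and union-bound portions intact. The improvement in the $\emax$ dependence is achieved by replacing the union $T = \bigcup_{j=1}^{\emax} T_j$ with $T = \bigcup_{j \in J} T_j$, where $J = \{1, (1+\delta), (1+\delta)^2, \ldots, \emax\}$ is a geometric cover of the energy range. Since $|J| = O(\log(\emax)/\delta)$ and each $T_j$ still contributes $O(1/\delta)$ temperatures, the new set has size $|T| = O(\log(\emax)/\delta^2)$; plugging $\delta = \Theta(\varepsilon/m)$ then yields the claimed double-logarithmic dependence on $\emax$ inside the final $\log|T|$ factor.

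The only nontrivial step is verifying that this sparser $T$ is still expressive enough: for every real temperature $t$ and every integer $\Delta(E) \in \{0, 1, \ldots, \emax\}$, I need some $\tilde t \in T$ with $|e^{\Delta(E)/t} - e^{\Delta(E)/\tilde t}| \leq O(\delta)$. The natural route is a three-term triangle inequality. First pick $j^\star \in J$ with $\Delta(E) \leq j^\star \leq (1+\delta)\Delta(E)$, which exists by construction of $J$. Next pick $\tilde t \in T_{j^\star}$ as the nearest neighbor achieving $|e^{j^\star/t} - e^{j^\star/\tilde t}| \leq \delta$; this is possible because $\{e^{j^\star/t} : t \in T_{j^\star}\}$ evenly partitions $[0,1]$ at granularity $\delta$. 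The remaining two terms $|e^{\Delta(E)/t} - e^{j^\star/t}|$ and $|e^{j^\star/\tilde t} - e^{\Delta(E)/\tilde t}|$ are both $O(\delta)$ by Observation~\ref{obs:tavan}, since $j^\star/\Delta(E) - 1 \leq \delta$.

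Once this per-step rounding bound is in place, the rest of the argument copies verbatim from the proof of Theorem~\ref{thm:sample}: under shared randomness the two trajectories agree step-by-step with probability $1-O(\delta)$, so $|\score(\ii,\para) - \score(\ii,\para')| \leq \varepsilon/3$ holds for every instance. A Hoeffding bound combined with a union bound over the $|T|^m$ discretized schedules supplies uniform convergence, and balancing the failure probability at $m^{-10}$ gives
\[
n = O\!\left(\varepsilon^{-2}\, m \log|T|\right) = O\!\left(\varepsilon^{-2}\, m \log\!\left(\tfrac{m \log \emax}{\varepsilon}\right)\right).
\]

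The main obstacle is really just the triangle-inequality step, namely controlling the two outer terms through Observation~\ref{obs:tavan}. The geometric spacing of $J$ is precisely what makes this work: because consecutive elements of $J$ differ by a factor of only $1+\delta$, the cost of rounding $\Delta(E)$ up to $j^\star$ is of the same order $O(\delta)$ as the cost absorbed by the $T_{j^\star}$ grid, so the three sources of error combine additively rather than multiplicatively. Given that lemma, every other piece carries over mechanically from the proof of Theorem~\ref{thm:sample}.
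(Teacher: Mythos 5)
Your proposal is correct and follows essentially the same route as the paper: the same geometric cover $J = \{1, (1+\delta), (1+\delta)^2, \ldots, \emax\}$, the same choice of $j^\star$ with $\Delta(E) \leq j^\star \leq (1+\delta)\Delta(E)$, the same three-term triangle inequality with the outer terms controlled by Observation~\ref{obs:tavan} and the middle term by the even partition property of $T_{j^\star}$, and the same verbatim reuse of the coupling, Hoeffding, and union-bound steps from Theorem~\ref{thm:sample}. No gaps.
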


\newpage
\section{Lower Bound}\label{sec:lower}
This section is dedicated to proving a lower bound for the sample complexity of any algorithm. Similar to our upper bound, our lower bound is also very general and without any assumptions. We show that any algorithm that approximates the optimal schedule within a small additive error requires at least $\tilde \Omega(m^{1/3})$ samples from the distribution. 

The overall idea of the proof is summarized in the following. We construct $l = |L| = \tilde \Omega(m^{1/3})$ different search graphs $L = \{s_1, s_2, \ldots, s_l\}$. Our construction has a nice property that each search graph requires a certain sequence of temperatures to be present in the cooling schedule in order to find a desirable solution after at most $m$ steps. We refer to such sequences as \textit{keys}. For each search graph, having its key in the cooling schedule guarantees that the search graph is traversed successfully with high probability when we use that cooling schedule. However, the length of each key is smaller than $m$ which allows us to bring multiple keys in an almost optimal solution. The keys are designed in a way that they do not share any elements in common. That is, a temperature used for a key specific to a search graph offers little benefit to the other search graphs. Our distribution $\distribution$ is a uniform distribution over a subset $L_{\distribution} \subseteq L$ which contains $\tilde \Theta(l)$ (but much smaller than $l$) search graphs from $L$. The crux of the argument is that by knowing $L_{\distribution}$, one can construct a sequence of size $m$ which includes all the keys of the search graphs in $L_{\distribution}$ that achieves a score close to 1 on average. However, $L_{\distribution}$ is unknown to the learner and if we draw fewer than $\tilde \Omega(l)$ samples, there is no hope to get any score more than $0.1$. Therefore, any learning scheme needs at least $ \tilde \Omega(l) = \tilde \Omega(m^{1/3})$ samples from the distribution to report an approximate solution.

Let $c = 100$ be a large constant. Recall that $m$ is the length of the optimal solution. We define a parameter $m' = \tilde \Theta(m^{2/3})$ which determines both the width of each gadget and the size of the key for each gadget. More precisely, we set the width of each gadget to $\sqrt{m'}$ and the size of the key for each gadget to $2cm'$. Let us first explain how each gadget is constructed and then show how the gadgets can be used to prove a lower bound on the sample complexity.

\input{figs/lower2}

Each gadget is made for a specific temperature. We fix the temperature to be $\tau$ and construct the corresponding gadget, namely $\gad(\tau)$ in the following way: As shown in Figure \ref{fig:lower}, our gadget is constructed of two identical paths. In the upper path,
the first node  has an energy of $0$ and has an outgoing edge to the second vertex. For the next $\sqrt{m'}-2$ vertices, vertex $i+1$ has an energy of $ix$ and three outgoing edges: 1) two edges to vertex $i$ and one edge to vertex $i+2$. Finally, the last node has an energy of $\sqrt{m'}x$ and has two outgoing edges to vertex $\sqrt{m'}$. $x = (\ln 1/2)\tau$ is set in a way that when the temperature is equal to $\tau$ the probability of accepting a downhill move is exactly equal to $1/2$. 

The lower path is constructed exactly the same way as the upper path. To connect the two paths together, we put an edge from the last node of the upper path to the last node of the lower path. Finally we add two dummy nodes to the search graph. The first dummy node has a single outgoing edge to the first vertex of the upper path and the second dummy node has a single incoming edge from the first node of the lower path. The goal of the gadget is to start from the first dummy node and reach the second dummy node. We call the first and the second dummy nodes the \textit{initial} and \textit{final} nodes respectively.

We define the key $\key(\tau)$ to be a sequence of size $2cm'$ only containing temperature $\tau$. As shown in Lemma \ref{lemma:ss1}, starting from an arbitrary node of $\gad(\tau)$ and running the \textsf{SA} algorithm on cooling schedule $\key(\tau)$ our algorithm ends at the final node with probability at least $0.9$.

Before bringing the proof, we state an observation for which we provide a proof in the appendix.
\begin{observation}\label{obs:main}
	Let $c = 100$, $x_0 = 0$ and $x_1,x_2,\ldots,x_k$ be $k$ variables constructed in the following way:
	\begin{equation*}
	\begin{cases}
		x_{i-1}-1 & \textsf{with probability }p_b, \\
		x_{i-1} & \textsf{with probability }p_s, \\
		x_{i-1}+1 & \textsf{with probability }p_f.
	\end{cases}
	\end{equation*}
	Then we have:
	\begin{enumerate}
		\item For $p_b = p_s = p_f = 1/3$ we have $\max \{x_i\} \geq \sqrt{k/c}+2$ with probability at least $0.95$.
		\item For $p_b = p_s = p_f = 1/3$ we have $\max \{x_i\} < \sqrt{ck} \log k$ with probability at least $1-1/k^2$.
		\item For any $k' \leq k$, $p_b \geq 1/3+\frac{c \log k'}{\sqrt{k'}}$, $p_f \leq 1/3-\frac{c\log k'}{\sqrt{k'}}$, $p_s = 1-p_b-p_f$   we have $\max \{x_i\} < \sqrt{k'}/2$ with probability at least $1-1/k'^2$.
	\end{enumerate}
\end{observation}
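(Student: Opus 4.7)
My plan is to prove all three parts via standard concentration results for the random walk $x_j = \sum_{i \le j} \Delta_i$, where the increments $\Delta_i := x_i - x_{i-1}$ are i.i.d.\ and bounded in $[-1,1]$, so Azuma--Hoeffding applies. I would handle the easier upper-tail bounds first (parts (2) and (3)) and the anti-concentration lower bound (part (1)) last.

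For part (2), the walk is symmetric with $\mathbb{E}[\Delta_i] = 0$. Hoeffding's inequality gives $\Pr[x_j \geq t] \leq e^{-t^2/(2j)}$ for any $j \le k$. Setting $t = \sqrt{ck}\log k$ with $c=100$ and union-bounding over $j \in \{1,\dots,k\}$ yields $\Pr[\max_j x_j \geq \sqrt{ck}\log k] \leq k \cdot e^{-c(\log k)^2/2}$, which is far below $1/k^2$. For part (3), I exploit the negative drift $\mu := p_b - p_f \geq 2c\log k'/\sqrt{k'}$. The shifted process $y_j := x_j + j\mu$ is a martingale with bounded increments, so Azuma gives $\Pr[x_j \geq \sqrt{k'}/2] \leq \exp\bigl(-(\sqrt{k'}/2 + j\mu)^2 / (2j)\bigr)$. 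For $j \leq k'$ this is at most $\exp(-c(\log k')^2/\text{const})$, while for $j > k'$ the drift term dominates and gives $\exp(-j\mu^2/2) \le \exp(-2c^2 j (\log k')^2/k')$, which is exponentially small. A union bound over $j \le k$ delivers the required $1 - 1/k'^2$.

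For part (1) I would use a second-moment/Paley--Zygmund argument. The increments have $\mathrm{Var}(\Delta_i)=2/3$, so $\mathbb{E}[x_k^2] = 2k/3$, and a direct expansion gives $\mathbb{E}[x_k^4] = O(k^2)$. The symmetry of the step distribution implies $\mathbb{E}[|x_k|] = \Theta(\sqrt{k})$, and combining Paley--Zygmund on $x_k^2$ with this symmetry yields $\Pr\bigl[x_k \geq \sqrt{k/c}+2\bigr] \geq 0.48$ for $c = 100$ and $k$ sufficiently large. The reflection-type inequality $\Pr[\max_j x_j \geq t] \geq \Pr[x_k \geq t] + \Pr[x_k \geq t+2]$ (valid because the step distribution is symmetric and supported on $\{-1,0,1\}$, so once the walk reaches height $t$ it stays within $2$ of $t$ on the next step with positive probability) pushes this up to $0.95$. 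Small values of $k$ are handled by direct inspection since the conclusion is trivial once $\sqrt{k/c}+2$ exceeds the reachable range.

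The main obstacle will be part (1): concentration is automatic from Azuma, but \emph{anti}-concentration of $\max_j x_j$ requires care because the walk is lazy (one third of the time it stands still), so simple reflection arguments for nearest-neighbor walks do not apply verbatim. The cleanest fix is to argue directly on $x_k$ via second and fourth moments rather than invoking a CLT, and to derive a lower bound on $\Pr[\max_j x_j \geq t]$ from a lower bound on $\Pr[x_k \geq t]$ using only the symmetry and bounded-step structure of the increment distribution.
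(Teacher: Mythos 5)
Your treatment of parts (2) and (3) is sound and is essentially the paper's argument in different clothing: the paper applies Freedman's martingale inequality to the recentered walk $y_j = x_j + j\gamma$ with $\gamma = p_b - p_f$, which already comes in maximal form, whereas you apply Hoeffding/Azuma pointwise and union-bound over $j \leq k$. Both routes give bounds far below the required $1/k^2$ and $1/k'^2$, and your case split at $j = k'$ in part (3) correctly isolates where the variance term and the drift term respectively dominate.

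Part (1) is where the gap is, and it is twofold. First, Paley--Zygmund on $x_k^2$ cannot deliver $\Pr[x_k \geq t] \geq 0.48$: since $\E[x_k^4] \approx 3(\E[x_k^2])^2$ for this i.i.d.\ sum, the inequality gives at best $\Pr[x_k^2 \geq \theta\,\E x_k^2] \geq (1-\theta)^2/3 < 1/3$, hence after symmetrizing only $\Pr[x_k \geq t] \gtrsim 1/6$ --- an order of magnitude short of your claimed $0.48$. Second, and more fundamentally, no argument that passes through the endpoint distribution can reach $0.95$ with these constants: the exact reflection identity for this symmetric walk gives $\Pr[\max_j x_j \geq t] = \Pr[x_k \geq t] + \Pr[x_k \geq t+1] \leq 2\Pr[x_k \geq t]$, and with $t = \sqrt{k/100}$ and $\mathrm{Var}(x_k) = 2k/3$ the CLT gives $\Pr[x_k \geq t] \to 1 - \Phi(\sqrt{3/2}/10) \approx 0.451$, so $\Pr[\max_j x_j \geq t] \to 2(1-\Phi(0.1225)) \approx 0.90 < 0.95$. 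In other words the stated constant $c = 100$ does not support the $0.95$ bound asymptotically (one would need roughly $c \geq 400$), so the obstruction is not your choice of tools but the target itself; for what it is worth, the paper's own proof of (i) proceeds via the probability generating function of the hitting time $\tau_r$ and Markov's inequality, but its final substitution bounds $\Pr[\tau_{c\sqrt{k}} \geq k]$ rather than $\Pr[\tau_{\sqrt{k/c}} \geq k]$ and likewise does not establish the claimed constant. If the constant is repaired, your reflection step is the right idea and does extend to the lazy walk (the walk still first enters $[t,\infty)$ exactly at height $t$, and the continuation is symmetric), but you would still need to replace Paley--Zygmund by a genuine CLT or Berry--Esseen estimate on $\Pr[x_k \geq t]$ to obtain a constant anywhere near $1/2$.
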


\begin{lemma}\label{lemma:ss1}
	An \textsf{SA} algorithm that starts from any node of $\gad(\tau)$ and runs on cooling schedule $\key(\tau)$ ends at the final node with probability at least $0.9$.
\end{lemma}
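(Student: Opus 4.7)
The plan is to reduce the random walk that \textsf{SA} induces on $\gad(\tau)$ under cooling schedule $\key(\tau)$ to the one-dimensional balanced walk of Observation~\ref{obs:main}(1). The first step is to compute the one-step transition probabilities. Because $x = \tau \ln 2$ is chosen so that $e^{-x/\tau} = 1/2$, a downhill move of magnitude $x$ is accepted with probability exactly $1/2$, while uphill and lateral moves are always accepted. At every interior vertex of either path there are two back edges (downhill) and one forward edge (uphill); choosing a uniform random neighbor and applying the Metropolis rule yields probability $1/3$ each for moving back, staying, or moving forward, which is exactly the balanced walk of Observation~\ref{obs:main}(1).

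Next, I parameterize the state of the algorithm by its position on the linear trajectory $\mathrm{initial} \to \mathrm{upper's\ first} \to \cdots \to \mathrm{upper's\ last} \to \mathrm{lower's\ last} \to \cdots \to \mathrm{lower's\ first} \to \mathrm{final}$, which has length $2\sqrt{m'}+3$. The crossover edge from upper's last to lower's last is directed, so once taken the walk is confined to the lower half. At upper's last the same $1/3,1/3,1/3$ rule still holds, now with the sideway edge serving as the forward step; on the lower path the ``forward'' direction for progress toward final is the downhill direction, and interior lower vertices again give a balanced walk. The two remaining boundary vertices, lower's last and lower's first, have only two outgoing edges each, but in both cases the induced step distribution is at least as biased toward progress as the balanced rule, so the progress coordinate of the gadget walk stochastically dominates the balanced walk of Observation~\ref{obs:main}(1).

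With this reduction in hand, I split the $2cm'$ steps of $\key(\tau)$ into two blocks of $cm'$. Applying Observation~\ref{obs:main}(1) with $k = cm'$, the maximum of a balanced walk reaches $\sqrt{k/c}+2 = \sqrt{m'}+2$ with probability at least $0.95$. In the first block this implies the \textsf{SA} walk visits upper's last and takes the absorbing crossover edge into the lower path, with probability at least $0.95$. In the second block, the same statement applied to the lower-path walk (where $\sqrt{m'}+2$ positions of progress suffice to reach lower's first and then final) gives probability at least $0.95$ of arriving at the final node. A union bound over the two blocks yields a success probability of at least $0.9$. Starting from an arbitrary node of $\gad(\tau)$ can only reduce the traversal distance, so the same two-phase argument (or just the second phase, when the starting vertex is already on the lower half) delivers the same bound from any initial vertex.

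The main obstacle is cleanly handling the three non-interior vertices (upper's last, lower's last, lower's first) together with the crossover edge, since their transition laws differ from the pure $1/3,1/3,1/3$ interior rule: lower's last, for instance, has only downhill edges and so moves toward lower's first with probability $1/2$ and stays with probability $1/2$. I will address this with a short vertex-by-vertex coupling showing that the progress-coordinate walk on $\gad(\tau)$ never makes strictly less progress in distribution than the balanced walk of Observation~\ref{obs:main}(1), so that the observation's conclusion transfers directly to our setting without any special case analysis at the boundaries.
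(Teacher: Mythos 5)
Your proposal follows essentially the same route as the paper's proof: the same two-phase decomposition into blocks of $cm'$ steps, the same reduction of the progress coordinate to the balanced walk of Observation~\ref{obs:main}(i), the same union bound over the two phases, and the same monotonicity argument for arbitrary starting nodes; you are in fact more explicit than the paper about deriving the $1/3,1/3,1/3$ transition probabilities. One small caveat: at the first node of the lower path the induced step distribution is $+1$ and $-1$ each with probability $1/2$, which does \emph{not} one-step stochastically dominate the balanced kernel (it loses the ``stay'' mass, so $\Pr[\text{step}\geq 0]$ drops from $2/3$ to $1/2$), so the coupling you promise needs a first-passage-time comparison at that vertex rather than kernel domination --- though the paper's own proof glosses over exactly the same boundary case by asserting the backward probability is at most $1/3$ everywhere.
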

\begin{proof}
	We prove the lemma for an \textsf{SA} algorithm that starts from the initial node. Indeed this implies the lemma for any other starting node since in order to reach the final node, one needs to traverse all nodes of the search graph starting from the initial node.
	
	To this end, we  show that after $cm'$ steps our \textsf{SA} algorithm reaches the last node of the lower-path with probability at least $0.95$. With a similar analysis, one can show that starting from the last node of the lower-path, after $cm'$ steps our algorithm reaches the final node with probability at least $0.95$ after $cm'$ steps. Then, by applying the union bound, we imply that after $2cm'$ steps, our algorithm reaches the final node with probability at least $0.9$.
	
	From here on, our aim is to prove that starting from the initial node, our algorithm reaches the last node of the lower-path with probability at least $0.95$ after $cm'$ steps. Notice that since the temperature is always equal to $\tau$, in every step, our node in the search graph gets closer to the destination with probability at least $1/3$ and get farther from the destination with probability at most $1/3$. Due to Observation \ref{obs:main} (item (i)) after $cm'$ steps, with probability at least $0.95$ at some point the number of times we go forward is at least $\sqrt{m'}+2$ more than the number of times we go backward which means we reach the last node of the lower-path. This implies that with probability at least $0.95$ our algorithm reaches the last node of the lower-path after $cm'$ steps. A similar analysis proves that the next $cm'$ steps take us to the final node with probability at least $0.9$ which implies that $2cm'$ steps suffices to reach the final node with probability at least $0.9$.
\end{proof}

We also show that any cooling schedule needs a certain amount of temperatures close to $\tau$ to reach the final node with a considerable probability. 

\begin{lemma}\label{lemma:ss2}
	Let $\para$ be a cooling schedule of length $m$ containing no more than $\frac{m'}{4c \log^2 m'}$ temperatures in range $[\tau\frac{\sqrt{m'}-c^2 \log m'}{\sqrt{m'}},\tau\frac{\sqrt{m'}+c^2 \log m'}{\sqrt{m'}}]$. If an \textsf{SA} algorithm starts from the initial node and runs with cooling schedule $\para$, the probability that it reaches the final node is at most $0.1$.
\end{lemma}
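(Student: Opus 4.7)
The plan is to exploit two opposing temperature preferences of the gadget: in the upper path, forward edges are uphill (always accepted) while backward edges are downhill (accepted with probability $e^{-x/t}$), so net forward progress requires $t<\tau$; in the lower path, progress from the rightmost vertex $v_{\sqrt{m'}+1}$ down to $v_1$ is a chain of downhill moves, requiring $t>\tau$. Because $\para$ is non-increasing, its temperatures above the good range form an initial contiguous segment $P_1$, followed by the good segment $P_2$ of length at most $m'/(4c\log^2 m')$, followed by the below-range segment $P_3$. Temperatures in $P_1$ are bad for the upper path, temperatures in $P_3$ are bad for the lower path, and $P_2$ will be shown to be too short to bridge the gap.

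A one-line Taylor expansion of $e^{-x/t}$ at $t=\tau$ shows that for $t>\tau(1+c^2\log m'/\sqrt{m'})$, the internal upper-path transition probabilities satisfy $p_f=1/3$ and $p_b=(2/3)e^{-x/t}\ge 1/3+\Omega(c^2\log m'/\sqrt{m'})$, i.e.\ a backward drift of that order; the symmetric estimate gives a rightward (away-from-$v_1$) drift of the same order in the lower path during $P_3$.

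The first step is to show that with high probability, at the end of $P_1\cup P_2$ the walk is still in the upper path and strictly short of the rightmost vertex. During $P_1$, a Doob maximal inequality argument, namely the martingale argument underlying Observation~\ref{obs:main}(iii) but adapted to the asymmetric case where only $p_b$ (and not $p_f$) is shifted, bounds the maximum forward position by $\sqrt{m'}/4$ with probability $1-1/\poly(m')$; during $P_2$ the walk is $O(\log m'/\sqrt{m'})$-close to balanced, so a Hoeffding bound over $|P_2|\le m'/(4c\log^2 m')$ steps controls the extra displacement by $o(\sqrt{m'})$ with probability $1-1/\poly(m')$. The second step analyzes $P_3$: the walk may now cross into the lower path at $v_{\sqrt{m'}+1}$, but once there the leftward-move probability is $p_{\text{left}}\le 1/3-\Omega(c^2\log m'/\sqrt{m'})$ while $p_{\text{right}}=1/3$, and the same concentration inequality from Step~1 shows that the maximum leftward progress during all of $P_3$ is at most $\sqrt{m'}/2$ with probability $1-1/\poly(m')$. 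A union bound over the three bad events gives failure probability at most $0.1$.

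The main obstacle is the concentration lemma itself: Observation~\ref{obs:main}(iii) as stated requires both $p_b\ge 1/3+c\log k'/\sqrt{k'}$ and $p_f\le 1/3-c\log k'/\sqrt{k'}$, whereas in our setting $p_f$ is pinned at $1/3$ and only $p_b$ moves, so the drift is halved. One must verify via the martingale argument for $x_i$ that the $c^2\log m'$ slack built into the definition of the good range (versus the $c\log m'$ threshold appearing in the concentration bound) is enough to still yield a $1-1/\poly(m')$ tail bound at the $\sqrt{m'}/2$ progress threshold. The remaining pieces — Hoeffding for the near-balanced walk on $P_2$, and careful bookkeeping of starting positions across the phase boundaries — are routine.
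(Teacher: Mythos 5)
Your plan follows the paper's proof in all essentials: the same exploitation of the opposing temperature preferences of the two paths, the same use of monotonicity of the schedule to isolate the out-of-range temperatures, and the same concentration estimates (Observation \ref{obs:main}, items (ii) and (iii)) to bound the maximum displacement per phase by a fraction of $\sqrt{m'}$. The organizational difference is minor: the paper argues that, by monotonicity, either the entire upper-path traversal uses temperatures $\geq\tau$ or the entire lower-path traversal uses temperatures $\leq\tau$, assumes the former w.l.o.g., and then dominates the schedule by a two-block sequence (all out-of-range temperatures replaced by the boundary value $\tau\frac{\sqrt{m'}+c^2\log m'}{\sqrt{m'}}$, all in-range ones by $\tau$), whereas you track the walk through three temporal phases $P_1,P_2,P_3$ and both paths. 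Your handling of the ``$p_f$ is pinned at $1/3$'' issue is correct and in fact more careful than the paper's own wording (the paper asserts $p_f\leq 1/3-\frac{c\log m'}{\sqrt{m'}}$, which the gadget does not literally deliver, but its appendix proof of Observation \ref{obs:main}(iii) only uses the gap $\gamma=p_b-p_f$, so the $c^2$-versus-$c$ slack absorbs the halved drift exactly as you anticipate).

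The one genuine gap is that your drift/maximal-inequality argument treats the walk as spatially homogeneous, with transition probabilities depending only on the temperature. This fails at the endpoint vertices: the first vertex of the upper path has a single outgoing edge and always pushes the walk forward, and the last vertex of the lower path (where the walk enters that path) has both outgoing edges pointing leftward, each accepted with probability $e^{-x/t}\approx 1/2$, so near these vertices the walk behaves like a reflected walk rather than one with a uniform unfavorable drift, and the process $y_k=x_k+k\gamma$ underlying Observation \ref{obs:main} is no longer a martingale. Your ``careful bookkeeping of starting positions across the phase boundaries'' addresses the temporal boundaries but not this spatial one. The paper devotes its closing paragraph to exactly this point: it conditions on the last step $i$ at which the walk sits at the second vertex and never afterwards returns to the first, after which the homogeneous analysis applies, and pays a factor of $m$ in a union bound over the $m$ choices of $i$, which is affordable because the per-event failure probability is $\tilde O(1)/m'^2$. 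You would need the same (or an equivalent reflected-walk) argument in both your $P_1$ and $P_3$ steps for the stated tail bounds to be valid.
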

\begin{proof}
	The intuition behind the proof is the following: For the upper-path, we would like to go to the right and thus a low temperature is desirable. For the lower-path however, since we would like to go to the left, we would like the temperature to be as high as possible. The key point is that in the cooling schedule, the temperatures are decreasing, thus either all the temperatures we use for traversing the upper-path are at least $\tau$ or all of the temperatures we use for traversing the lower-path are bounded by $\tau$. Any one of the two events makes it unlikely to get a high score.
	
	We assume w.l.o.g that we would like to traverse the upper-path with temperatures higher than $\tau$. Notice however that except for $\frac{m'}{4c \log^2 m'}$ temperatures, all the rest are more than $\tau$ by a multiplicative factor of $\frac{\sqrt{m'} + c^2 \log m'}{\sqrt{m'}}$. Since we strictly favor lower temperatures, the most desirable cooling schedule in this case is a sequence of $m-\frac{m'}{4c \log^2 m'}$ temperatures $\tau\frac{\sqrt{m'}+c^2 \log m'}{\sqrt{m'}}$ followed by $\frac{m'}{4c \log^2 m'}$ temperatures $\tau$. We show that it is still very unlikely to traverse the upper-path using this sequence. 
	
	To keep the analysis simple, we avoid the edge cases and assume that the goal is to start from the second vertex and never go back to the first vertex. This way, the probability of going forward or going backward only depends on the temperature and does not depend on the current vertex. If the temperature is equal to $\tau$ then with probability $p_f = 1/3$ we go forward and with probability $p_b=1/3$ we go backward. If the temperature is $\tau\frac{\sqrt{m'}+c^2 \log m'}{\sqrt{m'}}$ we go backward and with probability at least $p_b \geq 1/3+\frac{c\log m'}{\sqrt{m'}}$ we go forward with probability at most $p_f \leq 1/3-\frac{c\log m'}{\sqrt{m'}}$. Due to Observation \ref{obs:main}, if we proceed  $\frac{m'}{4c \log^2 m'}$ steps with temperature $\tau$ or $m$ steps with temperature $\tau\frac{\sqrt{m'}+c^2 \log m'}{\sqrt{m'}}$, our position does not improve by more that $\sqrt{m'}/2$ with probability at least $1-\tilde O(1)/m'^2$. Thus, in total the amount of improvement is bounded by $\sqrt{m'}$ with probability at least $1-\tilde O(1)/m'^2$.
	
	The above analysis fails when we bring in to the setting the first node of the upper-path since the probability of going to the right at this node is more than other nodes. However, we make the following argument: in order to traverse the upper-path, at some point we reach the second node of the upper-path and never go back. Let us say this happens at step $i$. Thus, from step $i$ on, we never go backwards and therefore all the probabilities are only a function of the temperature (and not the current node). The downside however, is that there are $m$ different possible choices for $i$ which multiplies the bad event probability by $m$. However, since we show in the above that increasing the position by an additive term of $\sqrt{m'}$ is not possible with probability $1-\tilde O(1)/m'^2$, we can imply by union bound that starting from any position $i$, increasing the position by an additive term $\sqrt{m'}$ is not possible with probability at least $1-\tilde O(m)/m'^2 << 0.1$ (for a large enough choice of $m$) which completes the proof.
\end{proof}

Now we are ready to prove the lower bound using Lemmas \ref{lemma:ss1} and \ref{lemma:ss2}.

\begin{theorem}\label{theorem:lower}
	Even if $\emax  = 2^{\tilde \Theta(1)}$, any learning algorithm requires at least $\tilde \Omega(m^{1/3})$ samples from the distribution in order to obtain an additive error less than $0.5$.
\end{theorem}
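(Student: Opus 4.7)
The plan is a Yao-style reduction: I will exhibit a prior over hard distributions $\distribution$ such that no deterministic $n$-sample learner can output a schedule whose expected score exceeds $0.4$ unless $n = \tilde \Omega(m^{1/3})$, while the optimum schedule always scores at least $0.9$. First I construct $l = \tilde \Theta(m^{1/3})$ reference temperatures $\tau_1 > \tau_2 > \cdots > \tau_l$ whose critical intervals $R_i := [\tau_i(1 - c^2\log m'/\sqrt{m'}),\, \tau_i(1 + c^2\log m'/\sqrt{m'})]$ from Lemma~\ref{lemma:ss2} are pairwise disjoint; consecutive intervals need only a multiplicative gap of $1 + \tilde\Theta(1/\sqrt{m'})$, so all $l$ of them fit in a multiplicative window of size $\exp(\tilde \Theta(l/\sqrt{m'})) = \exp(\tilde \Theta(1))$, consistent with $\emax = 2^{\tilde \Theta(1)}$. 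Then I fix $k = \Theta(m^{1/3})$ with $k \cdot 2cm' \leq m$ and choose $l \geq 10\, C$, where $C := \lfloor 4 c m \log^2 m'/m' \rfloor = \tilde O(m^{1/3})$ is the largest possible number of indices $i$ for which a length-$m$ schedule can simultaneously place more than $m'/(4c\log^2 m')$ temperatures into $R_i$ (by pigeonhole on the disjoint intervals). The hard meta-distribution picks $L_\distribution$ uniformly among size-$k$ subsets of $\{1,\ldots,l\}$ and sets $\distribution := \distribution_{L_\distribution}$, the uniform distribution over $\{\gad(\tau_i) : i \in L_\distribution\}$.

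The ``good'' schedule simply concatenates $\{\key(\tau_i) : i \in L_\distribution\}$ in decreasing order of $\tau_i$; this is a monotone non-increasing sequence of length $k \cdot 2cm' \leq m$. For any target $i \in L_\distribution$, earlier blocks leave the walker at some node of $\gad(\tau_i)$, then $\key(\tau_i)$ reaches the final node of $\gad(\tau_i)$ with probability at least $0.9$ by Lemma~\ref{lemma:ss1}, and later blocks cannot escape the final node since it has no out-neighbors. So the optimum has expected score at least $0.9$ against any $\distribution_{L_\distribution}$. For the lower bound, fix a deterministic learner $A$ receiving $n$ samples and outputting schedule $\para_A$, and let $E(\para_A)$ be the set of indices $i$ for which $\para_A$ has more than $m'/(4c\log^2 m')$ temperatures in $R_i$. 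Then $|E(\para_A)| \leq C$ by disjointness, and by Lemma~\ref{lemma:ss2} every $i \in L_\distribution \setminus E(\para_A)$ contributes at most $0.1$ to the expected score on $\gad(\tau_i)$. Let $L_{\text{seen}}$ be the set of distinct sample indices, $|L_{\text{seen}}| \leq n$. Conditioned on $L_{\text{seen}}$, symmetry of the meta-distribution makes $L_\distribution \setminus L_{\text{seen}}$ uniform among size-$(k - |L_{\text{seen}}|)$ subsets of the $l - |L_{\text{seen}}|$ unseen indices, so the expected size of $(E(\para_A) \setminus L_{\text{seen}}) \cap L_\distribution$ is at most $C \cdot k / l \leq k/10$. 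Hence the expected fraction of $L_\distribution$ covered is at most $n/k + 1/10$, giving expected score at most $0.1 + 0.9(n/k + 1/10) < 0.2 + 0.9\, n/k$. Beating the optimum by less than an additive $0.5$ therefore requires $n = \tilde \Omega(k) = \tilde \Omega(m^{1/3})$, and Yao's minimax principle extends the bound to randomized learners.

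The main technical obstacle is pinning down the polylog factors so that $k \cdot 2cm' \leq m$, $l \geq 10\,C$, and all $l$ temperatures fit inside $[1, \emax]$ are simultaneously satisfiable; since $C/k = \tilde\Theta(\log^2 m')$, the slack factor $10$ forces $l/k = \tilde \Theta(\log^2 m)$, which in turn forces the temperature window $\tau_l / \tau_1$ to scale as $\exp(\tilde \Theta(1))$, precisely where the assumption $\emax = 2^{\tilde \Theta(1)}$ is used. A secondary subtlety is arguing that a learner extracts nothing from a sample beyond its index $i$, which follows because the gadget $\gad(\tau_i)$ is parameterized entirely by $\tau_i$ (and hence by $i$), so the sufficient statistic for the learner is just the multiset of observed indices in $L_\distribution$.
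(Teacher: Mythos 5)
Your construction is the same as the paper's: the same family of gadgets $\gad(\tau_i)$ with constant-temperature keys $\key(\tau_i)$, the same hidden uniform subset $L_{\distribution}$ of $\tilde\Theta(m^{1/3})$ temperatures whose critical intervals are made disjoint by a $1+\tilde\Theta(1/\sqrt{m'})$ multiplicative spacing, and the same invocation of Lemmas~\ref{lemma:ss1} and~\ref{lemma:ss2} for the two sides of the gap. Where you depart is in how the final information-theoretic step is executed, and your version is substantially tighter than the paper's. The paper simply asserts that with fewer than $m^{1/3}/(100\log m)$ samples the learner can ``get a score of $1$ for at most a $0.01$ fraction'' of $L_{\distribution}$, implicitly treating the unseen gadgets as unreachable; you make this rigorous by (a) the pigeonhole bound $C=\tilde O(m^{1/3})$ on how many disjoint critical intervals any length-$m$ schedule can load with more than $m'/(4c\log^2 m')$ temperatures, so that even a schedule that ``guesses'' can cover at most $C$ indices, (b) the observation that the posterior of $L_{\distribution}\setminus L_{\text{seen}}$ given the samples is uniform over size-$(k-|L_{\text{seen}}|)$ subsets of the unseen indices, which bounds the expected overlap of any such guess with the hidden set by $Ck/l\le k/10$, and (c) Yao's principle to pass from deterministic to randomized learners. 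You also explicitly order the keys by decreasing temperature so the optimal schedule is a valid non-increasing cooling schedule, and you explicitly verify that the $l$ temperatures fit in a window of multiplicative size $\exp(\tilde\Theta(l/\sqrt{m'}))=2^{\tilde\Theta(1)}$, both points the paper leaves implicit. The trade-off is that your argument requires the extra bookkeeping relating $k$, $C$, and $l$ (forcing $l/k=\tilde\Theta(\log^2 m)$), but it buys a genuinely complete proof of the step the paper hand-waves.
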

\begin{proof}
	As mentioned earlier, we have $l, |L_{\distribution}| = \tilde \theta(m^{1/3})$ and $m' = \tilde \Theta(m^{2/3})$.
	To be more precise,  we set $l = 40c m^{1/3} \log m$, $m' = m^{2/3}\log m/2c$ and $|L_{\distribution}| = m^{1/3}/\log m$.
	
	Assume for now that we have $l$ different temperatures $1 \leq \tau_1  < \tau_2 < \ldots < \tau_l$ such that their multiplicative distance is at least $\frac{\sqrt{m'}+10c^2 \log m'}{\sqrt{m'}}$.
	
	As outlined earlier, $L_{\distribution}$ is a uniform distribution over $m^{1/3}/\log m$ search graphs corresponding to temperatures $\tau_1, \tau_2, \ldots, \tau_l$. Each combination has equal probability of forming $L_{\distribution}$. Distribution $\distribution$ is a uniform distribution over the search graphs corresponding to the elements of  $L_{\distribution}$. The optimal solution consists of the keys for all the search graphs corresponding to the temperatures of $L_{\distribution}$. Since the size of the key for each search graph is $2cm' = m^{2/3} \log m$ and  $|L_{\distribution}| =m^{1/3}/\log m $, this makes a cooling schedule of size $m$. Lemma \ref{lemma:ss1} implies that the score of such a cooling schedule is at least $0.9$ on average.
	
	On the other hand, after drawing fewer than $m^{1/3}/(100 \log m)$ samples, we can get a score of $1$ for at most a $0.01$ fraction of the search graphs of $L_{\distribution}$ but the average score for the rest of the instances would be smaller than $0.2$ by Lemma \ref{lemma:ss2} (Notice that the gap between the temperatures is large enough). Thus, $\Omega(m^{1/3}/\log m)$ samples are necessary to obtain an additive error smaller than $0.5$.
	
	To construct the temperatures we do the following: We set $x_1 = 1$ and for $1 < i \leq l$ we set $x_i = \lceil x_{i-1} \frac{\sqrt{m'}+10c^2 \log m'}{\sqrt{m'}}+1 \rceil$. Finally we set $\tau_i = x_i / \ln 2$ to obtain $e^{-x_i / \tau_i} = 0.5$. To make sure all the energies are non-zero, we add $1$ to the energy of all nodes in all gadgets. 
\end{proof}

\newpage
\section{Improved Upper Bound}\label{sec:improved-upper}
We show in this section that the bound of Theorem \ref{thm:sample} can be significantly improved. The proof is based on two observations: 1) first we show that the discretized set of temperatures can be made smaller while keeping the additive error small and 2) the proof can be modified to improve the sample complexity using the new discretized set. We first start by explaining the former.

Our discretization is very similar to that of Theorem \ref{thm:sample} except that in the construction of the temperatures we allow for a multiplicative error of $\tilde \Theta(m^{-1/2})$ instead of $\Theta(1/m)$. This implies that the multiplicative distance between consecutive elements of $T$ is bounded by $1+\tilde \Theta(m^{-1/2})$ (instead of $1+\Theta(1/m)$). This obviously leaves us with a smaller set of temperatures which later can be used to improve the sample complexity but the crucial part of the analysis is to show this smaller set suffices to bound the error by a small $\epsilon$. We prove that for any sequence of temperatures $\para = \langle t_1, t_2, \ldots, t_m\rangle$, there exists another sequence $\para' = \langle t'_1, t'_2, \ldots, t'_m\rangle$ such that $t'_i \in T$ for all $1 \leq i \leq m$ and that the scores of $\para$ and $\para'$ are very close for every search graph. Obviously we set $t'_i$ as the largest element of $T$ which is not greater than $t_i$. Therefore we have $1 \leq t_i / t'_i \leq 1+\tilde O(m^{-1/2})$.

Let us introduce a \textit{deviation function} $\function(x):[0,1] \rightarrow [0,0.25] = x - x^2$ which plays an important role in the proof of Lemma \ref{lemma:challenge}. The proof of this section is rather mathematical and unintuitive. For more intuition and as to why such a strange function is necessary for the proof we encourage the reader to review Section \ref{section:introduction}. Before proceeding to the proof of Lemma \ref{lemma:challenge}, we state some properties of function $\function$ as auxiliary observations as well as some mathematical inequalities which are used in the proof of the bound. We defer the proofs of these observations to appendix.

\begin{observation}\label{observation:ajib1}
	Let $x,y \in [0,1]$ be two real values and $0 \leq p \leq 1$ be a multiplicative factor. Then we have:
	$$p\function(x)+(1-p)\function(y) \leq \function(px + (1-p)y) - \min\{p,1-p\}(x-y)^2.$$
\end{observation}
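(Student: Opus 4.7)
My plan is a direct algebraic verification that reduces the inequality to a scalar comparison between $p(1-p)$ and $\min\{p,1-p\}$.

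The first step is to compute the exact concavity gap
\[
f(px + (1-p)y) - pf(x) - (1-p)f(y)
\]
as a closed form rather than just a one-sided bound. Substituting $f(z) = z - z^2$, the linear parts $px + (1-p)y$ cancel between the two pieces, leaving $px^2 + (1-p)y^2 - (px + (1-p)y)^2$. Expanding the square as $p^2 x^2 + 2p(1-p)xy + (1-p)^2 y^2$ and collecting with the identities $p - p^2 = p(1-p)$ and $(1-p) - (1-p)^2 = p(1-p)$, everything telescopes into the clean equality
\[
f(px + (1-p)y) - pf(x) - (1-p)f(y) \;=\; p(1-p)\,(x-y)^2.
\]
This is the familiar concavity defect of the quadratic $f$, and the point is that it is an identity, not just a bound, so no slack is introduced here.

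The second step is the scalar comparison between $p(1-p)$ and $\min\{p,1-p\}$. The key factorization is $p(1-p) = \min\{p,1-p\} \cdot \max\{p,1-p\}$. Combined with the trivial estimate $\max\{p,1-p\} \leq 1$, this gives $p(1-p) \leq \min\{p,1-p\}$, which is precisely what is needed to control the concavity defect $p(1-p)(x-y)^2$ by $\min\{p,1-p\}(x-y)^2$. I want to flag here the subtlety that distinguishes this from the weaker coefficient $\min\{p,1-p\}^2$: if one instead bounds $\max\{p,1-p\}$ below by $\min\{p,1-p\}$, one only gets $p(1-p) \geq \min\{p,1-p\}^2$, which corresponds to the weaker form of the inequality. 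The sharper coefficient requires exploiting $\max\{p,1-p\} \leq 1$ rather than estimating it by $\min\{p,1-p\}$.

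Combining the identity from step one with this scalar estimate from step two, and rearranging, yields the stated inequality. The hard part, such as it is, lies not in any single calculation but in noticing that the concavity gap admits the exact factorization $p(1-p)(x-y)^2$, and then choosing the right side of the factor $p(1-p) = \min \cdot \max$ to bound, so that the $\min\{p,1-p\}$ coefficient comes out with only a single power rather than a square.
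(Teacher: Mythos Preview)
Your approach is essentially identical to the paper's: both compute the exact identity $f(px+(1-p)y) - pf(x) - (1-p)f(y) = p(1-p)(x-y)^2$ by expanding and collecting terms, and then invoke the scalar bound $p(1-p) \leq \min\{p,1-p\}$, which the paper justifies in the same way (both factors in $[0,1]$, so the product is at most either one). Your factorization $p(1-p)=\min\{p,1-p\}\cdot\max\{p,1-p\}$ together with $\max\{p,1-p\}\le 1$ is just a slightly more explicit phrasing of the paper's one-line justification.
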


Since $(x-y)^2$ is always non-negative therefore Observation \ref{observation:ajib1} implies that $p\function(x)+(1-p)\function(y) \leq \function(px + (1-p)y)$ always holds. By recursing on this inequality we can extend it to the case of more than two variables.
\begin{observation}[as a corollary of Observation \ref{observation:ajib1}]\label{observation:ajib2}
	Let $p_1, p_2, \ldots,p_k$ be non-negative probabilities whose total sum is equal to 1 and $x_1, x_2, \ldots,x_k$ be $k$ real values in range $[0,1]$ . Then we have:
	$$\sum p_i\function(x_i) \leq \function(\sum p_i x_i).$$
\end{observation}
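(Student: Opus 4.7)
The plan is a straightforward induction on $k$, treating Observation~\ref{observation:ajib2} as an iterated application of Observation~\ref{observation:ajib1}. Since Observation~\ref{observation:ajib1} states that $p\function(x)+(1-p)\function(y) \leq \function(px+(1-p)y) - \min\{p,1-p\}(x-y)^2$ and the correction term is non-negative, I will freely use the weakened form $p\function(x)+(1-p)\function(y) \leq \function(px+(1-p)y)$, which is precisely concavity of $\function$ applied to two points.

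The base case $k=2$ is exactly this weakened two-variable inequality. For the inductive step, assume the result for $k-1$ variables. Given $p_1,\dots,p_k$ non-negative with $\sum_{i=1}^k p_i = 1$ and $x_1,\dots,x_k\in[0,1]$, the case $p_k=1$ is trivial (both sides equal $\function(x_k)$), so assume $q := 1-p_k > 0$. Define the conditional weights $\tilde p_i = p_i/q$ for $i=1,\dots,k-1$; these are non-negative and sum to $1$. Let $\bar{x} := \sum_{i=1}^{k-1} \tilde p_i x_i$, which lies in $[0,1]$ as a convex combination of points in $[0,1]$. Since each $x_i \in [0,1]$, so is $\bar x$, and the inductive hypothesis applied to the $k-1$ variables $x_1,\dots,x_{k-1}$ with weights $\tilde p_1,\dots,\tilde p_{k-1}$ yields
\[
\sum_{i=1}^{k-1} \tilde p_i\, \function(x_i) \;\leq\; \function(\bar x).
\]

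Multiplying through by $q$ and adding $p_k\function(x_k)$ gives
\[
\sum_{i=1}^{k} p_i \function(x_i) \;=\; q\sum_{i=1}^{k-1} \tilde p_i\, \function(x_i) + p_k \function(x_k) \;\leq\; q\, \function(\bar x) + p_k\, \function(x_k).
\]
Now I apply Observation~\ref{observation:ajib1} (dropping the non-negative correction) to the two values $\bar x$ and $x_k$ with mixing probability $q$ to obtain $q\, \function(\bar x) + p_k\, \function(x_k) \leq \function\bigl(q\bar x + p_k x_k\bigr) = \function\bigl(\sum_{i=1}^k p_i x_i\bigr)$, which chains with the previous display to complete the induction.

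There is no real obstacle: the statement is Jensen's inequality for the concave function $\function(x)=x-x^2$, and Observation~\ref{observation:ajib1} already supplies concavity in the two-point form, so the induction is mechanical. The only mild care needed is handling the degenerate weight case $p_k=1$ (and symmetrically any $p_i=1$, which would kill the others), but this is immediate. I will therefore present the argument compactly as above rather than belaboring the base case or re-deriving concavity.
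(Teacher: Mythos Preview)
Your proof is correct and follows exactly the approach the paper indicates: the paper simply states that ``by recursing on this inequality we can extend it to the case of more than two variables,'' and your induction is precisely that recursion written out in full.
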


Also, we show that for two real numbers $0 \leq x,y \leq 1$ we have $|x-y| \geq |f(x)-f(y)|$.
\begin{observation}\label{obs:sadeh}
	For any two real numbers $0 \leq x,y \leq 1$ we have $|x-y| \geq |f(x)-f(y)|$.
\end{observation}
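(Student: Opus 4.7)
The plan is to reduce this to an elementary algebraic factorization. Writing $f(t) = t - t^2$, I would compute
\[
f(x) - f(y) \;=\; (x-y) - (x^2 - y^2) \;=\; (x-y)\bigl(1 - (x+y)\bigr),
\]
so taking absolute values gives $|f(x)-f(y)| = |x-y|\cdot |1 - x - y|$.

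The key step is then to bound the second factor. Since $x, y \in [0,1]$, the sum $x+y$ lies in $[0,2]$, and therefore $1 - (x+y) \in [-1, 1]$, which yields $|1 - x - y| \leq 1$. Combining this with the identity above immediately gives $|f(x) - f(y)| \leq |x - y|$.

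I do not foresee any obstacle here, since the inequality is really a one-line consequence of the fact that $f$ is $1$-Lipschitz on $[0,1]$ (equivalently, $|f'(t)| = |1 - 2t| \leq 1$ on the unit interval); the factorization route is simply the cleanest self-contained presentation and avoids invoking the mean value theorem.
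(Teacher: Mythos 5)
Your proposal is correct and follows exactly the same route as the paper's proof: factor $f(x)-f(y)=(x-y)\bigl(1-(x+y)\bigr)$ and bound $|1-(x+y)|\leq 1$ using $x+y\in[0,2]$. Nothing is missing.
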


\begin{observation}\label{obs:tavan}
	For any $0 \leq p \leq 1$ and any $0 \leq x $ we have $$p - p^{1+x} \leq x.$$
\end{observation}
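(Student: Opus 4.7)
The plan is to reduce the claim to two standard one-variable inequalities. Writing $p - p^{1+x} = p(1-p^x)$, the statement $p(1-p^x) \le x$ is trivial for $p = 0$, so I would fix $p \in (0,1]$ and $x \ge 0$ throughout.

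First I would control $1 - p^x$ by writing $p^x = e^{x \ln p}$ and applying the elementary inequality $1 - e^{-t} \le t$ with $t = x \ln(1/p) \ge 0$. This gives
\[
1 - p^x \;\le\; x \ln(1/p),
\]
so $p(1 - p^x) \le x \cdot (-p \ln p)$. The claim then reduces to the single-variable bound $-p \ln p \le 1$ on $(0,1]$.

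Second, I would justify $-p \ln p \le 1$ in one line using the standard concavity bound $\ln(1/p) \le 1/p - 1$ (obtained from $\ln(1+y) \le y$ with $y = 1/p - 1$). Multiplying by $p \ge 0$ gives $-p \ln p \le 1 - p \le 1$. Combining with the first step yields
\[
p - p^{1+x} \;=\; p(1-p^x) \;\le\; x \cdot (-p\ln p) \;\le\; x,
\]
which is the desired inequality.

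The proof has no genuine obstacle: both ingredients ($1 - e^{-t} \le t$ and $-p\ln p \le 1$) are textbook facts. The only thing to be slightly careful about is the boundary behavior at $p = 0$, where $p \ln p \to 0$, and at $p = 1$, where both sides vanish; these cases are handled either by direct substitution or by continuity. An equally short alternative would be to define $g(x) = p - p^{1+x}$, observe $g(0) = 0$, and bound $g'(x) = -p^{1+x}\ln p \le -p \ln p \le 1$, then integrate; I would present whichever version fits the paper's tone best.
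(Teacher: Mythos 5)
Your proof is correct, but it takes a different route from the one the paper uses for this observation. The paper fixes $x$ and optimizes over $p$ by calculus: it computes $\frac{d}{dp}\bigl[p - p^{1+x}\bigr] = 1-(1+x)p^{x}$, locates the critical point $p_0 = (1+x)^{-1/x}$, argues that the maximum is attained there (the function is nonnegative on $[0,1]$ and vanishes at both endpoints), and then bounds $p_0 - p_0^{1+x} \leq 1 - p_0^{x} = 1 - (1+x)^{-1} = x/(1+x) \leq x$. You instead factor $p - p^{1+x} = p(1-p^{x})$, apply $1-e^{-t}\leq t$ with $t = x\ln(1/p)$ to get $1-p^{x}\leq x\ln(1/p)$, and finish with $p\ln(1/p)\leq 1-p\leq 1$; both of your ingredients are standard and the boundary cases $p=0$ and $x=0$ are trivially fine. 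Interestingly, your decomposition is essentially the same one the paper itself uses to prove the refined Observation \ref{obs:tavan2} (the case $p\leq 1/2$ there is exactly your first step, stopping at $p - p^{1+x}\leq p\ln(1/p)\,x$), so your argument has the additional merit of unifying the two observations: the cruder bound $x$ follows from the sharper intermediate bound $x\,p\ln(1/p)$ by one extra line. The paper's calculus route, by contrast, yields the slightly stronger conclusion $p-p^{1+x}\leq x/(1+x)$ as a byproduct, though nothing in the paper uses that extra factor.
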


We also present a slightly modified version of Observation \ref{obs:tavan} which provides a better bound for limited $p$.
\begin{observation}\label{obs:tavan2}
	For any $0 < p < 1$ and any $0 \leq x \leq 1$ we have $$p - p^{1+x} \leq \max\{\ln 1/p, 1\} \min\{p,1-p\}x.$$
\end{observation}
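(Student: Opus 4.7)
The plan is to establish two separate upper bounds on $g(x) := p - p^{1+x}$ (viewed as a function of $x$ with $p$ fixed) and then combine them depending on whether $p \le 1/2$ or $p > 1/2$.

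First I would compute $g'(x) = p^{1+x}\ln(1/p)$. Since $p \le 1$ and $x \ge 0$, we have $p^{1+x} \le p$, so $g'(x) \le p\ln(1/p)$. Integrating from $0$, together with $g(0) = 0$, yields the first bound
\[
g(x) \le xp\ln(1/p).
\]

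Next, I would prove the auxiliary inequality $p\ln(1/p) \le 1-p$ on $(0,1]$ by setting $\phi(p) := (1-p) + p\ln p$, checking $\phi(1) = 0$, and noting $\phi'(p) = \ln p \le 0$ on $(0,1]$, which forces $\phi(p) \ge 0$. Chaining this with the first bound gives the second bound
\[
g(x) \le x(1-p).
\]

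To conclude, I split into two cases. If $p \le 1/2$, then $\min\{p, 1-p\} = p$, and the first bound gives $g(x) \le xp\ln(1/p) \le xp\max\{\ln 1/p, 1\}$, matching the claim. If $p > 1/2$, then $\min\{p, 1-p\} = 1-p$, and $\ln(1/p) < \ln 2 < 1$ forces $\max\{\ln 1/p, 1\} = 1$; the second bound then yields $g(x) \le x(1-p) = x\min\{p, 1-p\}\max\{\ln 1/p, 1\}$.

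The only subtle point is realizing that the crude first bound $g(x) \le xp\ln(1/p)$ already carries the right asymptotics near $p = 1$ (since $p\ln(1/p) \sim 1-p$ there), but it must be repackaged via the auxiliary inequality $p\ln(1/p) \le 1-p$ so that the factor $\min\{p, 1-p\}$ appears explicitly in both regimes. Once this repackaging is in hand, the case split is mechanical.
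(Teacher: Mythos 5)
Your proof is correct. It shares the paper's overall skeleton --- the case split at $p=1/2$ and the two target bounds $p-p^{1+x}\leq p(\ln 1/p)\,x$ and $p-p^{1+x}\leq (1-p)x$ --- but the way you obtain them is genuinely different. The paper derives the first bound from the inequality $1+y\leq e^y$ applied at $y=-x\ln(1/p)$, and then handles the $p\geq 1/2$ case with a separate optimization: it differentiates $p-p^{1+x}-(1-p)x$ \emph{in $p$}, locates the critical point at $p=1$, and argues the expression is maximized there at value $0$. You instead get the first bound by bounding $g'(x)=p^{1+x}\ln(1/p)\leq p\ln(1/p)$ and integrating in $x$, and then obtain the second bound for free by chaining with the elementary inequality $p\ln(1/p)\leq 1-p$ (proved by a one-line monotonicity check of $\phi(p)=1-p+p\ln p$). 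This is a cleaner and more unified route: a single master bound $g(x)\leq xp\ln(1/p)$ feeds both cases, and you avoid the paper's somewhat delicate second optimization in $p$. A side benefit of both arguments, made explicit by yours, is that the hypothesis $x\leq 1$ is never used, so the observation actually holds for all $x\geq 0$.
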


Now we are ready to prove Lemma \ref{lemma:challenge}.
\begin{lemma}\label{lemma:challenge}
	Let $\ii$ be an instance of the underlying problem and $\para = \langle t_1, t_2, \ldots, t_m\rangle$ and $\para' = \langle t'_1, t'_2, \ldots, t'_m\rangle$ be two cooling schedules such that $1 \leq t_i / t'_i \leq 1+\frac{\epsilon m^{-1/2}}{4 \log m}$ for some $\epsilon > 0$. Then we have $$\score(\ii, \para') \geq \score(\ii,\para) - \epsilon.$$
\end{lemma}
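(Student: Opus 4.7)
The plan is to set up a coupling between the two simulated-annealing runs on the same search graph, one using $\para$ and one using $\para'$. At each step $i$ both runs share the same random neighbor $u$ and the same uniform variable $U_i\in[0,1]$ for deciding acceptance. Writing $\delta_i := t_i/t'_i - 1 \in [0, \tfrac{\epsilon m^{-1/2}}{4\log m}]$ and $p = e^{-\Delta(E)/t_i}$ for the acceptance probability of a downhill neighbor under $\para$, one has $p' = e^{-\Delta(E)/t'_i} = p^{1+\delta_i} \leq p$, so the coupled runs diverge at step $i$ through $u$ iff $U_i \in (p', p]$, i.e., with probability $p - p^{1+\delta_i}$. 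By Observation~\ref{obs:tavan2} this is at most $\max\{\ln(1/p), 1\}\min\{p, 1-p\}\delta_i$, which shrinks to zero when $p$ is close to $0$ or $1$.

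Let $s_v^i$ denote the probability of landing on an acceptable node after running the tail schedule $\langle t_{i+1},\ldots,t_m\rangle$ from state $v$ under $\para$, and define $\tilde s_v^i$ analogously for $\para'$. The goal is to bound $\sum_v \mu_0(v)(s_v^0 - \tilde s_v^0)$ where $\mu_0$ is the initial-state distribution, and I plan to prove this by backward induction on $i$ from $m$ down to $0$. The naive coupling bound charges a unit loss per divergence, giving roughly $m \cdot \delta_i = \tilde O(\sqrt{m})\cdot\epsilon$, which is too weak. The refinement, motivated by the toy binary-tree analysis in Section~\ref{section:introduction}, is that a divergence at step $i$ through neighbor $u$ only costs the gap $|s_u^{i+1} - s_v^{i+1}|$ of continuation scores, and Observation~\ref{observation:ajib1} supplies a quadratic deviation credit $\min\{p,1-p\}(s_u^{i+1}-s_v^{i+1})^2$ whose shape matches the per-step divergence bound from Observation~\ref{obs:tavan2}.

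Concretely, the induction carries a potential that combines the running score gap with a weighted sum of expected deviations $\function(s_v^i) = s_v^i - (s_v^i)^2$; Observation~\ref{observation:ajib2} allows the deviation to propagate through each step's mixture of neighbor scores, and Observation~\ref{obs:sadeh} guarantees that differences in $\function$ are dominated by raw score differences so the recursion closes. Each inductive step adds a divergence piece bounded by Observation~\ref{obs:tavan2} and absorbs a credit piece released by Observation~\ref{observation:ajib1}; summing over $i$ with the toy-tree estimate that the total accumulated deviation along a random walk is $O(\sqrt{m}\log m)$ yields a telescoping bound of the form $\delta_{\max}\cdot O(\sqrt{m}\log m)$, which by the choice $\delta_{\max}\leq \tfrac{\epsilon m^{-1/2}}{4\log m}$ is at most $\epsilon$.

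The main obstacle is bookkeeping: choosing a potential that simultaneously telescopes under the inductive step via Observation~\ref{observation:ajib2}, absorbs the factor $\max\{\ln(1/p),1\}$ from Observation~\ref{obs:tavan2} into the $4\log m$ in the denominator of the hypothesis on $\delta_i$, and stays non-negative throughout. A further subtlety is the edge case $p \approx 0$ or $p \approx 1$: using the coarse Observation~\ref{obs:tavan} in place of Observation~\ref{obs:tavan2} would leave an irreducible per-step error of order $\delta_i$; it is essential to exploit the $\min\{p,1-p\}$ factor, which damps these near-deterministic contributions and ultimately lets the total error be controlled by $\epsilon$.
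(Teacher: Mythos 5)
Your plan is essentially the paper's own proof: backward induction on the step index with a strengthened hypothesis whose potential is $\epsilon'\bigl[\function(s)+\frac{m-k}{m}\bigr]$, trading the quadratic credit $\min\{p,1-p\}(a-b)^2$ released by Observation~\ref{observation:ajib1} against the divergence probability $p-p'$ bounded via Observation~\ref{obs:tavan2}, with Observations~\ref{observation:ajib2} and~\ref{obs:sadeh} closing the recursion and a three-way case split on whether $p$ or $|a-b|$ is below $m^{-1/2}$ handling the near-deterministic edge cases you flag. The only cosmetic difference is that the paper keeps the accounting entirely local (a $1/m$ slack per step) rather than invoking a global $O(\sqrt{m}\log m)$ deviation sum, but the mechanism is identical.
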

\begin{proof}
	
	 Our proof is based on induction. Define $\para^{+k}$ ($\para'^{+k}$) to be a cooling schedule starting from element $k+1$ of $\para$ ($\para'$) ($\para^{+0} = \para$ and $\para'^{+0} = \para'$). We denote the vertices of the search graph by $u_1, u_2, \ldots$ (their number may be exponentially large) and define $\score_{u_i}(\ii,\para^{+k})$ as the average score we obtain if we initiate the search on node $u_i$ and run the algorithm using cooling schedule $\para^{+k}$. When $k = m$, then $\para^{+k}$ is empty which means $\score_{u_i}(\ii,\para^{+k})$ is either equal to 0 or 1 depending on whether $u_i$ is an acceptable solution node in the search graph. A similar notation also holds for $\para'$. Our aim is to prove that for any $u_i$ and $k$ we have $\score_{u_i}(\ii,\para'^{+k}) \geq \score_{u_i}(\ii,\para^{+k}) - \epsilon$ which immediately implies $\score(\ii,\para') \geq \score(\ii,\para) - \epsilon$. However, to use induction, we strengthen the hypothesis. We show that 
	\begin{equation}\label{eq:gher}
	\score_{u_i}(\ii,\para'^{+k}) \geq \score_{u_i}(\ii,\para^{+k}) - \epsilon'\left[\function(\score_{u_i}(\ii,\para^{+k}))+\frac{m-k}{m} \right]
	\end{equation}
	where $\epsilon' = \epsilon/2$. Notice that since the value of $\function$ is always in range $[0,0.25]$, Inequality \eqref{eq:gher} is already stronger than what we wish to prove in the end. The base case is when $k = m$ which means the random walk has terminated and that $\score_{u_i}(\ii,\para'^{+k}) = \score_{u_i}(\ii,\para^{+k})$. Thus, for a fixed $k < m$, provided that Inequality \eqref{eq:gher} holds for any vertex $u_i$ and $k' = k+1$, we show Inequality \eqref{eq:gher} holds for any pair $(u_i,k)$.

	Recall that in every step of the \textsf{SA} algorithm, we first randomly draw an outgoing edge of the current node and then decide whether we traverse through that edge or not. Therefore $$\score_{u_i}(\ii,\para'^{+k}) = \mathbb{E}_{u_j \sim N(u_i)}[\score_{u_i,u_j}(\ii,\para'^{+k})]$$ where $N(u_i)$ denotes the set of neighbors of vertex $u_i$ and $\score_{u_i,u_j}(\ii,\para'^{+k})$ is the score of node $u_i$ for the event that the drawn edge is $(u_i, u_j)$.
	
	Let us first fix an edge $(u_i,u_j)$ and introduce an edge variant of Inequality \eqref{eq:gher}, namely Inequality \eqref{eq:abargher} for which we give a proof in the following.
	\begin{equation}\label{eq:abargher}
	\score_{u_i,u_j}(\ii,\para'^{+k}) \geq \score_{u_i,u_j}(\ii,\para^{+k}) - \epsilon'\left[\function(\score_{u_i,u_j}(\ii,\para^{+k}))+\frac{m-k}{m} \right].
	\end{equation}
	For simplicity of notation, let us define $a = \score_{u_i}(\ii,\para^{+(k+1)})$ and $b = \score_{u_j}(\ii,\para^{+(k+1)})$. Similarly, define $a' = \score_{u_i}(\ii,\para'^{+(k+1)})$ and $b' = \score_{u_j}(\ii,\para'^{+(k+1)})$. If the energy of node $u_j$ is more than the energy of node $u_i$ then the decision is deterministic regardless of the temperature and we have 
	\begin{equation*}
	\score_{u_i,u_j}(\ii,\para^{+k}) = b
	\end{equation*}
	and 
	\begin{equation*}
	\score_{u_i,u_j}(\ii,\para'^{+k}) = b'.
	\end{equation*}
	This implies that 
	\begin{align}
		\score_{u_i,u_j}(\ii,\para^{+k}) - \score_{u_i,u_j}(\ii,\para'^{+k}) &= b - b'\nonumber\\
		&\leq  \epsilon'\left[\function(b)+\frac{m-k-1}{m} \right]& \label{equation:saeed1}\\
		&=  \epsilon'\left[\function(\score_{u_i,u_j}(\ii,\para^{+k}))+\frac{m-k-1}{m} \right]\nonumber\\
		&\leq   \epsilon'\left[\function(\score_{u_i,u_j}(\ii,\para^{+k}))+\frac{m-k}{m} \right]\nonumber
	\end{align}
	where Inequality \eqref{equation:saeed1} follows from the induction hypothesis. This basically means that 
	$$\score_{u_i,u_j}(\ii,\para'^{+k}) \geq \score_{u_i,u_j}(\ii,\para^{+k}) - \epsilon'\left[\function(\score_{u_i,u_j}(\ii,\para^{+k}))+\frac{m-k}{m} \right]$$ which is desired. Thus, it only remains to prove Inequality \eqref{eq:abargher} for the cases that the energy decreases. This is the only case where $\para$ and $\para'$ behave differently. In this case, depending the temperatures $t_{k+1}$ and $t'_{k+1}$ our \textsf{SA} algorithm moves to node $u_j$ or stays at node $u_i$. Let $p$ be the probability of rejecting the downhill move to node $u_j$ when the temperature is equal to $t_{k+1}$ and $p'$ the same probability for the case that the temperature is $t'_{k+1}$. Recall that the acceptance probabilities are equal to $1-e^{-\Delta(E)/t_{k+1}}$ and $1-e^{-\Delta(E)/t'_{k+1}}$ (for $\para$ and $\para'$ respectively) where $\Delta(E)$ is the difference between the energies of nodes $u_i$ and $u_j$. Thus, $p = e^{-\Delta(E)/t_{k+1}}$ and $p' = e^{-\Delta(E)/t'_{k+1}}$ and since $1 \leq t_{k+1}/t'_{k+1} \leq 1+\frac{\epsilon m^{-1/2}}{4 \log m}$ then we have 
	$$p^{1+\frac{\epsilon m^{-1/2}}{4 \log m}} \leq p' \leq p.$$
	
	Note that $\score_{u_i,u_j}(\ii,\para^{+k})$ and $\score_{u_i,u_j}(\ii,\para'^{+k})$ can be formulated as 
	\begin{equation}
	\score_{u_i,u_j}(\ii,\para^{+k}) = p a + (1-p) b\label{eq:bikhod}
	\end{equation}
	 and 
	 \begin{equation}
	 \score_{u_i,u_j}(\ii,\para'^{+k}) = p' a' + (1-p') b'\label{eq:bikhod'}
	 \end{equation}
	  due to the acceptance probabilities. Thus, we have:
	\begingroup
	\allowdisplaybreaks
	\begin{align*}
	\score_{u_i,u_j}(\ii,\para'^{+k}) =\hspace{0.2cm}&p' a' + (1-p') b'\nonumber \\
	\geq\hspace{0.2cm}& p' \left[a - \epsilon' [\function(a) + \frac{m-k-1}{m}] \right] &\text{by induction hypothetis}\nonumber \\
	& +(1-p') \left[b - \epsilon' [\function(b) + \frac{m-k-1}{m}] \right] \nonumber\\
	=\hspace{0.2cm}& p' \left[a - \epsilon' \function(a)  \right] \nonumber \\
	& +(1-p') \left[b - \epsilon' \function(b)  \right] \nonumber\\
	& -\epsilon'\frac{m-k-1}{m} \nonumber\\
	=\hspace{0.2cm}& p \left[a - \epsilon' \function(a)  \right] \nonumber \\
	& +(1-p) \left[b - \epsilon' \function(b)  \right] \nonumber\\
	& -\epsilon'\frac{m-k-1}{m} \nonumber\\
	&-(p-p')([a - \epsilon' f(a)] - [b - \epsilon' f(b)])\\
	\geq\hspace{0.2cm}& p \left[a - \epsilon' \function(a)  \right] & p \geq p'\nonumber \\
	& +(1-p) \left[b - \epsilon' \function(b)  \right] \nonumber\\
	& -\epsilon'\frac{m-k-1}{m} \nonumber\\
	&-(p-p')(|a-b|+\epsilon'|f(a) - f(b)|)\\
	\geq\hspace{0.2cm}& p \left[a - \epsilon' \function(a)  \right] & p \geq p'\nonumber \\
	& +(1-p) \left[b - \epsilon' \function(b)  \right] &\text{and } \epsilon' \leq 1\nonumber\\
	& -\epsilon'\frac{m-k-1}{m} \nonumber\\
	&-(p-p')(|a-b|+|f(a) - f(b)|)\\
	\geq\hspace{0.2cm}& p \left[a - \epsilon' \function(a)  \right] & p \geq p' \nonumber \\
	& +(1-p) \left[b - \epsilon' \function(b)  \right] & \text{and } |a-b| \geq |f(a)-f(b)| \nonumber\\
	& -\epsilon'\frac{m-k-1}{m} & \text{(Observation \ref{obs:sadeh})} \nonumber\\
	&-2(p-p')|a-b|\\
	\geq\hspace{0.2cm}& [pa + (1-p)b] - \epsilon' f([pa + (1-p)b]) & \text{Observation \ref{observation:ajib1}}\nonumber \\
	&+ \epsilon'\min\{p,1-p\}(a - b)^2 \nonumber\\
	& -\epsilon'\frac{m-k-1}{m} \nonumber\\
	&-2(p-p')|a-b|\\
	=\hspace{0.2cm}&  \score_{u_i,u_j}(\ii,\para^{+k}) - \epsilon' f(\score_{u_i,u_j}(\ii,\para^{+k})) & \text{by Equation \eqref{eq:bikhod}}\nonumber \\
	&+ \epsilon'\min\{p,1-p\}(a - b)^2 \nonumber\\
	& -\epsilon'\frac{m-k-1}{m} \nonumber\\
	&-2(p-p')|a-b|\\
	=\hspace{0.2cm}&  \score_{u_i,u_j}(\ii,\para^{+k}) - \epsilon' \left[f(\score_{u_i,u_j}(\ii,\para^{+k})) + \frac{m-k}{m} \right]\nonumber \\
	&+ \epsilon'\left[\min\{p,1-p\}(a - b)^2 + 1/m \right]\nonumber\\
	&- 2(p-p')|a-b| \nonumber
	\end{align*}
	\endgroup
which is exactly the same as \eqref{eq:abargher} except for additional additive expressions of the last two lines. Thus, to complete the proof of Inequality \eqref{eq:abargher} we need to show 
\begin{equation}
\epsilon'\left[\min\{p,1-p\}(a-b)^2 + 1/m\right] \geq 2(p-p')|a-b|.\label{eq:kasif}
\end{equation}
Based on the values of $p$ and $a-b$ we consider the following three cases separately:

\begin{enumerate}
	\item $0 \leq |a-b| \leq m^{-1/2}$
	\item $0 \leq p \leq m^{-1/2}$
	\item $m^{-1/2} \leq |a-b| \leq 1 $ and $m^{-1/2} \leq p \leq 1$
\end{enumerate}

	\noindent \textbf{Case (i): $0 \leq |a-b| \leq m^{-1/2}$:} 
By Observation \ref{obs:tavan} and the fact that $p^{1+\frac{\epsilon m^{-1/2}}{4 \log m}} \leq p' \leq p$ we can imply $p-p' \leq \frac{\epsilon m^{-1/2}}{4 \log m}$. Therefore the right hand side of Inequality \eqref{eq:kasif} is bounded by
\begin{align*}
2(p-p')\big|a-b\big| &\leq 2\frac{\epsilon m^{-1/2}}{4 \log m}\big|a-b\big|\\
&= \frac{\epsilon m^{-1/2}}{2 \log m}\big|a-b\big|\\
&\leq \frac{\epsilon m^{-1/2}}{2\log m}m^{-1/2} & \text{since $|a-b| \leq m^{-1/2}$}\\
&= \frac{\epsilon}{2m\log m}\\
&= \frac{\epsilon'}{m \log m}\\
&\leq \frac{\epsilon'}{m} 
\end{align*}
which implies Inequality \eqref{eq:kasif} since the left hand side is at least $\epsilon'/m$.\\[0.5cm]

\noindent \textbf{Case (ii): $0 \leq p \leq m^{-1/2}$:} Let us first give a bound on the value of $p-p'$.
 \begin{align}
p-p' \leq& p-p^{1+\frac{\epsilon m^{-1/2}}{4 \log m}} \nonumber\\
\leq& \max\{\ln 1/p,1\}\min\{p,1-p\}\frac{\epsilon m^{-1/2}}{4 \log m}&\text{by Observation \ref{obs:tavan2}}\label{eq:balai}\\
\leq& (\ln \sqrt{m})m^{-1/2}\frac{\epsilon m^{-1/2}}{4 \log m}&\text{\eqref{eq:balai} is maximized for $p = m^{-1/2}$}\nonumber\\
\leq& m^{-1/2}\frac{\epsilon m^{-1/2}}{4}&\text{since $\log m \geq \ln \sqrt{m}$}\nonumber\\
=& \frac{\epsilon}{4m}&\nonumber\\
=& \frac{\epsilon'}{2m}&\text{since $\epsilon' = \epsilon/2$}\nonumber
\end{align}
 Also, $|a-b|$ is bounded by $1$ so the the right hand side is bounded by $\epsilon'/m$. Since the left hand side is at least $\epsilon'/m$ then Inequality \eqref{eq:kasif} holds.\\[0.5cm]
	
\noindent \textbf{Case (iii): $m^{-1/2} \leq |a-b| \leq 1$ and $m^{-1/2} \leq p \leq 1$:} In this case, we leverage Observation \ref{obs:tavan2} to show that 
 \begin{align*}
 p-p' \leq& p-p^{1+\frac{\epsilon m^{-1/2}}{4 \log m}} &\\ 
 \leq& \max\{\ln 1/p,1\}\min\{p,1-p\}\frac{\epsilon m^{-1/2}}{4 \log m}&\text{by Observation \ref{obs:tavan2}}\\
 \leq& (\ln \sqrt{m})\min\{p,1-p\}\frac{\epsilon m^{-1/2}}{4 \log m}&\text{since $p \geq m^{-1/2}$}\\
 \leq&\min\{p,1-p\}\frac{\epsilon m^{-1/2}}{4}&\text{since $\log m \geq \ln \sqrt{m}$}\\
 =&\min\{p,1-p\}\frac{\epsilon' m^{-1/2}}{2}&\text{since $\epsilon' = \epsilon/2$}\\
 \end{align*}
Therefore, the right hand side of Inequality \eqref{eq:kasif} can be bounded by 
\begin{align*}
2(p-p')|a - b| \leq & 2\min\{p,1-p\}\frac{\epsilon' m^{-1/2}}{2}|a-b| \\
= & \min\{p,1-p\}(\epsilon' m^{-1/2})|a-b| \\
\leq & \min\{p,1-p\}(\epsilon' m^{-1/2})|a-b|\frac{|a-b|}{m^{-1/2}} & \text{since $|a-b| \geq m^{-1/2}$}  \\
= & \epsilon' \min\{p,1-p\} |a-b|^2 \\
= & \epsilon' \min\{p,1-p\} (a-b)^2
\end{align*}
which proves Inequality \eqref{eq:kasif} since the left hand side is lower bounded by $\epsilon' \min\{p,1-p\} (a-b)^2$.\\[0.5cm]
	
So far, we have proven that Inequality \eqref{eq:abargher} holds for every pair of vertices $(u_i,u_j)$. All that remains is to show that Inequality \eqref{eq:abargher} implies Inequality \eqref{eq:gher}. To show this, we point out that by definition we have $$\score_{u_i}(\ii,\para'^{+k}) = \mathbb{E}_{u_j \sim N(u_i)}[\score_{u_i,u_j}(\ii,\para'^{+k})].$$
By applying Inequality \eqref{eq:abargher} we obtain:
\begingroup
\allowdisplaybreaks
\begin{align}
\score_{u_i}(\ii,\para'^{+k}) =& \mathbb{E}_{u_j \sim N(u_i)}\left[\score_{u_i,u_j}(\ii,\para'^{+k})\right]\nonumber\\
\geq & \mathbb{E}_{u_j \sim N(u_i)}\left[\score_{u_i,u_j}(\ii,\para^{+k}) - \epsilon'[\function(\score_{u_i,u_j}(\ii,\para^{+k}))+\frac{m-k}{m}] \right]\nonumber\\
= &  \mathbb{E}_{u_j \sim N(u_i)}\left[\score_{u_i,u_j}(\ii,\para^{+k})\right]\nonumber\\
& - \mathbb{E}_{u_j \sim N(u_i)}\left[\epsilon'[\function(\score_{u_i,u_j}(\ii,\para^{+k}))+\frac{m-k}{m}] \right]\nonumber\\
= &  \mathbb{E}_{u_j \sim N(u_i)}\left[\score_{u_i,u_j}(\ii,\para^{+k})\right]\nonumber\\
& - \epsilon' \mathbb{E}_{u_j \sim N(u_i)}\left[\function(\score_{u_i,u_j}(\ii,\para^{+k}))\right]\nonumber\\
& -\epsilon' \frac{m-k}{m}\nonumber\\
\geq &  \mathbb{E}_{u_j \sim N(u_i)}\left[\score_{u_i,u_j}(\ii,\para^{+k})\right]\label{eq:last}\\
& - \epsilon' \function\left(\mathbb{E}_{u_j \sim N(u_i)}[\score_{u_i,u_j}(\ii,\para^{+k})]\right)\nonumber\\
&-\epsilon' \frac{m-k}{m}&\nonumber\\
= &  \mathbb{E}_{u_j \sim N(u_i)}\left[\score_{u_i,u_j}(\ii,\para^{+k}) - \epsilon' \score_{u_i,u_j}(\ii,\para^{+k})\right]\nonumber\\
&-\epsilon' \frac{m-k}{m}&\nonumber\\
= &  \score_{u_i}(\ii,\para^{+k})) - \epsilon' \left[\function(\score_{u_i}(\ii,\para^{+k}))+ \frac{m-k}{m}\right]& \nonumber
\end{align}
\endgroup
which implies Inequality \eqref{eq:gher}. Inequality \eqref{eq:last} follows from Observation \ref{observation:ajib2}.
\end{proof}

Lemma \ref{lemma:challenge} suggests that we can have a discretized temperature set $T$ with size $O(\sqrt{m} \log m\log \emax)$ that can make  an almost optimal cooling schedule for any search graph. If we naively count the number of possible cooling schedules, then we obtain a bound of $(\sqrt{m} \log m\log \emax)^m$ which gives us the same upper bound as Corollary \ref{color:sample}. However, a better analysis can show that the number of possible cooling schedules limited to the temperatures in $T$ is bounded by
 $$(\sqrt{m} \log m \log \emax)^{\sqrt{m} \log m \log \emax} \binom{m}{\sqrt{m} \log m \log \emax}$$
 which gives us a sample complexity of $O_{\epsilon}(\sqrt{m} (\log m + \log \emax)).$

\begin{theorem}\label{theorem:main}
The sample complexity of computing an $\varepsilon$-approximately optimal cooling schedule with length $m$ is bounded by $O_{\epsilon}(\sqrt{m} (\log m + \log \emax)).$.
\end{theorem}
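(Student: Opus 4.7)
The plan is to proceed in close parallel to the proof of Theorem~\ref{thm:sample} (and its improvement in Corollary~\ref{color:sample}), but now leveraging the sharper tolerance bound from Lemma~\ref{lemma:challenge}. First I would construct the discretized temperature set $T$. Mimicking the construction used for Corollary~\ref{color:sample}, I would set $\delta = \Theta\!\bigl(\tfrac{\varepsilon m^{-1/2}}{\log m}\bigr)$ instead of $\Theta(\varepsilon/m)$, define $J = \{1, (1+\delta), (1+\delta)^2, \ldots, \emax\}$ and $T = \bigcup_{j \in J} T_j$ exactly as in \eqref{Defn:Tj}. With this choice, $|T| = O\!\bigl(\tfrac{\log\emax}{\delta^2}\bigr) = O_{\varepsilon}\!\bigl(\sqrt{m}\,\log\emax\,\log m\bigr)$, and any real temperature $t$ has a nearest neighbour $\tilde t \in T$ with $1 \le t/\tilde t \le 1 + \tfrac{\varepsilon m^{-1/2}}{4\log m}$ (by the same nearest-neighbour argument used in Corollary~\ref{color:sample}, now with the coarser $\delta$).

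Second, I would apply Lemma~\ref{lemma:challenge} directly: given any schedule $\para = \langle t_1,\ldots,t_m\rangle$, replacing each $t_i$ by $\tilde t_i \in T$ yields a schedule $\para' \in T^m$, still non-increasing (since rounding each entry to its nearest lower $T$-neighbour preserves monotonicity), with $|\score(\ii,\para) - \score(\ii,\para')| \le \varepsilon/3$ for every instance $\ii$. Thus the search can be restricted to non-increasing sequences in $T^m$ while losing only $\varepsilon/3$ in the score, as required for the argument sketch of Theorem~\ref{thm:sample}.

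Third, and this is where the sample complexity improvement materializes, I would count the number of valid schedules more carefully. The space of feasible schedules is the set of non-increasing sequences of length $m$ with entries in $T$. Any such sequence is fully described by the positions of its at most $|T|-1$ strict descent points together with the $|T|$ temperature values it takes on the resulting plateaus, yielding at most $|T|^{|T|}\binom{m}{|T|}$ sequences, i.e.\ at most
\[
\bigl(O_{\varepsilon}(\sqrt{m}\log m \log\emax)\bigr)^{O_{\varepsilon}(\sqrt{m}\log m\log\emax)}\binom{m}{O_{\varepsilon}(\sqrt{m}\log m\log\emax)}.
\]
Taking the logarithm of this count gives $O_{\varepsilon}\!\bigl(\sqrt{m}\,(\log m + \log\emax)\bigr)$ up to logarithmic factors that I fold into the $O_{\varepsilon}$ notation.

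Finally, I would replay the Hoeffding plus union bound step from the proof of Theorem~\ref{thm:sample} verbatim, but over this smaller hypothesis class: with $n$ i.i.d.\ samples we obtain uniform convergence over $T^m$ up to $\varepsilon/3$ as long as $n = \Omega\!\bigl(\varepsilon^{-2}\log|T^m|\bigr)$, and then the same four-term chain inequality shows the empirical minimizer $\para_{\opts}$ is $\varepsilon$-approximately optimal in $\mathbb{R}_{\ge 0}^m$. Plugging in our bound on $\log|T^m|$ yields the claimed sample complexity $O_{\varepsilon}\!\bigl(\sqrt{m}(\log m + \log\emax)\bigr)$. The main obstacle is conceptually already behind us, namely Lemma~\ref{lemma:challenge}, which permits the coarser $\delta = \tilde\Theta(m^{-1/2})$; the only remaining subtlety is ensuring that rounding preserves monotonicity and that the counting of non-increasing sequences is carried out tightly enough to avoid an extra factor of $m$ that a naive $|T|^m$ bound would produce.
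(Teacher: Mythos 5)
Your overall architecture matches the paper's: invoke Lemma~\ref{lemma:challenge} to justify a coarser discretization, count the non-increasing schedules over $T$ as $|T|^{|T|}\binom{m}{|T|}$ rather than $|T|^m$, and rerun the Hoeffding-plus-union-bound argument of Theorem~\ref{thm:sample} over the smaller hypothesis class. The counting step, the log of the count, and the observation that rounding each entry down preserves monotonicity are all correct and are exactly what the paper does. The gap is in your construction of $T$. Lemma~\ref{lemma:challenge} requires that every real temperature $t$ have a representative $\tilde t \in T$ with $1 \le t/\tilde t \le 1+\frac{\varepsilon m^{-1/2}}{4\log m}$, i.e.\ \emph{multiplicative} closeness of the temperatures themselves. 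The construction \eqref{Defn:Tj} does not deliver this: it places the points $j/\ln(1/(i\delta))$ so that the acceptance probabilities $e^{-j/t}$ form an \emph{additive} $\delta$-net of $[0,1]$, and near the top of that range consecutive temperatures are multiplicatively far apart. Concretely, for $i\delta = 1-2\delta$ versus $(i+1)\delta = 1-\delta$ the ratio of the corresponding temperatures is $\ln(1/(1-2\delta))/\ln(1/(1-\delta)) \approx 2$, and temperatures above the largest element of $T_j$ have no multiplicatively close neighbour at all; so the hypothesis of Lemma~\ref{lemma:challenge} simply fails for the $\tilde t$ you produce, and the ``same nearest-neighbour argument as Corollary~\ref{color:sample}'' does not establish the bound you assert.

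There is also an arithmetic problem even granting the construction: with $\delta = \Theta(\varepsilon m^{-1/2}/\log m)$ you have $1/\delta^2 = \Theta(m\log^2 m/\varepsilon^2)$, so $|T| = O(\log\emax/\delta^2)$ is $\tilde\Theta_{\varepsilon}(m\log\emax)$, not $\tilde O_{\varepsilon}(\sqrt{m}\log\emax)$; plugged into $\log\bigl(|T|^{|T|}\binom{m}{|T|}\bigr)$ this gives back $\tilde\Theta(m)$ and no improvement over Corollary~\ref{color:sample}. What the paper intends, and what Lemma~\ref{lemma:challenge} is tailored to, is a geometric grid directly on the temperatures: take $T$ to be a geometric progression with ratio $1+\frac{\varepsilon m^{-1/2}}{4\log m}$ spanning the effective range $[t_{\min},t_{\max}]$ outside of which every acceptance probability $e^{-\Delta(E)/t}$ with $\Delta(E)\in\{1,\ldots,\emax\}$ is within $\varepsilon/m$ of $0$ or of $1$ (so those extreme temperatures can be rounded to the endpoints with the per-step additive argument of Theorem~\ref{thm:sample}). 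Since $\log(t_{\max}/t_{\min}) = O(\log\emax + \log(m/\varepsilon))$, this gives $|T| = O_{\varepsilon}(\sqrt{m}\,\log m\,(\log\emax+\log m))$ together with the required multiplicative guarantee, after which the remainder of your argument goes through and yields the claimed bound.
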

\newpage
\section{A Computational Model to Evaluate \textsf{SA} Algorithms}\label{sec:computational}
In this section, we introduce a model to evaluate the performance of an \textsf{SA} algorithm. The purpose of this model is to study the computational aspects of finding an optimal cooling schedule. We call this model \textit{the monotone stationary graph}. For simplicity, (and indeed without loss of generality as we show in Section \ref{sec:discretization}\footnote{A loss of $\epsilon > 0$ is incurred to the score of any algorithm in the discretized setting.}), we narrow down the space of the temperatures used in any algorithm to a finite set $T = \{d_{1},d_{2},d_{3},\ldots,d_{|T|}\}$. Therefore from here on, we focus our attention on the discretized temperatures in $T$ and assume that any algorithm (including any optimal solution) only uses temperatures in set $T$. Recall that every instance $\ii$ of the underlying problem translates to a search graph for our \textsf{SA} algorithm. The goal of the the monotone stationary graph is to represent the search graph in a compact manner so that we can evaluate the performance of a cooling schedule on each instance. Thus, monotone stationary graph is made by the search graph and may differ between different instances of the problem.

Recall that every state of an \textsf{SA} algorithm $\mathcal{A}$ corresponds to a distribution $\dtwo^{\mathcal{A}}$ over the vertices of the search graph. Initially, $\dtwo^{\mathcal{A}}$ is the same for all algorithms and shows the probability distribution over the vertices on which our algorithm initiates the search. One example is when our algorithm starts with a fixed node of the search graph in which case $\dtwo^{\mathcal{A}}$ is a deterministic distribution. Alternatively, $\dtwo^{\mathcal{A}}$ may be a uniform distribution when our algorithm starts with a random node of the search graph. As we perform more steps of the algorithm, $\dtwo^{\mathcal{A}}$ changes based on the criteria of the random walk and we hope that the correlation between $\dtwo^{\mathcal{A}}$ and the energy of the nodes becomes stronger. Ideally, we would like our algorithm to end up with a distribution $\dtwo^{\mathcal{A}}$ highly concentrated on the solution nodes.

Let us for every temperature $t \in |T|$, define a stationary distribution $\stationary_t$  which is a distribution of probabilities over the nodes of the search graph that an \textsf{SA} algorithm converges to after infinitely many steps of running on temperature $t$. Stationary distributions of simulated annealing are important and have been subject to a plethora of studies in the past decades ~\cite{eglese1990simulated,van1992job,mitra1986convergence,serafini1994simulated,henderson2003theory,aarts1987simulated}.
Intuitively, stationary distributions have positive correlation with the score of the nodes and as the temperature drops we expect the stationary distributions to provide higher (average) scores. Thus, the ideal case is when the state of our algorithm is very close to the stationary distribution for the lowest temperature for which the average score is the highest. The computational barrier is the \textit{convergence rate} of the distributions. An algorithm that starts from an initial distribution and runs on a temperature $t$ may need exponentially many steps to converge to the stationary distribution $\stationary_t$ whereas an algorithm that first reaches a stationary distribution $\stationary_{t'}$ for a higher temperature and then attempts to reach $\stationary_t$ may only need a small number of steps. This is perhaps best shown by the work of Wegener~\cite{wegener2005simulated} wherein the author showed that for the minimum spanning tree problem, a cooling schedule that gradually decreases the temperature is exponentially faster than a cooling schedule that repeats a certain temperature.   Thus, moving to intermediate stationary distributions may significantly improve the convergence rate of the algorithm.

Motivated by the above argument, we consider a model in which the states of any algorithm move between the stationary distributions. Let $d_1 > d_2 > d_3 > \ldots > d_{|T|}$ be all the distinct temperatures in $T$. We construct a graph with $|T|+1$ nodes $v_0,v_1, v_2, \ldots, v_{|T|}$ such that node $v_i$ corresponds to the set of all states close enough to the stationary distribution of temperature $d_i$. Also, $v_0$ is a special node corresponding to the initial distribution of the starting nodes. We assume that for every node $v_i$, the distances to the stationary distribution of temperature $d_i$ are so small such that the difference in the performance is negligible. Due to this assumption, our model features monotonicity. More precisely, a cooling schedule that repeats a temperature $t$ for 100 times is no better than the same cooling schedule that repeats $t$ for 101 times.

In our model, we add edges between the nodes to denote transitions between stationary distributions. The labels of these edges indicate the number of steps needed for transition between a node $v_i$ to a node $v_j$. 

\begin{figure}[H]\centering
	\begin{tikzpicture}[node distance=1.3cm,>=stealth',bend angle=45,auto]
	
	\tikzstyle{place}=[circle,thick,draw=blue!75,fill=blue!20,minimum size=6mm]
	\tikzstyle{red place}=[place,draw=red!75,fill=red!20]
	\tikzstyle{transition}=[rectangle,thick,draw=black!75,
	fill=red,minimum size=4mm]
	
	\tikzstyle{every label}=[red]
	
	\begin{scope}
	  [align=center,node distance=5cm]  
	
	\node [place] (w1) {$v_i$};
	\node [place] (w2) [right of=w1] {$v_j$} edge [pre] node[swap] {$\langle d_j, d_j, d_j, \ldots, d_j\rangle$} (w1) ;
	
	\end{scope}
	
	\end{tikzpicture}
	\caption{A transition is shown between two graph nodes.}
\end{figure}
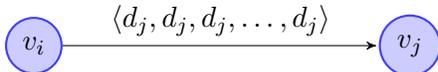

Finally, we make one more assumption to complete the notion of monotonicity. If we have three temperatures $d_i > d_j > d_k$ the length of the edge from $v_i$ to $v_k$ is not smaller than the length of the edge from node $v_j$ to node $v_k$. Another interpretation of this property is the following: in order to reach the stationary distribution of a temperature  $d_k$, it is easier to start from the stationary distribution of a temperature closer to $d_k$ rather than a temperature with a much higher difference. Although for some very delicately constructed examples this may fail, the assumption is along the common perception for the behavior of the \textsf{SA} algorithms~\cite{aarts1987simulated,aarts1988simulated}. 

With the above definition, every path in the monotone stationary graph corresponds to a sequence of temperature which is made by the concatenation of the labels of the edges. A path can be traversed with a sequence of temperatures $\para$ if its corresponding label is a subsequence of $\para$. Given a sequence of temperatures $\para = \langle t_1, t_2, \ldots, t_m \rangle$, one can travel from node $v_0$ of the stationary distribution graph to a set of nodes via $\para$. In order to model the score of a cooling schedule $\para$, we assume that it takes us to the right most node $v_i$ such that there is a path from $v_0$ to $v_i$ whose label is a subsequence of $\para$. 
 Implicit to our model is the assumption that stationary distributions become better\footnote{More concentration on the solution nodes.} as the temperature drops. Thus, the scoring function gives us higher scores for lower temperatures.

For our computational results, we assume that the score of each cooling schedule is evaluated based on the above model. We compete against an optimal cooling schedule that uses a sequence of at most $m$ moves. Thus, we can assume w.l.o.g that the length of every (existing) edge is bounded by $m$. This along with the monotonicity property of our model implies that there is a trivial cooling schedule with $|T|m$ many moves that performs at least as well as the optimal schedule with $m$ steps. That is, in our model, a cooling schedule that contains $m$ copies of each temperature performs always as well as any cooling schedule of length $m$. Although we allow the size constraint to be violated by a small factor, our aim is to keep the length of our approximately optimal cooling schedule close to $m$.

Our model may raise a concern for a thoughtful reader. We only incorporate the types of algorithms whose states move between the stationary distributions. What if the optimal solution never gets close enough to some of the stationary distributions, yet moves towards them in order to reach the stationary distributions for lower temperatures (see Figure \ref{fig:optbad})?

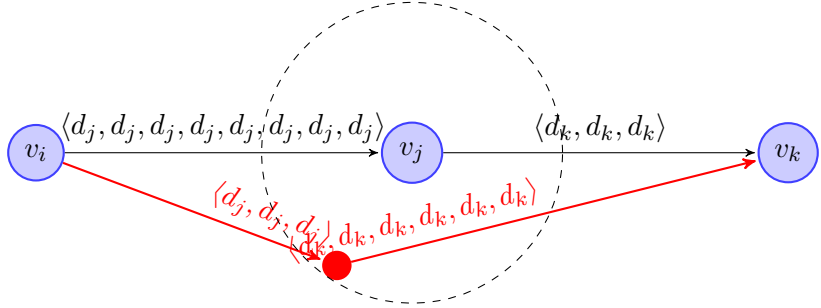
\begin{figure}[H]\centering
	\begin{tikzpicture}[node distance=1.3cm,>=stealth',bend angle=45,auto]
	
	\tikzstyle{place}=[circle,thick,draw=blue!75,fill=blue!20,minimum size=6mm]
	\tikzstyle{rd}=[circle,thick,draw=red,fill=red,minimum size=1mm]
	\tikzstyle{red place}=[place,draw=red!75,fill=red!20]
	\tikzstyle{transition}=[rectangle,thick,draw=black!75,
	fill=red,minimum size=4mm]
	
	\tikzstyle{every label}=[red]
	
	\begin{scope}
	  [align=center,node distance=5cm]  
	
	\node [place] (w1) {$v_i$};
	\node [place] (w2) [right of=w1] {$v_j$} edge [pre] node[swap] {$\langle d_j, d_j, d_j, d_j, d_j, d_j,d_j,d_j\rangle$} (w1) ;
	\node [rd] (w4) at (4,-1.5) {} edge [pre,color=red,thick] node[swap,color=red,rotate=-21] {$\langle d_j, d_j, d_j\rangle$} (w1);
	\node [place] (w3) [right of=w2] {$v_k$} edge [pre] node[swap] {$\langle d_k, d_k, d_k\rangle$} (w2) edge [pre,color=red,thick] node[swap,color=red,rotate=14] {$\langle  d_k, d_k, d_k, d_k, d_k, d_k\rangle$} (w4) ;
	\end{scope}

	\draw [dashed] (w2) circle (2cm);

	\end{tikzpicture}
	\caption{Red edges show the cooling schedule of the optimal solution. In this case, the optimal solution moves toward the stationary distribution of temperature $d_j$ without getting close enough to its stationary distribution. }\label{fig:optbad}
\end{figure}

Although this may very well be the case in practice, the goal of this model is competing with the optimal algorithm that moves between the stationary distributions (and thus such a scenario is ruled out). We justify our model by the following intuitive argument: If moving towards a stationary distribution $\stationary_{d_j}$ makes a significant difference in the convergence rate for stationary distribution $\stationary_{d_k}$, it should be the case that a considerable portion of the path to the stationary distribution of $\stationary_{d_j}$ is already traversed. Thus, if we multiply the number of $d_j$ steps of the algorithm by a small constant, this algorithm should reach the stationary distribution of $\stationary_{d_j}$. In other words, the optimal algorithm that adheres to our model may not necessarily be the optimal algorithm, however, if we allow for more steps (by a multiplicative constant factor), we expect that the optimal algorithm of our model performs as well as the optimal algorithm in the unrestricted setting.

\subsection{Computational Results}
Although our model is general, we use the \textsf{SAT} problem to explain the terminology. Assume that the search graph contains $2^k$ vertices where every vertex is a true/false assignment to $k$ variables of the underlying problem. Every node of the search graph is associated with a value which we refer to as energy. This concept reflects how close this node is to a solution. One example of such energy function is the amount of clauses satisfied by that solution. Also, the score of a cooling schedule $\para$ is equal to the probability of finding a solution for the problem via simulated annealing using $\para$ as a cooling schedule. We model this quantity with the monotone stationary graph.

Recall that we are given a distribution $\mathcal{D}$ over a class of \textsf{SAT} instances and our aim is to design a learning algorithm that computes/approximates a cooling schedule with the highest average score. In other words, our goal is to find a cooling schedule $\para$ for simulated annealing that maximizes
$$\mathbb{E}_{\mathsf{I} \sim \distribution} [\score(\mathsf{I},\para)].$$

We model the performance of a simulated annealing algorithm by the monotone stationary graph explained previously. We compete against the score of the optimal cooling schedule with at most $m$ steps subject to our model. Notice that, the optimal cooling schedule may in fact get a higher score than what our model suggests but we only give credit to that schedule based on our model and not the actual likelihood of finding a solution. Nonetheless, our hope is that the difference between the practical results and our model is negligible.

We assume throughout this paper that the number of steps of the optimal cooling schedule is equal to $m$. However, in order to compete with the optimal solution, we allow more steps for our algorithm. We define an algorithm $\mathcal{A}$ to be $(\alpha,\epsilon)$-approximate, if the number of steps of $\mathcal{A}$ is bounded by $\alpha m$ and the average score of $\mathcal{A}$ differs from the optimal solution by at most an additive error of $\epsilon$. 

We show in Section \ref{sec:discretization} that from a sample-complexity standpoint, a learning algorithm only needs $\tilde O(\sqrt{m})$ samples from $\mathcal{D}$ (Theorem \ref{theorem:main}). This result is indeed not dependent on the monotone stationary graph. However, the computational complexity of the solution requires more assumption on the score of cooling schedules. To this end, we define four scoring systems and analyze each of the systems separately. 

Our results in Section \ref{sec:discretization} show that if we draw $n = \tilde O(\sqrt{m})$ samples from $\distribution$ and find the solution that maximizes the average score on these $n$ samples, the objective is approximately maximized for $\distribution$. Therefore, in the computational results, we assume that $n$ problem instances $\ii_1, \ii_2, \ldots, \ii_n$ are given and our goal is to find a sequence that maximizes the average score for those instances. We consider the following two settings for our problem:

\begin{itemize}
	\item \color{red}\textsf{separate paths}\color{black}: For each instance $\ii$, the optimal cooling schedule runs on a sequence of temperatures that move between the nodes of the stationary distribution graph. However, the sequence of stationary nodes may vary between different instances.
	
	\item \color{red}\textsf{identical paths}\color{black}: The optimal cooling schedule chooses a sequence $v_{a_1}, v_{a_2},\ldots, v_{a_x}$ of the nodes and does the following: starts and runs the algorithm by temperature $d_{a_1}$ so long as \textbf{all} instances reach stationary distribution $v_{a_1}$. Then, proceeds with applying temperature $d_{a_2}$ until all input instances reach stationary distribution $v_{a_2}$ and so on. In this case, the path taken in the stationary distribution graph is the same for all instances of the problem.
\end{itemize}

We bring an example to illustrate the difference of the two models. Consider a distribution $\distribution$ of the SAT instances which returns instances $\ione$ and $\itwo$ with equal probabilities. Let us assume that the monotone stationary graphs of the two instances are as shown in Figure \ref{fig:differ}. In the separate paths setting, the optimal sequence of temperatures that reaches the lowest stationary distribution for both instances is $\langle d_1, d_2, d_2, d_2, d_2\rangle$. Notice that in this case, for $\ione$ the path to $v_2$ is through $v_1$ but for $\itwo$ the path consists of a direct edge from $v_0$ to $v_2$. However, the choice of separate paths is not allowed in the identical paths model. Therefore, in the identical paths setting, the optimal solution is $\langle d_1,d_1,d_1,d_1, d_2, d_2, d_2, d_2\rangle$ which is through $v_1$ for both instances. 

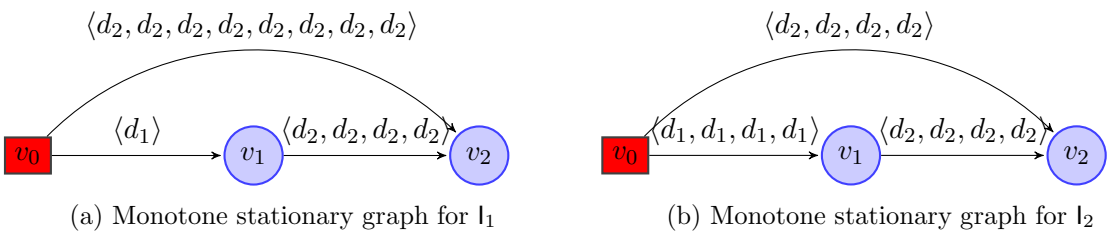
\begin{figure}[H]
		\centering
	\begin{subfigure}[t]{0.45\textwidth}
	\begin{tikzpicture}[node distance=3cm,>=stealth',bend angle=45,auto]

\tikzstyle{place}=[circle,thick,draw=blue!75,fill=blue!20,minimum size=6mm]
\tikzstyle{red place}=[place,draw=red!75,fill=red!20]
\tikzstyle{transition}=[rectangle,thick,draw=black!75,
fill=red,minimum size=4mm]

\tikzstyle{every label}=[red]

[align=center,node distance=3cm]  

\node [transition] (w1) {$v_0$};
\node [place] (w2) [right of=w1] {$v_1$} edge [pre] node[swap] {$\langle d_1\rangle$} (w1) ;
\node [place] (w3) [right of=w2] {$v_2$} edge [pre] node[swap] {$\langle d_2, d_2,d_2,d_2\rangle$} (w2) edge [pre,bend right] node[swap] {$\langle d_2, d_2,d_2,d_2,d_2,d_2,d_2,d_2\rangle$} (w1) ;



[node distance=6cm]  


\end{tikzpicture}\caption{Monotone stationary graph for $\ii_1$}
	\end{subfigure}
	\quad
	\begin{subfigure}[t]{0.45\textwidth}
	\begin{tikzpicture}[node distance=3cm,>=stealth',bend angle=45,auto]

\tikzstyle{place}=[circle,thick,draw=blue!75,fill=blue!20,minimum size=6mm]
\tikzstyle{red place}=[place,draw=red!75,fill=red!20]
\tikzstyle{transition}=[rectangle,thick,draw=black!75,
fill=red,minimum size=4mm]

\tikzstyle{every label}=[red]

[align=center,node distance=6cm]  


\node [transition] (w'1) [below of=w1] {$v_0$};
\node [place] (w'2) [right of=w'1] {$v_1$} edge [pre] node[swap] {$\langle d_1,d_1,d_1,d_1\rangle$} (w'1) ;
\node [place] (w'3) [right of=w'2] {$v_2$} edge [pre] node[swap] {$\langle d_2, d_2,d_2,d_2\rangle$} (w'2) edge [pre,bend right] node[swap] {$\langle d_2, d_2,d_2,d_2\rangle$} (w'1) ;


[node distance=3cm]  


\end{tikzpicture}\caption{Monotone stationary graph for $\ii_2$}

	\end{subfigure}
	\caption{An example to show the difference between the separate paths setting and identical paths setting.}\label{fig:differ}
\end{figure}

Moreover, we also study a more restricted setting, in which in the optimal solution, all instances of the problem reach the stationary distribution for the lowest temperature. We call this setting the \color{red}\allsatisfied\color{black} setting.


For each combination of the settings we provide an algorithm along with its analysis. Table \ref{table:results} summarizes the time complexity of our algorithm in each setting.

\begin{table}[!htbp]
	\centering
	\begin{tabular}{|c|c|c|}
		\hline
		  \textsf{identical paths} & \textsf{separate paths}  &\textsf{separate paths} + \allsatisfied\\
		\hline
		 exact solution & exact solution &$(O(\log n|T|),0)$ approximation\\
  in time  & in time & in time \\
$\mathsf{poly}(m,n,|T|)$ & $\mathsf{poly}(m,n,|T|^n)$ &$\mathsf{poly}(m,n,|T|)$\\
		\hline
	\end{tabular}
\end{table}\label{table:results}


One last thing to keep in mind before we go to the technical discussion is that monotone stationary graphs are not available to our algorithms. Therefore, the first step is to learn such a graph for a given instance $\ii$ of the problem. We begin by explaining this in Section \ref{sec:learning} and then bring our algorithms in Section \ref{sec:bestsequence}.

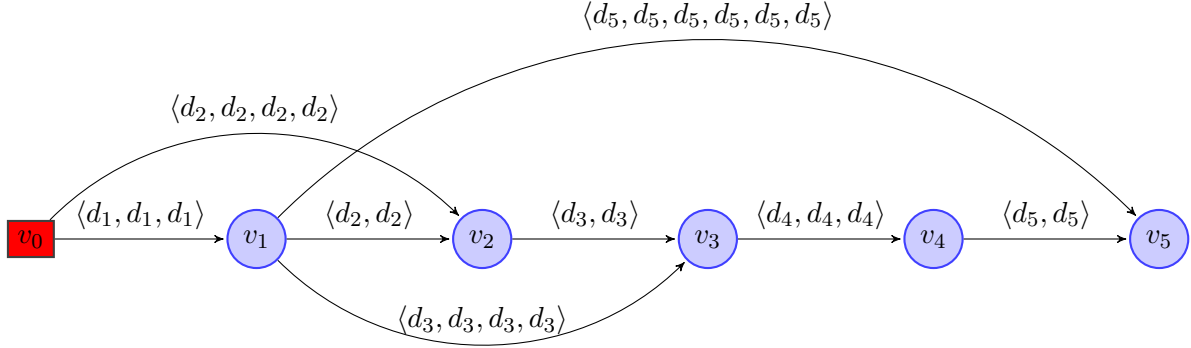
\begin{figure}[h!]\centering
	\begin{tikzpicture}[node distance=1.3cm,>=stealth',bend angle=45,auto]
	
	\tikzstyle{place}=[circle,thick,draw=blue!75,fill=blue!20,minimum size=6mm]
	\tikzstyle{red place}=[place,draw=red!75,fill=red!20]
	\tikzstyle{transition}=[rectangle,thick,draw=black!75,
	fill=red,minimum size=4mm]
	
	\tikzstyle{every label}=[red]
	
	\begin{scope}
	  [align=center,node distance=3cm]  
	
	\node [transition] (w1) {$v_0$};
	\node [place] (w2) [right of=w1] {$v_1$} edge [pre] node[swap] {$\langle d_1, d_1, d_1\rangle$} (w1);
	\node [place] (w3) [right of=w2] {$v_2$} edge [pre] node[swap] {$\langle d_2, d_2\rangle$} (w2) edge [pre,bend right] node[swap] {$\langle d_2, d_2, d_2, d_2\rangle$} (w1);
	\node [place] (w4) [right of=w3] {$v_3$}  edge [pre, bend left] node[swap] {$\langle d_3, d_3, d_3, d_3\rangle$} (w2) edge [pre] node[swap] {$\langle d_3, d_3\rangle$} (w3);
	\node [place] (w5) [right of=w4] {$v_4$} edge [pre] node[swap] {$\langle d_4, d_4, d_4\rangle$} (w4);
	\node [place] (w6) [right of=w5] {$v_5$} edge [pre] node[swap] {$\langle d_5, d_5\rangle$} (w5) {} edge [pre, bend right] node[swap] {$\langle d_5, d_5, d_5,d_5, d_5, d_5\rangle$} (w2);
	\end{scope}

	\end{tikzpicture}
	\caption{An example of the monotone stationary graph is shown. Only non-trivial edges are shown in this figure. For instance, an edge of length $6$ from $v_3$ to $v_5$ can be implied from the edge $(v_1,v_5)$.}\label{fig:daggood}
\end{figure}

To remind the reader of our assumptions, we bring a list of assumptions that we make for the model and the results:
\begin{itemize}
	\item (for the model): The state of any algorithm moves between stationary distributions.
	\item (for the model): For $i < j < k$ the length of the edge from $v_i$ to $v_k$ is not smaller than that of $v_j$ to $v_k$.
	\item (for the model): For any path $P$, a cooling schedule that contains the labels of the edges of the path as subsequence can take us to the end vertex. The score of a cooling schedule is equal to that of the best stationary distribution reachable via that schedule.
	\item (for the model): The score improves as $i$ increases for $v_i$.
	\item (in order to learn the monotone stationary graph): For every instance of the problem, there is a cooling schedule of length $m$ that takes us to node $v_{\tsize}$.
	\item (in order to learn the monotone stationary graph): There is a noticeable difference between the scores of the nodes of the monotone stationary graph. That is, by running $\poly(n,m,|T|)$ experiments we can tell whether two cooling schedules $\para_1$ and $\para_2$ take us to the same node or not.
\end{itemize}
\section{Learning A Monotone Stationary Graph}\label{sec:learning}
In this section, we show how one can learn the monotone stationary graph for a particular instance of the problem in polynomial time. Recall that $\score(\para,\mathsf{I})$ denotes the success probability of finding a solution to the problem. We make two assumptions to learn the monotone stationary graph. The first assumption is that for every instance there exists a sequence of length $m$ that takes us to the optimal node (corresponding to the lowest temperature) in the monotone stationary graph. The second assumption is that there is a noticeable difference between the score of the nodes. That is, for any two cooling schedules $\para_1$ and $\para_2$ we can tell whether they take us to the same node in the monotone stationary graph or they take us to different nodes by running the \textsf{SA} algorithm several times and comparing their success ratios. 

The discretization of the the temperatures is w.l.o.g as we show in Section \ref{sec:discretization}. Also, we ignore all edges whose sizes are more than $m$. This obviously does not hurt the optimal solution since its length is bounded by $m$.

\begin{observation}\label{obs:1}
	Given a sequence $\para$ of temperatures, we can verify in polynomial time whether $\para$ takes us to $v_{\tsize}$. That is, we can answer in polynomial time whether $\para$ is at least as good as any other sequence or not.
\end{observation}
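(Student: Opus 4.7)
The plan is to reduce the test ``does $\para$ reach $v_{\tsize}$'' to a score comparison with a canonical cooling schedule $\para^*$ that is guaranteed to reach $v_{\tsize}$. By the modeling assumption that ``the score improves as $i$ increases for $v_i$,'' a cooling schedule is at least as good as any other schedule if and only if it lands at $v_{\tsize}$, so it suffices to decide whether $\para$ reaches $v_{\tsize}$.

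First I build $\para^*$. By hypothesis, there is some schedule of length $m$ that reaches $v_{\tsize}$; consider the corresponding witness path $v_0 \to v_{i_1} \to \cdots \to v_{i_k}=v_{\tsize}$ in the monotone stationary graph. Each edge of this path is labeled by a block of copies of a single temperature $d_{i_j}$, and (since we restrict to edges of length at most $m$) each such block has size at most $m$. Define
$$\para^* \;=\; \bigl\langle \underbrace{d_1,\ldots,d_1}_{m},\,\underbrace{d_2,\ldots,d_2}_{m},\,\ldots,\,\underbrace{d_{\tsize},\ldots,d_{\tsize}}_{m}\bigr\rangle,$$
a valid non-increasing cooling schedule of length $m\tsize$. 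The concatenation of the edge labels along the witness path is, block by block, a subsequence of $\para^*$, so $\para^*$ traverses that path and reaches $v_{\tsize}$.

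Second, I compare the two schedules empirically. By the noticeable-gap assumption, whenever two schedules reach different stationary nodes their scores on $\ii$ differ by at least some $\Delta = 1/\poly(n,m,\tsize)$. Running the \textsf{SA} algorithm with schedule $\para$ for $k=\Theta(\Delta^{-2}\log(m\tsize))$ independent trials and averaging the indicator of success yields, by Hoeffding's inequality, an estimate of $\score(\ii,\para)$ within additive error $\Delta/3$ with probability $1 - 1/\poly(m,\tsize)$; do the same for $\para^*$. Output ``$\para$ reaches $v_{\tsize}$'' if the two estimates are within $\Delta/3$ of each other and ``$\para$ does not reach $v_{\tsize}$'' otherwise. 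Every simulation costs $\poly(m,\tsize)$ time since each schedule has length at most $m\tsize$ and each step is a local oracle query, so the total running time is polynomial.

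The only delicate step is justifying that $\para^*$ reaches $v_{\tsize}$; this is the subsequence argument in the previous paragraph, and it crucially uses both the assumption that a length-$m$ reaching schedule exists and the monotonicity fact that making the schedule longer at a temperature cannot move us back along the graph. Everything else is a routine estimate-and-compare reduction enabled by the noticeable-gap hypothesis.
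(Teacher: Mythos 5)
Your proposal is correct and follows essentially the same route as the paper's proof: construct the canonical schedule with $m$ repetitions of every temperature (which must reach $v_{\tsize}$ by the length-$m$-path assumption), estimate its score empirically, and then compare the empirical score of $\para$ against it using the noticeable-gap assumption. You merely spell out the subsequence justification and the Hoeffding-based sample count that the paper leaves implicit.
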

\begin{proof}
   By the above assumptions, a sequence that contains $m$ repetitions of each temperature has to take us to node $v_{\tsize}$ (otherwise there is no path of length $m$ to $v_{\tsize}$). Thus, by running the algorithm on this sequence, we can learn the average score of that node in time $\poly(m,|T|)$. Now, for a sequence $\para$, we just need to run the algorithm several times and verify whether the success rate is close to $s_{\tsize}$ or not. 
\end{proof}

Using Observation \ref{obs:1}, we can construct the monotone stationary graph for an instance $\mathsf{I}$ of the problem.
\begin{lemma}
	Given an instance $\mathsf{I}$ of the problem, one can construct the underlying graph $\graph(\mathsf{I})$ in time $\poly(m,|T|)$.
\end{lemma}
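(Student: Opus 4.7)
The plan is to iteratively discover the reachable nodes of $\graph(\ii)$ together with canonical cooling schedules that reach each one, and then recover the edge lengths by a monotone binary search. I rely throughout on Observation~\ref{obs:1} and its guiding principle: by the assumption that distinct $s_j$'s are noticeably separated, two cooling schedules end at the same stationary node if and only if their empirical success probabilities, estimated from $\poly(m,\tsize)$ trials of simulated annealing, agree up to the separation gap, with high probability via Hoeffding and a union bound.

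First I would pin down the endpoint: the rightmost score $s_{\tsize}$ is obtained as in Observation~\ref{obs:1}, by sampling the canonical schedule $\pi_{\tsize}$ consisting of $m$ copies of each temperature $d_1,\ldots,d_{\tsize}$. This schedule must reach $v_{\tsize}$, since by the first assumption some length-$m$ schedule already reaches $v_{\tsize}$, and by monotonicity inserting additional copies of any temperature cannot push the endpoint back. I would then discover the other reachable nodes by induction on $j$. Maintaining a set $R$ of already discovered nodes with canonical schedules $\pi_i$, for each $v_i \in R$ with $i<j$ I would test the schedule $\pi_i \cdot d_j^{\,m}$. If some edge $(v_i,v_j)$ has length at most $m$, then by monotonicity $m$ copies of $d_j$ are certainly enough to converge to $v_j$ starting from $v_i$, so the empirical score of $\pi_i\cdot d_j^{\,m}$ either matches one of the previously identified $s_l$ (the attempt stalled at $v_i$ or landed on a previously discovered node) or reveals a brand-new value, which must equal $s_j$. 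In the latter case I add $v_j$ to $R$ with $\pi_i\cdot d_j^{\,m}$ as its canonical schedule and with score $s_j$.

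Once $R$ and all scores are determined, recovering the edge length $\ell(v_i,v_j)$ for each pair $i<j$ in $R$ amounts to finding the smallest $k\in[1,m]$ such that $\pi_i\cdot d_j^{\,k}$ reaches $v_j$. Because the event ``$\pi_i\cdot d_j^{\,k}$ has converged to $v_j$'' is monotone in $k$, this minimum can be located by a binary search of depth $O(\log m)$, each step costing $\poly(m,\tsize)$ simulated-annealing samples. Over all $O(\tsize^{2})$ candidate pairs the total running time is $\poly(m,\tsize)$. Pairs with no valid $k$ correspond to missing edges, and any $v_j$ never discovered by the inductive loop is unreachable within $m$ steps and so plays no role in any length-$m$ schedule.

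The main obstacle is concentration: reliably distinguishing the stationary scores from empirical estimates. This is exactly where the second assumption is doing the work—since the scores are separated by some $\Delta>0$, taking $O(\Delta^{-2}\log(m\tsize))$ runs of each candidate schedule and a union bound over the polynomially many estimates computed in the whole procedure ensure that every node identification and every edge-length decision is correct with high probability. A secondary technical point, handled by the same monotonicity argument that justifies $\pi_{\tsize}$, is that the inductive choice $\pi_i \cdot d_j^{\,m}$ cannot ``overshoot'' to some $v_{j'}$ with $j'>j$, because the last $m$ moves are all at temperature $d_j$ and so the walk converges to the stationary distribution of $d_j$, namely $v_j$.
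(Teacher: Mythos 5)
Your proposal is correct and follows essentially the same route as the paper: build a canonical prefix schedule that is guaranteed to reach $v_i$ (using $m$ repetitions of the relevant temperatures and monotonicity), append $k$ copies of $d_j$, binary-search on $k$ using the score-separation assumption (Observation~\ref{obs:1}) to detect which stationary node the schedule lands on, and union-bound over the polynomially many tests. The only difference is organizational — you discover reachable nodes forward and spell out the concentration argument, while the paper recovers edges into $v_{\tsize}$ first and recurses backward — but the underlying machinery is identical.
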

\begin{proof}
	We assume that there is a path of length $m$ from node $v_0$ to node $v_{\tsize}$. Thus, a sequence containing $m$ repetitions of each temperature takes us there. Now, imagine we wish to answer the following question:
	
	``\textit{Is there an edge from node $v_i$ to node $v_{\tsize}$ with label $\overbrace{\langle d_{\tsize}, d_{\tsize}, \ldots, d_{\tsize}\rangle}^{k}$?}''\\[0.5cm]
		
	To answer the above question, we can construct a sequence of temperatures that contains $m$ repetitions of all temperatures $d_1,d_2,\ldots,d_i$. Next, we add $k$ repetitions of temperature $d_{\tsize}$ to the end of this sequence. If this sequence takes us to node $v_{\tsize}$, then there is an edge from $v_i$ to $v_{\tsize}$ with a label that contains at most $k$ copies of $d_{\tsize}$. Thus, we can answer the query with a binary search.
	
	Using the above machinery, we can extract all the edges that end at node $v_{\tsize}$. Based on this information, we can find the smallest path that takes us from node $v_{\tsize-1}$ to node $v_{\tsize}$ and then recursively solve the problem for node $v_{\tsize-1}$. With the same argument, we can discover all of the edges for all vertices of the graph.
\end{proof}
\section{Computing/Approximating the Optimal Cooling Schedule}\label{sec:bestsequence}
The problem that we are concerned with in this section is computing (or approximating) the optimal cooling schedule for a set of problem instances. More precisely, let $\ii_1, \ii_2,\ldots, \ii_n$ be $n$ instances of the problem whose monotone stationary graphs are available. The goal here is to find a cooling schedule whose size is close to $m$ and and whose average score for the $n$ instances is close to the optimal solution.

\subsection{Identical Paths} 
The easier setting that we study is identical paths. In this setting, we compete with the optimal solution that chooses the same path for all instances. In other words, in such solutions, we fix a set of nodes $v_{a_1}, v_{a_2},\ldots, v_{a_k}$ and find a cooling schedule that takes all instances through this path. More precisely, we put enough temperatures $d_{a_1}$ to make sure all instance reach vertex $v_{a_1}$. Next, we proceed by doing the same thing for $v_{a_2}$ and so on.

We show that in this setting the problem of finding the optimal cooling schedule reduces to shortest path. Construct a graph $G$ with the same vertex set as the monotone stationary graphs. We put a directed edge from vertex $v_i$ to vertex $v_j$ of $G$, if and only if such an edge exists in the corresponding monotone stationary graphs of all instance. Moreover, we set the length of this edge as the largest length in all the graphs. Finally, we find the lowest temperature $d_i$ (meaning that $i$ is maximized) such that vertex $v_i$ is reachable from vertex $v_0$ via a path of length at most $m$. We prove that this algorithm is optimal.

\begin{algorithm}\caption{Exact algorithm for the identical paths setting.}
\begin{algorithmic}[1]
		\FOR {$ 1\leq k \leq n $} 
		 \STATE Let $r_{i,j}^{(k)}$ be the minimum number of repetitions for temperature $d_j$ to make a transition from node $v_i$ to $v_j$ on the $k$-th monotone stationary graph.
		 \ENDFOR
		 \STATE $r_{i,j} := \max_{k \in [n]} r_{i,j}^{(k)} $. 
	\STATE Let $r_{i,j} = 0 $ when $i\geq j$. 
	\STATE Construct a graph with vertices $v_0, \cdots, v_{\tsize}$ and pairwise distances $r_{i,j}$.
	\STATE Find the the shortest paths from $v_0$ to all nodes of the graph.
	\STATE Find the largest $q$ such that node $v_q$ is within a distance of $m$ from $v_0$.
	\STATE Output the optimal sequence of temperatures made by this path.
\end{algorithmic}
\end{algorithm}

\begin{theorem}\label{theorem:guli}
	Given $n$ monotone stationary graphs for $n$ instances of the problem, one can find in polynomial time a cooling schedule of length $m$ that maximizes the average score for the $n$ instances in the same paths setting
\end{theorem}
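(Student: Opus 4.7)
The plan is to reduce the identical-paths optimization to a single-source shortest-path problem on the auxiliary graph $G$ built by the algorithm, and then use the monotonicity of scores in the index of the stationary node to conclude optimality.

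First, I will argue that any identical-path cooling schedule $\para$ of length at most $m$ can be described, without loss of generality, by a strictly increasing sequence of indices $0 = a_0 < a_1 < a_2 < \cdots < a_k$ together with a choice of how many copies of each $d_{a_j}$ to include. Concretely, $\para$ is a concatenation $\langle d_{a_1}^{c_1}, d_{a_2}^{c_2}, \ldots, d_{a_k}^{c_k}\rangle$, and for the schedule to move every one of the $n$ instances along the common path $v_0 \to v_{a_1} \to \cdots \to v_{a_k}$, the count $c_j$ must be at least $r_{a_{j-1},a_j}^{(k)}$ for every instance $k$, hence at least $r_{a_{j-1},a_j} = \max_k r_{a_{j-1},a_j}^{(k)}$. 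Conversely, by the monotonicity property of the model (once a transition is achieved, additional repetitions do no harm), taking exactly $c_j = r_{a_{j-1},a_j}$ suffices to drive all $n$ instances along this path. Thus the minimum length cooling schedule realizing the common path $(a_1,\ldots,a_k)$ in all instances is exactly $\sum_{j=1}^{k} r_{a_{j-1},a_j}$, which is precisely the length of the corresponding path in $G$.

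Given this correspondence, optimizing the identical-paths objective reduces to: among all paths in $G$ starting from $v_0$ whose total length is at most $m$, find one whose terminal vertex has the largest index. Since the scoring function in the monotone stationary model is nondecreasing in the index of the reached node, picking the node with the largest index reachable within budget $m$ maximizes the average score across the $n$ instances simultaneously. Computing, for every vertex $v_q$, the shortest-path distance from $v_0$ in $G$ (which has $|T|+1$ vertices and $O(|T|^2)$ edges) takes polynomial time by Dijkstra's algorithm, and then scanning from $q = |T|$ downward to find the largest $q$ with distance at most $m$ takes linear time.

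The only subtlety worth flagging is the asymmetry in the definition of $r_{i,j}$: we set $r_{i,j} = 0$ for $i \geq j$ so that the auxiliary graph never offers ``backward'' shortcuts, which aligns with the fact that temperatures in a cooling schedule are non-increasing and the monotone stationary graph has edges only from lower-index to higher-index nodes. One should also verify that taking the maximum of $r_{i,j}^{(k)}$ over $k$ (rather than some joint combination) is correct: this follows because the single sequence of temperatures $\langle d_j, \ldots, d_j\rangle$ inserted in $\para$ must independently suffice to move each instance from its copy of $v_i$ to its copy of $v_j$, and by the model's monotonicity overshooting in any instance is harmless. Combining the three steps yields an exact polynomial-time algorithm, proving the theorem.
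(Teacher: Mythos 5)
Your proof is correct and follows essentially the same route as the paper's: reduce the identical-paths problem to shortest paths on the auxiliary graph with edge weights $r_{i,j} = \max_k r_{i,j}^{(k)}$, and pick the largest-index node reachable within budget $m$, using monotonicity of the score in the node index. Your write-up is in fact more detailed than the paper's (which states the key fact about taking the maximum label length and leaves the rest implicit), but the underlying argument is identical.
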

\begin{proof}
The proof is based on the fact that the path is the same for all instances. Thus, in order to make a transition from vertex $v_i$ to vertex $v_j$, one needs to add as many copies of temperature $d_j$ as the size of the largest label among all instances. Thus, the furthest we can get from node $v_0$ is a node $v_i$ whose distance from $v_0$ is bounded by $m$ on the graph we make.
\end{proof}

\subsection{Separate Paths}

The more challenging setting is when we allow the instances to have different paths in the optimal solution. In this case, the problem is much harder since we have to consider $n$ different monotone stationary graphs and solve the problem with respect to all of them. However, it is not hard to see that if $n$ is constant, one can find the optimal cooling schedule of length $m$ in polynomial time.

\begin{lemma}\label{lemma:constant}
	Given $n$ monotone stationary graphs for $n$ instances of the problem, one can find in time $\tilde O(m|T|^{n+1})$ a cooling schedule of length $m$ that maximizes the average score for the $n$ instances in the separate paths setting
\end{lemma}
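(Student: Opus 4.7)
The plan is to design a dynamic program whose state records, for each of the $n$ instances simultaneously, its current vertex in its monotone stationary graph. Since each instance lives in a graph with $|T|+1$ nodes, the joint state space has size $(|T|+1)^n = O(|T|^n)$, and for each state we will compute the minimum length of a cooling schedule that drives all $n$ instances into that joint configuration. From this, the desired schedule is obtained by selecting the reachable state (within $m$ steps) of highest average score and tracing back the DP.

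First I would normalize the representation of a cooling schedule. Because temperatures are non-increasing and because the label of every edge entering $v_k$ consists only of copies of $d_k$, any schedule can be written uniquely as a sequence of blocks $(a_1,r_1),(a_2,r_2),\ldots,(a_x,r_x)$ with $a_1<a_2<\cdots<a_x$ (strictly decreasing temperatures $d_{a_1}>d_{a_2}>\cdots>d_{a_x}$) and $r_i\geq 1$, of total length $\sum r_i\leq m$. The key structural observation is that when the block $(a_k,r_k)$ is applied to an instance $\ii_i$ currently sitting at node $v_{j_i}$ (where necessarily $j_i<a_k$, since previous blocks used only temperature indices $<a_k$), instance $\ii_i$ moves to $v_{a_k}$ if and only if $r_k\geq \ell_i^{(j_i,a_k)}$, where $\ell_i^{(u,w)}$ denotes the label length of the edge $(v_u,v_w)$ in the $i$-th monotone stationary graph (or $\infty$ if no such edge exists); otherwise $\ii_i$ stays at $v_{j_i}$.

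Given this, define $f(j_1,\ldots,j_n)$ as the minimum total length of a valid block schedule that ends with instance $i$ at node $v_{j_i}$ for every $i$. Initialize $f(0,\ldots,0)=0$ and all others to $\infty$. The transition rule is: for every state $(j_1,\ldots,j_n)$, every temperature index $k>\max_i j_i$, and every repetition count $r\in\{1,\ldots,m\}$, let $j'_i=k$ if $r\geq\ell_i^{(j_i,k)}$ and $j'_i=j_i$ otherwise, and relax $f(j'_1,\ldots,j'_n)\leq f(j_1,\ldots,j_n)+r$. States can be processed in increasing order of $\max_i j_i$ (breaking ties arbitrarily) to guarantee acyclicity. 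Once $f$ is filled in, return $\arg\max\{\sum_i s_{j_i}\;:\;f(j_1,\ldots,j_n)\leq m\}$ together with the schedule reconstructed from the DP pointers.

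For the running time, there are $O(|T|^n)$ states, at most $|T|$ choices of $k$ per state, and $m$ choices of $r$, giving $O(m|T|^{n+1})$ overall, matching the bound. (One can trim this to $O(n|T|^{n+1})$ by noticing that only the $n$ threshold values $r=\ell_i^{(j_i,k)}$ give rise to distinct successor states, but this is not needed.) The one subtle point, which I would emphasize in the write-up, is why the DP's restriction to block schedules with $a_1<\cdots<a_x$ loses no generality: a cooling schedule is non-increasing, so equal consecutive temperatures can be merged into one block without changing behavior, and strict monotonicity of the block temperatures then follows. The correctness of the transition itself is immediate from the definition of the monotone stationary graph and the assumption that each instance evolves independently under the common schedule in the separate-paths setting.
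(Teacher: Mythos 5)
Your proposal is correct and follows essentially the same route as the paper: the paper also builds the product of the $n$ monotone stationary graphs (a state for each tuple of positions, $(|T|+1)^n$ in total), puts $O(m|T|)$ outgoing transitions per state corresponding to appending $i$ copies of temperature $d_j$, and runs a shortest-path computation from $(v_0,\ldots,v_0)$ to pick the best-scoring state within distance $m$. Your write-up merely makes explicit some details the paper leaves implicit (the block normalization, the restriction $k>\max_i j_i$, and the acyclic processing order), so no substantive difference.
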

\begin{proof}
	The proof is similar to the proof of Theorem \ref{theorem:guli}. However, since we may traverse a different path for every graph, we need to construct our graph more carefully. To this end, our vertex set would be the multiplication of the vertex sets for the monotone stationary graphs. That is, we put $(|T|+1)^n$ different vertices in our graph such that every vertex shows one combination of the nodes for the instances.
	
	Every vertex has $O(m|T|)$ different edges that shows how the combination changes by adding $1 \leq i \leq m$ copies of temperature $d_j$ to the sequence. Finally, we compute the distances of all vertices from node $(v_0,v_0,\ldots,v_0)$ and find the one whose distance is bounded by $m$ and its score is maximized. Then, we recover the path to that node and report it.
\end{proof}

Obviously, the runtime of Lemma \ref{lemma:constant} is not polynomial when $n$ is super constant. Therefore, for asymptotically larger $n$'s, we present a polynomial time algorithm that approximates the solution. Our algorithm works for the \textsf{all-satisfied} setting, which means that there is an optimal solution that brings all instances to the vertex corresponding to the lowest temperature. Our algorithm loses a polylogarithmic factor in the size of the sequence but obtain the same score as the optimal solution with high probability.

Let us assume for simplicity that the score for each instance $\ii_k$ is equal to $1$ if and only if our sequence takes us to node $v_{\tsize}$ in its monotone stationary graph. Otherwise the score is equal to 0. Our algorithm is \textbf{not} dependent on this assumption, yet it makes the explanation much simpler. We begin by an observation that translates the definition of score into the set cover setting.

We say a cooling schedule $\para$ is an \textit{acceptable cooling schedule} for an instance $\ii$ of the problem if $\para$ takes us all the way to node $v_{\tsize}$ in its monotone stationary graph. Define an edge from a vertex $v_i$ to a vertex $v_j$ to be \textit{crossing} for a vertex $v_k$ if $i < k \leq j$ holds. Moreover, we say a sequence $\para$ \textit{encompasses} an edge from $v_i$ to $v_j$ from a particular monotone stationary graph if $\para$ contains at least as many repetitions of temperature $d_j$ as the label of the edge from $v_i$ to $v_j$. 
An example of the definitions is shown in Figure \ref{fig:example}.

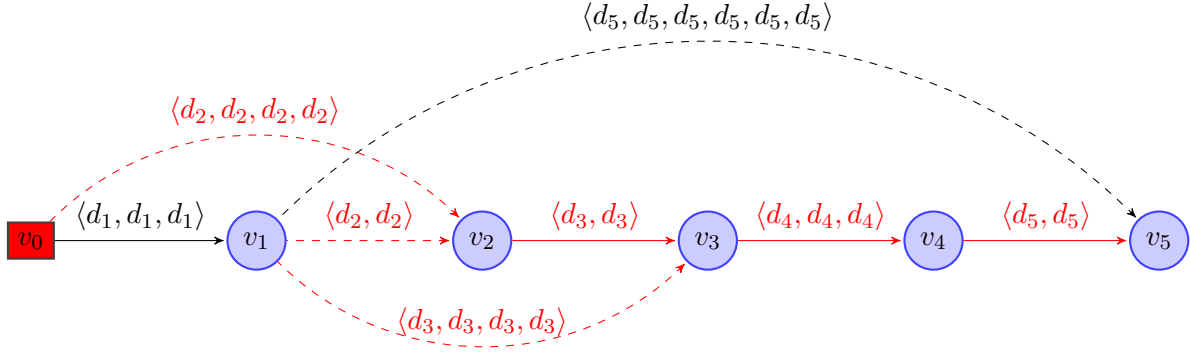
\begin{figure}\centering
	\begin{tikzpicture}[node distance=1.3cm,>=stealth',bend angle=45,auto]
	
	\tikzstyle{place}=[circle,thick,draw=blue!75,fill=blue!20,minimum size=6mm]
	\tikzstyle{red place}=[place,draw=red!75,fill=red!20]
	\tikzstyle{transition}=[rectangle,thick,draw=black!75,
	fill=red,minimum size=4mm]
	
	\tikzstyle{every label}=[red]
	
	\begin{scope}
	  [align=center,node distance=3cm]  
	
	\node [transition] (w1) {$v_0$};
	\node [place] (w2) [right of=w1] {$v_1$} edge [pre] node[swap] {$\langle d_1, d_1, d_1\rangle$} (w1);
	\node [place] (w3) [right of=w2] {$v_2$} edge [pre,dashed,color=red] node[swap] {$\langle d_2, d_2\rangle$} (w2) edge [pre,bend right,dashed,color=red] node[swap] {$\langle d_2, d_2, d_2, d_2\rangle$} (w1);
	\node [place] (w4) [right of=w3] {$v_3$}  edge [pre, bend left,dashed,color=red] node[swap] {$\langle d_3, d_3, d_3, d_3\rangle$} (w2) edge [pre,color=red] node[swap] {$\langle d_3, d_3\rangle$} (w3);
	\node [place] (w5) [right of=w4] {$v_4$} edge [pre,color=red] node[swap] {$\langle d_4, d_4, d_4\rangle$} (w4);
	\node [place] (w6) [right of=w5] {$v_5$} edge [pre,color=red] node[swap] {$\langle d_5, d_5\rangle$} (w5) {} edge [pre, bend right,dashed] node[swap] {$\langle d_5, d_5, d_5,d_5, d_5, d_5\rangle$} (w2);
	\end{scope}

	\end{tikzpicture}
	\caption{All the dashed edges are crossing for vertex $v_2$. Moreover, sequence $\langle \color{blue} d_1, d_1,\color{red}d_2,d_2,d_2,d_2,\color{blue},d_3,d_3,d_3,d_3,\color{red},d_4,d_4,d_4,d_4,\color{blue}d_5,d_5\color{black}\rangle$ is an acceptable sequence that encompasses all the red edges. The edges that are not drawn are implied by the edges depicted in the figure. }\label{fig:example}
\end{figure}

Now, we are ready to state an observation that plays an important role in our algorithm.
\begin{observation}\label{obs:important}
	A sequence of temperatures $\para$ is acceptable for an instance $\ii$ of the problem if and only if for every $1 \leq i \leq \tsize$, $\para$ encompasses at least one crossing edge with $v_i$.
\end{observation}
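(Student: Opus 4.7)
I will prove both directions of the biconditional, leaning on the monotonicity axiom of the monotone stationary graph and on the fact that a valid cooling schedule $\para$ is a non-increasing sequence of temperatures.

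For the forward direction (acceptability implies a crossing edge at every level), suppose $\para$ takes us from $v_0$ all the way to $v_{\tsize}$. By the definition of the model, there is a path $v_0 = v_{c_0} \to v_{c_1} \to \cdots \to v_{c_\ell} = v_{\tsize}$ whose concatenated label is a subsequence of $\para$. Fix any $i \in \{1, \ldots, \tsize\}$ and let $j$ be the smallest index with $c_j \geq i$; then $c_{j-1} < i \leq c_j$, so the edge $(v_{c_{j-1}}, v_{c_j})$ is by definition a crossing edge for $v_i$. Moreover, since the whole label is a subsequence of $\para$, in particular the copies of $d_{c_j}$ belonging to that single edge appear in $\para$, so $\para$ encompasses this edge.

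For the backward direction I will build a valid path greedily. Set $c_0 = 0$; given that we have reached $v_{c_{t}}$ with $c_t < \tsize$, apply the hypothesis at level $i = c_t + 1$ to obtain a crossing edge $(v_{a}, v_{b})$ with $a \leq c_t < c_t + 1 \leq b$ that is encompassed by $\para$. By the monotonicity axiom applied with the triple $a \leq c_t < b$, the label of the edge $(v_{c_t}, v_b)$ is no longer than that of $(v_a, v_b)$; both labels are strings of copies of $d_b$, so $\para$ also encompasses $(v_{c_t}, v_b)$. Set $c_{t+1} = b \geq c_t + 1$ and iterate; the sequence is strictly increasing in $\{0, 1, \ldots, \tsize\}$ and thus reaches $v_{\tsize}$ in at most $\tsize$ steps.

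It remains to check that the concatenated label of the path $v_{c_0} \to v_{c_1} \to \cdots \to v_{\tsize}$ is genuinely a subsequence of $\para$. The $t$-th edge contributes only copies of $d_{c_t}$, and the indices $c_1 < c_2 < \cdots$ are strictly increasing, which (since $d_1 > d_2 > \cdots > d_{\tsize}$) means the temperatures contributed are strictly decreasing. Because $\para$ itself is non-increasing, all its copies of $d_{c_t}$ lie strictly before all its copies of $d_{c_{t+1}}$, so we may match the block for the $t$-th edge inside the $d_{c_t}$-block of $\para$ and still have room for subsequent edges. Each individual block fits because $\para$ encompasses that edge. This shows the whole concatenation embeds as a subsequence of $\para$, so $\para$ takes us to $v_{\tsize}$ and is acceptable.

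The only delicate point is the passage from ``each crossing edge is encompassed'' to ``their labels concatenate into a subsequence of $\para$''; this is exactly where the non-increasing structure of $\para$ combined with the fact that distinct edges of the constructed path use distinct temperatures (so their encompassment counts do not compete for the same positions in $\para$) makes the argument go through cleanly, and where the monotonicity axiom is used to splice an arbitrary crossing edge into a walk that starts precisely at our current vertex.
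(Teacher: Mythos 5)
Your proof is correct and follows essentially the same route as the paper's: necessity because a successful path must itself supply an encompassed crossing edge at every level, and sufficiency by extending reachability one vertex at a time through an encompassed crossing edge (the paper phrases this as a contradiction at the first unreachable vertex). You additionally spell out two details the paper leaves implicit --- invoking monotonicity to re-root a crossing edge at the current vertex, and checking that the strictly decreasing temperatures along the constructed path let the edge labels embed as a subsequence of the non-increasing $\para$ --- which is a welcome tightening rather than a genuinely different argument.
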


\begin{proof}
The necessity of the condition is trivial. If $\para$ does not encompass a crossing edge for a vertex $v_i$, then $\para$ cannot reach vertex $v_i$ in monotone stationary graph. The vice versa also holds. Suppose for the sake of contradiction that a sequence $\para$ encompasses a crossing edge for every vertex but it does not take us to node $v_{\tsize}$. In this case, there exists a vertex $v_i$, such that all vertices $v_{i-1}$ is reachable but none of the vertices $v_j$ is reachable for $j \geq i$ are reachable using $\para$. This means that $\para$ does not encompass an edge that crosses vertex $v_i$ otherwise we could have reached vertex $v_i$ using $\para$.
\end{proof}

We are now ready to state the main theorem of this section.

\begin{theorem}\label{theorem:log}
	Let $\ii_1$, $\ii_2, \ldots, \ii_n$ be $n$ monotone stationary graph with the guarantee that there exists a cooling schedule of length $m$ that is acceptable for all instance. One can find in polynomial time a cooling schedule for the \textsf{SA} algorithm whose average score is equal to that of the optimal cooling schedule of size $m$. Our algorithm is randomized and gives a solution with probability at least $1-e^{-100}$. Also, the average size of the cooling schedule is bounded by $O(m (\log \tsize + \log n))$.
\end{theorem}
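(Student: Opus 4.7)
The plan is to reduce the problem to a covering LP and apply randomized rounding in $\Theta(\log(n\tsize))$ independent rounds. By Observation~\ref{obs:important}, a cooling schedule is acceptable for instance $\ii_k$ if and only if, for every $i \in [\tsize]$, it encompasses at least one edge of $\ii_k$'s monotone stationary graph that crosses $v_i$. Since encompassing depends only on the total number of repetitions of each temperature, the decision variables reduce to counts $c_j \in \mathbb{Z}_{\geq 0}$ of copies of $d_j$, which we arrange in non-increasing order of temperature to obtain a valid schedule. For each triple $(k,i,j)$ with $j \geq i$, let $r^*_{k,i,j}$ denote the length of the shortest edge $v_a \to v_j$ in $\ii_k$ with $a < i$ (and $\infty$ if none exists). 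The task becomes: choose $c_1, \ldots, c_{\tsize}$ so that for every $(k,i) \in [n] \times [\tsize]$ there exists $j$ with $c_j \geq r^*_{k,i,j}$, while keeping $\sum_j c_j$ close to $m$.

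I would relax this to an LP using fractional indicators $y_{j,r} \in [0,1]$ for $j \in [\tsize]$ and $r \in \{1, \ldots, m\}$, interpreted as ``$c_j = r$''. The constraints are $\sum_r y_{j,r} \leq 1$ for each $j$, and for each $(k,i)$,
$$\sum_{j} \sum_{r \geq r^*_{k,i,j}} y_{j,r} \geq 1,$$
with objective $\min \sum_{j,r} r \cdot y_{j,r}$. Plugging in $y_{j,\, c_j^{\opt}} = 1$ where $c^{\opt}$ is the promised integral optimum of length $m$, the LP value is at most $m$. After solving the LP in polynomial time to obtain $y^*$, I perform $K = 100 + \ln(n\tsize)$ independent rounds of rounding: in round $t$, independently for each $j$, sample $c_j^{(t)} = r$ with probability $y^*_{j,r}$ and $c_j^{(t)} = 0$ with the remaining probability $1 - \sum_r y^*_{j,r}$; then set $c_j := \max_t c_j^{(t)}$. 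The expected total length is at most $K \cdot \sum_{j,r} r\, y^*_{j,r} \leq K m = O(m(\log n + \log \tsize))$, which matches the claimed bound.

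The key step is the coverage analysis. Fix a pair $(k,i)$ and a round $t$. Because the samples $c_j^{(t)}$ are independent across $j$, the probability that no coordinate succeeds is
$$\prod_{j} \Bigl(1 - \sum_{r \geq r^*_{k,i,j}} y^*_{j,r}\Bigr) \leq \exp\!\Bigl(-\sum_j \sum_{r \geq r^*_{k,i,j}} y^*_{j,r}\Bigr) \leq e^{-1}$$
by the LP covering constraint. Hence $(k,i)$ remains uncovered after all $K$ rounds with probability at most $e^{-K}$; a union bound over the $n\tsize$ pairs yields total failure probability at most $n\tsize \cdot e^{-K} = e^{-100}$. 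Conditioning on success, Observation~\ref{obs:important} guarantees the output schedule drives every instance to $v_{\tsize}$ and therefore matches $\opt$'s average score. The main obstacle, and the reason for choosing this particular LP formulation, is preserving independence across the coordinates $j$ so that the $1 - 1/e$ per-round coverage bound applies cleanly; a naive edge-variable LP would over-count the cost of a given temperature (since multiple chosen edges share the same endpoint $v_j$) and yield substantially weaker bounds.
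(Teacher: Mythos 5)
Your proposal is correct and follows essentially the same route as the paper: both reduce acceptability to a set-cover-style covering LP via Observation~\ref{obs:important} (exploiting that encompassing depends only on per-temperature repetition counts) and then apply randomized LP rounding to obtain an $O(m(\log \tsize + \log n))$ expected length with failure probability $e^{-100}$ via a union bound over the $n\tsize$ (instance, vertex) pairs. The only differences are cosmetic variants of standard set-cover rounding---the paper rounds once, keeping each exact-repetition-count element independently with probability $\min\{\alpha c_e,1\}$ for $\alpha = 100(\log\tsize+\log n)$, whereas you take the coordinatewise maximum over $K=100+\ln(n\tsize)$ independent samples from a per-temperature distribution; your LP formulation with the explicit $r \ge r^*_{k,i,j}$ covering constraints is, if anything, a slightly cleaner writeup of the same argument.
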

\begin{proof}
Observation \ref{obs:important} gives us a strong tool to analyze the solution. Let $\opt$ be the optimal cooling schedule of size $m$ which is acceptable for all instances. Due to Observation \ref{obs:important}, $\opt$ encompasses at least one crossing edge for all vertices of all monotone stationary graphs. To formalize this, define a set 
\begin{equation*}
\begin{split}
S =& \{\langle d_1 \rangle, \langle d_1, d_1 \rangle, \ldots,  \overbrace{\langle d_1, d_1, \ldots, d_1\rangle}^{m}\} \cup\\
& \{\langle d_2 \rangle, \langle d_2, d_2 \rangle, \ldots,  \overbrace{\langle d_2, d_2, \ldots, d_2\rangle}^{m}\} \cup\\
& \hspace{3cm}\vdots\\
& \{\langle d_{\tsize} \rangle, \langle d_{\tsize}, d_{\tsize} \rangle, \ldots,  \overbrace{\langle d_{\tsize}, d_{\tsize}, \ldots, d_{\tsize}\rangle}^{m}\}
\end{split}
\end{equation*}
to be the set of all possible repetitions for all temperatures and for each element $e \in S$, define $\length(e)$ to be the size of $e$. In addition to this, for each element $e \in S$, define $c_e = \{0,1\}$ to be equal to $1$ if and only if $\opt$ contains $\length(e)$ repetitions of the character corresponding to $e$. 

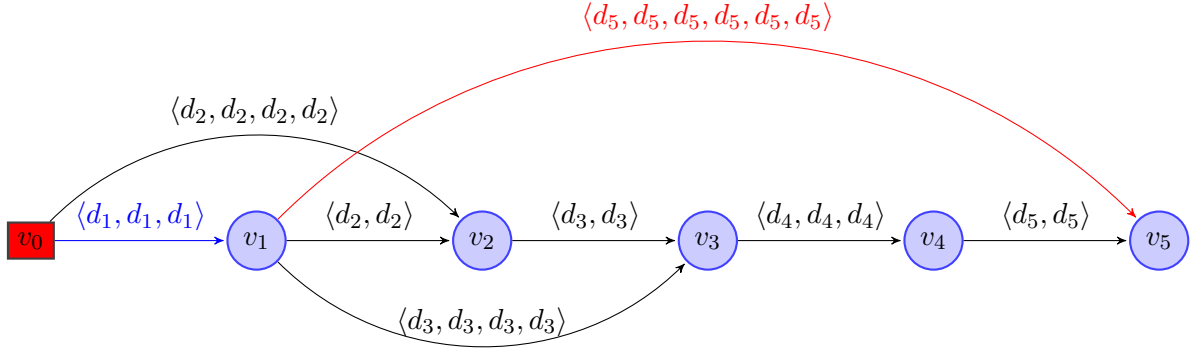
\begin{figure}\centering
	\begin{tikzpicture}[node distance=1.3cm,>=stealth',bend angle=45,auto]
	
	\tikzstyle{place}=[circle,thick,draw=blue!75,fill=blue!20,minimum size=6mm]
	\tikzstyle{red place}=[place,draw=red!75,fill=red!20]
	\tikzstyle{transition}=[rectangle,thick,draw=black!75,
	fill=red,minimum size=4mm]
	
	\tikzstyle{every label}=[red]
	
	\begin{scope}
	  [align=center,node distance=3cm]  
	
	\node [transition] (w1) {$v_0$};
	\node [place] (w2) [right of=w1] {$v_1$} edge [pre,color=blue] node[swap] {$\langle d_1, d_1, d_1\rangle$} (w1);
	\node [place] (w3) [right of=w2] {$v_2$} edge [pre] node[swap] {$\langle d_2, d_2\rangle$} (w2) edge [pre,bend right] node[swap] {$\langle d_2, d_2, d_2, d_2\rangle$} (w1);
	\node [place] (w4) [right of=w3] {$v_3$}  edge [pre, bend left] node[swap] {$\langle d_3, d_3, d_3, d_3\rangle$} (w2) edge [pre] node[swap] {$\langle d_3, d_3\rangle$} (w3);
	\node [place] (w5) [right of=w4] {$v_4$} edge [pre] node[swap] {$\langle d_4, d_4, d_4\rangle$} (w4);
	\node [place] (w6) [right of=w5] {$v_5$} edge [pre] node[swap] {$\langle d_5, d_5\rangle$} (w5) {} edge [pre, bend right,color=red] node[swap] {$\langle d_5, d_5, d_5,d_5, d_5, d_5\rangle$} (w2);
	\end{scope}

	\end{tikzpicture}
	\caption{For $m = 9$ the optimal sequence of temperatures is $\langle \color{blue}d_1, d_1, d_1, \color{red}d_5, d_5, d_5, d_5, d_5, d_5\color{black}\rangle$. All the edges skipped in the figured can be implied from the edges shown by monotonicity.}\label{fig:example2}
\end{figure}

To clarify the definitions, consider an example with only a single instance shown in Figure \ref{fig:example2}. In this case, $m = 9$ and the optimal sequence of temperatures is $\opt = \langle \color{blue}d_1, d_1, d_1, \color{red}d_5, d_5, d_5, d_5, d_5, d_5\color{black}\rangle$. In this case $S$ contains $45 = m|T|$ elements out of which only $c_{\langle d_1, d_1, d_1\rangle}$ and $c_{\langle d_5, d_5, d_5, d_5, d_5, d_5\rangle}$ are equal to $1$. Moreover, $\length(\langle d_1, d_1, d_1\rangle) = 3$ and $\length(\langle d_5, d_5, d_5, d_5, d_5, d_5\rangle) = 6$ hold.

For a vertex $v_i$ in graph $\ii_k$, define $\cross(\ii_k, v_i)$ to be the set of elements in $S$ that correspond to the crossing edges of $v_i$. This way, the optimal solution of the problem can be formulated via the following integer feasibility program:

\begin{equation}\label{eq:ip}
\begin{array}{ll@{}ll}
	\text{constraints: }& \sum \length(e) c_e &\leq m   & \\
							& \sum_{e \in \cross(v_i,\ii_k)} c_e\hspace{0.5cm}\color{white}.\color{black} &\geq 1   & \forall 1 \leq k \leq n\text{ and }1 \leq i \leq \tsize\\
							& c_e &\in \{0,1\}& \forall e \in S\\
\end{array}
\end{equation}
where the variables of the program are $c_e$'s. Indeed by relaxing the conditions of IP \ref{eq:ip} we can obtain LP \ref{eq:lp}.

\begin{equation}\label{eq:lp}
\begin{array}{ll@{}ll}
\text{constraints: }& \sum \length(e) c_e &\leq m   & \\
& \sum_{e \in \cross(v_i,\ii_k)} c_e\hspace{0.5cm}\color{white}.\color{black} &\geq 1   & \forall 1 \leq k \leq n\text{ and }1 \leq i \leq \tsize\\
& 0 \leq c_e \leq 1& & \forall e \in S\\
\end{array}
\end{equation}

Now we solve LP \ref{eq:lp} and construct a solution as follows: for each element $e \in S$, we add $e$ to our solution independently with probability $ \min\{\alpha c_e, 1\} $, where $\alpha = 100 (\log \tsize + \log n)$. 

First, it's easy to see the expected length of our solution is bounded by $\alpha m$:

\[\mathbb{E} [\text{length}] 
= \sum_{e \in S} \Pr[e \text { is picked}] \ell(e)
= \sum_{e \in S} \min(\alpha c_e, 1) \ell(e) 
\leq \sum_{e \in S} \alpha c_e \ell(e)
\leq \alpha m\]

where the last step is due to the constraint in LP. 

Next, we will show that with high probability, the resulting sequence is acceptable for each instance $\ii_k$. Consider the case when the resulting sequence is not acceptable for $\ii_k$. By Observation \ref{obs:important}, there exists a $v_i$ such that none of the edges in $\cross(v_i,\ii_k)$ were encompassed in our solution.
By union bound, the probability of this bad event can be upper bounded by

\begin{equation} \label{eq:nod_satisfied}
	\Pr [\ii_k \text{ is not satisfied}] 
	\leq \sum_{i=1}^{\tsize} \Pr [\forall e \in \cross(v_i,\ii_k) \text{ wasn't picked}]
\end{equation}
Now we focus on the probability inside the summation. Since each element was selected independently, this probability equals to

\begin{align*}
	\Pr [\forall e \in \cross(v_i,\ii_k) \text{ wasn't picked}] =& \prod_{e \in \cross(v_i,\ii_k)} \Pr [ e\text{ wasn't picked}]\\ 
	=& \prod_{e \in \cross(v_i,\ii_k)} \max(1-\alpha c_e,0).
\end{align*}
If $\alpha c_e \geq 1$ for some $e\in \cross(v_i,\ii_k)$, then the probability is $0$. Otherwise, since $1-x \leq e^{-x}$, we have
\[
\prod_{e \in \cross(v_i,\ii_k)} \max(1-\alpha c_e,0) \leq e^{- \sum_{e \in \cross(v_i,\ii_k)} \alpha c_e} \leq e^{-\alpha}
\]
Therefore, by \ref{eq:nod_satisfied}
\[\Pr [\ii_k \text{ is not satisfied}] 
\leq \tsize  e^{-\alpha} \]
Using union bound again, we have
\begin{align*} \Pr [\text{any instance }\ii_k\text{  is not satisfied}] \leq &\sum_{1 \leq k \leq n} \Pr [\ii_k \text{ is not satisfied}] \\
\leq &\tsize n e^{-\alpha} \\
\leq &e^{-100}
\end{align*}

 Hence, we proved that with probability $1-e^{-100}$,  the resulting sequence is acceptable for each instance $\ii_k$.
\end{proof}

\newpage
\bibliographystyle{abbrv}	
\bibliography{src/learning}
\newpage
\appendix
\newpage
\section{Omitted Proofs of Section \ref{sec:lower}}
\begin{proof}[of Observation \ref{obs:main}]
	For  Observation \ref{obs:main}.(i), define $\tau_r=\{\arg \min_{i\geq 0}: x_i = r\}$. We would like to prove that, 
	\begin{equation}
		\Pr[\tau_{\sqrt{k/c}} \leq k] \geq 0.95
    \end{equation} 
	By the Markov property, $\tau_r$ is the sum of $r$ independent copies of $\tau_1$. Let the probability generating function of $\tau_r$ be $F_r(z):=\E[z^{\tau_r}] = \sum_{j=0}^{\infty} \Pr(\tau_r = j) z^j$, then we have $F_r(z) = F_1(z)^r$. Furthermore, we have the following recurrence about $F_1(z):$
	
	\begin{equation}
		F_1(z) = \frac{z}{3} \left( 1 + F_1(z) + F_1(z)^2 \right)
	\end{equation}
	
	Hence, 
	\begin{equation}
		F_1(z) = \frac{(3-z) - \sqrt{3(z+3)(1-z)}}{2z}
	\end{equation}
	
	One important property about $F_1(z)$ is that for all $z \in [0,1]$, 
	\begin{equation}
		F_1(z) \leq 1 - \sqrt{1-z}
	\end{equation}
	By Markov's inequality,
	\begin{align}
		\Pr[\tau_r \geq k] &\leq \inf_z \frac{\E[z^{\tau_r} ] }{z^k}\\
		&= \inf_z \frac{(1-\sqrt{1-z})^r}{z^k}\\
		(z:= 1- \frac{1}{k}) &= (1-\frac{1}{\sqrt{k}})^r (1-\frac{1}{k})^{-k} \\
		(r:=c\sqrt{k})&\leq \exp(-c-1)
	\end{align}
	Hence we have completed the proof. 
	
	For  Observation \ref{obs:main}.(ii) and (iii), we need a classical result in martingale concentration inequalities, the Freedman's inequality for scalar martingales~\cite[Thm.~(1.6)]{freedman1975tail}, see also \cite[Thm.~(1.1)]{tropp2011freedman}.
	\begin{theorem}[Freedman] \label{thm:freedman}
		Consider a real-valued martingale $\{ Y_k : k = 0, 1, 2, \dots \}$ with difference sequence $\{ X_k : k = 1, 2, 3, \dots \}$.  Assume that the difference sequence is uniformly bounded:
		$$
		|X_k| \leq R
		\quad\text{almost surely}
		\quad\text{for $k = 1, 2, 3, \dots$.}
		$$
		Define the predictable quadratic variation process of the martingale:
		$$
		W_k := \sum\nolimits_{j=1}^k \E_{j-1} \big(X_j^2\big)
		\quad\text{for $k = 1, 2, 3, \dots$.}
		$$
		Then, for all $t \geq 0$ and $\sigma^2 > 0$,
		$$
		\Pr[\exists k \geq 0 : Y_k \geq t \ \text{ and }\ 
			W_k \leq \sigma^2 ]
		\leq \exp \left\{ - \frac{ t^2/2 }{\sigma^2 + Rt/3} \right\}.
		$$
	\end{theorem}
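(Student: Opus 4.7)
\medskip
\noindent\textbf{Proof plan for Theorem \ref{thm:freedman}.}

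My plan is to follow the classical Bernstein/Bennett-style exponential martingale argument, adapted to the martingale setting via an optional stopping trick. The core object will be a one-parameter family of nonnegative supermartingales, to which Markov's inequality is applied at a carefully chosen stopping time, followed by an optimization over the parameter.

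\textbf{Step 1: Exponential supermartingale.} For a parameter $\theta\in[0,3/R)$, I define
\[
M_k^{(\theta)} \;:=\; \exp\!\Bigl(\theta Y_k - g(\theta)\,W_k\Bigr), \qquad g(\theta)\;:=\;\frac{e^{\theta R}-1-\theta R}{R^{2}}.
\]
Set $\phi(u):=(e^{u}-1-u)/u^{2}$ (with $\phi(0)=1/2$), which is nondecreasing on $\mathbb{R}$, so $g(\theta)=\theta^{2}\phi(\theta R)$. Using the elementary inequality $e^{\theta x}\le 1+\theta x+\theta^{2}x^{2}\phi(\theta R)$ valid for all $x\le R$ and $\theta\ge 0$ (proved by monotonicity of $\phi$), together with $\E_{k-1}[X_k]=0$, one gets
\[
\E_{k-1}\!\bigl[e^{\theta X_k}\bigr] \;\le\; 1 + \theta^{2}\phi(\theta R)\,\E_{k-1}[X_k^{2}] \;\le\; \exp\!\bigl(g(\theta)\,\E_{k-1}[X_k^{2}]\bigr).
\]
Multiplying both sides by $\exp(\theta Y_{k-1}-g(\theta)W_{k-1})$ and noting that $W_k-W_{k-1}=\E_{k-1}[X_k^{2}]$, this shows $\E_{k-1}[M_k^{(\theta)}]\le M_{k-1}^{(\theta)}$, so $\{M_k^{(\theta)}\}$ is a nonnegative supermartingale with $\E[M_0^{(\theta)}]=1$.

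\textbf{Step 2: Stopping time and Markov bound.} Define
\[
\tau \;:=\; \inf\bigl\{k\ge 0 : Y_k\ge t\bigr\} \;\wedge\; \inf\bigl\{k\ge 0 : W_{k+1}>\sigma^{2}\bigr\}.
\]
Since $W_k$ is predictable, both infima are stopping times, hence so is $\tau$. Crucially, on the event $B:=\{\exists k\ge 0 : Y_k\ge t \text{ and } W_k\le \sigma^{2}\}$ we have $\tau<\infty$, $Y_\tau\ge t$, and $W_\tau\le\sigma^{2}$, so
\[
M_\tau^{(\theta)}\;\ge\;\exp\!\bigl(\theta t - g(\theta)\sigma^{2}\bigr)\quad\text{on } B.
\]
By Doob's optional stopping applied to the bounded stopping time $\tau\wedge n$ we get $\E[M_{\tau\wedge n}^{(\theta)}]\le 1$; Fatou's lemma then yields $\E[M_\tau^{(\theta)}\mathbf{1}_{\tau<\infty}]\le 1$. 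Combining,
\[
\Pr[B]\;\le\;\exp\!\bigl(g(\theta)\sigma^{2}-\theta t\bigr).
\]

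\textbf{Step 3: Optimization.} To get the Bernstein form, I use the standard analytic bound $\phi(u)\le \bigl(2(1-u/3)\bigr)^{-1}$ for $u\in[0,3)$ (easily verified by series comparison of $e^{u}-1-u$ with $u^{2}/(2-2u/3)$). This gives
\[
g(\theta)\sigma^{2}-\theta t \;\le\; \frac{\theta^{2}\sigma^{2}}{2(1-\theta R/3)}-\theta t.
\]
Now I choose $\theta^{\star}:=t/(\sigma^{2}+Rt/3)$, which satisfies $\theta^{\star}R/3<1$, so that $1-\theta^{\star}R/3=\sigma^{2}/(\sigma^{2}+Rt/3)$ and $\theta^{\star}(\sigma^{2}+Rt/3)=t$. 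Substituting yields
\[
\frac{(\theta^{\star})^{2}\sigma^{2}}{2(1-\theta^{\star}R/3)}-\theta^{\star}t \;=\; \frac{\theta^{\star}t}{2}-\theta^{\star}t \;=\; -\frac{t^{2}/2}{\sigma^{2}+Rt/3},
\]
producing the claimed tail bound.

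\textbf{Main obstacle.} The conceptually subtle step is Step 2: we want a ``hit-the-level-before-the-variance-grows'' stopping time, and the naive choice $\inf\{k:Y_k\ge t,W_k\le\sigma^{2}\}$ is not in general a stopping time because $W_k\le\sigma^{2}$ can be violated and then restored in later coordinates. Using $W_{k+1}>\sigma^{2}$ (which is $\mathcal{F}_k$-measurable by predictability of $W$) sidesteps this, and the optional-stopping-plus-Fatou step is where care is needed since $\tau$ may be infinite and $Y_k,W_k$ may be unbounded. Once the supermartingale and the stopping time are in place, the remaining work is the deterministic convex optimization in Step 3, which is routine.
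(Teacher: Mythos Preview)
Your proof is correct and is essentially the standard exponential-supermartingale argument for Freedman's inequality (as in Freedman~1975 and Tropp~2011). However, the paper does \emph{not} prove this theorem at all: it is quoted as a classical result from the literature and cited to \cite{freedman1975tail} and \cite{tropp2011freedman}, then applied as a black box in the proof of Observation~\ref{obs:main}. So there is no ``paper's own proof'' to compare against; you have supplied a self-contained proof where the paper simply invokes the reference.
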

	
	When the difference sequence $\{ X_k \}$ consists of independent random variables, the predictable quadratic variation is no longer random.  In this case, Freedman's inequality reduces to the usual Bernstein inequality.
	
	To prove \ref{obs:main}, we let $y_k = x_k + k\gamma$ where $\gamma = p_b - p_f$, then $y_k$ is a martingale since the difference sequence $\Delta_k = y_k - y_{k-1}$ has expectation zero. Furthermore, the difference sequence $\Delta_k$ is uniformly bounded with $|\Delta_k| \leq R:= 1 + |\gamma|$, and $W_k = \sum_{j=1}^{k} \E_{j-1}(\Delta_k^2) \leq k R^2$. By the Freedman's inequality, 
	
	\begin{equation}\label{eq:freedman}
\Pr[\max_{k} y_k \geq t]
\leq \exp \left\{ - \frac{ t^2/2 }{kR^2 + Rt/3} \right\}.
	\end{equation}
		
	 Since we have $R = 1+|\gamma|$ and $y_k = x_k + k \gamma$, \eqref{eq:freedman} is equivalent to 
	\begin{equation}
\Pr[\max_{k} x_k \geq t- k\gamma] \leq \exp \left\{ - \frac{ t^2/2 }{kR^2 + tR/3} \right\}.
	\end{equation}
	Let $t = 3R \sqrt{k\log k}  $, we have 
	$$ \frac{ t^2/2 }{kR^2 + Rt/3} = \frac{ t^2/(2R^2) }{k+t/(3R)} \geq \frac{ t^2/(2R^2) }{k+k}  = \frac{t^2}{4kR^2}  \geq 2 \log k$$
	Rearranging terms gives 
	 \begin{equation}
	 \Pr[\max_{k} x_k \geq 3(1+|\gamma|)\sqrt{k\log k} - k\gamma] \leq \frac{1}{k^2}.
	 \end{equation}
	 When $p_f = p_b = p_s = \frac{1}{3}$, we have $\gamma = 0$ and the above inequality is equivalent to 
	 \begin{equation}
	 \Pr[\max_{k} x_k \geq 3\sqrt{k\log k}] \leq \frac{1}{k^2}.
	 \end{equation}
	 This proves (ii). For (iii), we note that $\gamma = p_b - p_f  \geq\frac{2c \log k'}{\sqrt{k'}}$ and we only needs to prove that 
	 \begin{equation}
	 3(1+|\gamma|)\sqrt{k\log k} - k\gamma \leq \frac{\sqrt{k'}}{2}
	 \end{equation}
	 Note that 
	 \begin{equation}
	 3(1+|\gamma|)\sqrt{k\log k} - k\gamma \leq 3\sqrt{k\log k} - \frac{k}{2} \gamma.
	 \end{equation}
	 When $\gamma \geq\frac{6\sqrt{\log k}}{\sqrt{k}}$, RHS is negative so the inequality holds trivially. Otherwise, we have $\gamma = 2c\frac{\log k'}{\sqrt{k'}} < \frac{6\sqrt{\log k}}{\sqrt{k}}$, hence $k' \geq O(1) $ $\frac{k}{\log^2 k} > \sqrt{k}$. By AM-GM inequality, 
	 \begin{equation}
		3\sqrt{k\log k}  \leq \frac{k}{2} \gamma + \frac{9 \log k}{\gamma}
	 \end{equation}
	 Therefore, we have 
	 \begin{equation}
	 	3(1+|\gamma|)\sqrt{k\log k} - k\gamma \leq \frac{9 \log k}{\gamma} = \frac{9 \log k}{2c \log k'} \sqrt{k'} \leq \frac{9}{c} \sqrt{k'}
	 \end{equation}
	 Hence, let $c=9$ completes the proof.
\end{proof}
\newpage
\section{Omitted Proofs of Section \ref{sec:improved-upper}}

\begin{proof}[of Observation \ref{observation:ajib1}]
    The proof is given below:
    \begin{align}
	    p\function(x)+(1-p)\function(y) = & p \left[ x - x^2 \right] + (1-p) \left[y - y^2\right] \nonumber \\
	    = & (px + (1-p)y) - (px + (1-p)y)^2 - \left[(p-p^2) (x^2 + y^2 - 2xy) \right] \nonumber \\
	    = & (px + (1-p)y) - (px + (1-p)y)^2 - \left[(p-p^2) (x-y)^2 \right] \nonumber \\
	    \leq & (px + (1-p)y) - (px + (1-p)y)^2 - \left[\min\{p,1-p\} (x-y)^2 \right]\label{equation:kazai} \\
	    = & \function(px + (1-p)y) - \left[\min\{p,1-p\} (x-y)^2 \right]. \nonumber
    \end{align}
Inequality \eqref{equation:kazai} follows from the fact that both $p$ and $1-p$ are in range $[0,1]$ and thus $p(1-p) \leq \min\{p,1-p\}$.
\end{proof}

\begin{proof}[of Observation \ref{obs:sadeh}]
	\begin{align*}
	|f(x) - f(y)| = & |[x - x^2] - [y - y^2]|\\
	= & |(x-y) - (x^2 - y^2)|\\
	= & |(x-y) - (x-y)(x+y)|\\
	= & |(x-y)(1-(x+y))|\\
	= & |x-y| |(1-(x+y))|\\
	\leq & |x-y|
	\end{align*}
where the last inequality holds since $0 \leq x+y \leq 2$ and therefore $-1 \leq 1 - (x+y) \leq 1$ which implies $0 \leq |1 - (x+y)| \leq 1$.
\end{proof}

\begin{proof}[of Observation \ref{obs:tavan}]
	To prove the observation, we take the first derivative of $p - p^{1+x}$ which is equal to
	\begin{align*}
	\frac{d}{dp}\left[p - p^{1+x}\right] = 1-(1+x)p^{x}
	\end{align*}
	which means that the function is maximized (or minimized) at $p_0 = (1+x)^{-1/x}$. It is easy to see that since $p - p^{1+x}$ is non-negative in range $[0,1]$ and is equal to $0$ at both $p=0$ and $p=1$ then the expression should be maximized at $p_0$. 
	Thus, the maximum value for $p - p^{1+x}$ is bounded by
	\begin{align*}
	p_0 - p_0^{1+x} &= p_0 (1-p_0^{x} ) \\
	&\leq 1-p_0^{x} \\
	&= 1-((1+x)^{-1/x})^{x}\\
	&= 1-(1+x)^{-1}\\
	&= 1-1/(1+x)\\
	&= x/(1+x)\\
	&\leq x.\\
	\end{align*}
\end{proof}

\begin{proof}[of Observation \ref{obs:tavan2}]
	We first show the proof for the case of $p \leq 1/2$. We start by the famous inequality $1 + y \leq e^y$~\cite{mitzenmacher2017probability} which holds for any $y \in \mathbb{R}$. Therefore, we have $1-e^y \leq -y$. By setting $y = -x \ln 1/p$ we obtain 
	$$1-e^{-x \ln 1/p} \leq x \ln 1/p$$
	Notice that $e^{-x \ln 1/p}$ can be written as $(e^{-\ln 1/p})^x = (e^{\ln p})^x = p^x$. Thus, we have
	$$1-p^x \leq x \ln 1/p$$
	Multiplying both sides by $p$ gives us 
	$$p-p^{1+x} \leq p (\ln 1/p) x$$ which proves the observation for $p \leq 1/2$.
	Next, we show the statement for $p \geq 1/2$. In this case, we prove $p - p^{1+x} \leq (1-p)x$ which implies the observation. Our goal here is to prove $p - p^{1+x} - (1-p)x \leq 0$ for any $p \in [0.5,1]$ and any $0 \leq x \leq 1$. Thus, we take the derivative of $p$ to bound its maximum value.
	$$\frac{d}{dp} \left[ p - p^{1+x} - (1-p) x \right] = 1  - (1+x)p^x + x$$
	which is equal to $0$ only at $p_0 = 1$. At $p_0$ we have $p-p^{1+x} - (1-p)x = 0$ which is not greater than $0$. Also, since for $p = 0$ the expression $p-p^{1+x} - (1-p)x$ is equal to $-x$ which is negative, it means that the function is maximized at $p = 1$. Thus, $p-p^{1+x} - (1-p)x$ is always upper bounded by $0$ which means $p-p^{1+x} \leq (1-p)x$.
\end{proof}

\end{document}